%% file: main.tex
\documentclass[lettersize,journal]{IEEEtran}
\usepackage{amsmath,amsfonts}
\usepackage{algorithmic}
\usepackage{array}
\usepackage[caption=false,font=normalsize,labelfont=sf,textfont=sf]{subfig}
\usepackage{textcomp}
\usepackage{stfloats}
\usepackage{url}
\usepackage{verbatim}
\usepackage{graphicx}

\def\BibTeX{{\rm B\kern-.05em{\sc i\kern-.025em b}\kern-.08em
    T\kern-.1667em\lower.7ex\hbox{E}\kern-.125emX}}
\usepackage{balance}

\usepackage{xcolor}
\definecolor{CColor}{rgb}{0.01,0.31,0.59}
\definecolor{GGray}{rgb}{0.80,0.90,1}
\definecolor{Shady}{rgb}{0.9,0.9,0.9}
\definecolor{kaistblue}{RGB}{20,135,200}
\definecolor{kaistdarkblue}{RGB}{0,65,145}
\definecolor{urbanablue}{RGB}{19,41,75}
\definecolor{urbanaorange}{RGB}{232,74,39}
\definecolor{drp}{rgb}{0.53,0.15,0.34}
\usepackage[plainpages=false,pdfpagelabels,colorlinks=true,linkcolor=CColor,citecolor=CColor]{hyperref}
 
\usepackage{graphicx}
\usepackage{amsmath}
\usepackage{amssymb}
\usepackage{mathtools}
\usepackage{amsthm}
\usepackage[capitalize,noabbrev]{cleveref}
\usepackage[textsize=tiny]{todonotes}
\usepackage{booktabs}
\usepackage{subcaption}

\theoremstyle{plain}

\newtheorem{theorem}{Theorem}

\newtheorem{lemma}[theorem]{Lemma}
\newtheorem{corollary}[theorem]{Corollary}
\theoremstyle{definition}
\newtheorem{definition}[theorem]{Definition}

\input{macros}

\input{fonts}

\input{packages}

\begin{document}

\title{Expressive Power of Floating-Point Neural Networks with\\ Arbitrary Reduction Orders and Inexact Activation Implementations}

\author{
Yeachan Park,
Geonho Hwang,
Wonyeol Lee,
and Sejun Park
\thanks{
Yeachan Park is with the Department of Mathematics and Statistics, Sejong University, Seoul 05006, Republic of Korea
(e-mail: ychpark@sejong.ac.kr).
}
\thanks{
Geonho Hwang is with the Department of Mathematical Sciences, GIST, Gwangju Institute of Science and Technology 61005, Republic of Korea
(e-mail: hgh2134@gist.ac.kr).
}
\thanks{
Wonyeol Lee is with the Department of Computer Science, POSTECH, Pohang 37673, (email: wonyeol.lee.cs@gmail.com).
}
\thanks{
Sejun Park is with the Department of Artificial Intelligence, Korea University, Seoul 02841, Republic of Korea
(e-mail: sejun.park000@gmail.com).
}
\thanks{Corresponding author: Sejun Park.}
}

\maketitle

\begin{abstract}
Most existing expressivity theories for neural networks assume exact real arithmetic, whereas practical neural networks are executed under finite-precision floating-point arithmetic with implementation-dependent execution semantics.
Recent works have begun studying the expressive power of floating-point neural networks, but existing results are limited to highly restricted activation functions and idealized assumptions such as fixed left-to-right reduction orders and correctly rounded activation implementations.
In this work, we study the expressive power of floating-point neural networks under generalized floating-point execution semantics, including arbitrary reduction orders and inexact activation implementations with bounded ulp errors.
We investigate when floating-point neural networks can represent arbitrary functions between floating-point domains exactly.
To this end, we introduce a general distinguishability framework and show that the ability to distinguish every pair of distinct inputs in the first layer is necessary for universal representability.
This characterization yields broad classes of activation implementations that are not universal representators, extending previous isolated counterexamples such as the correctly rounded cosine activation.
We further prove that a suitable form of distinguishability is also sufficient for universal representability under mild conditions on the activation implementation.
Using this framework, we establish universal representability results for a broad class of practical activation functions, including implementations of $\Sigmoid$, $\tanh$, $\relu$, $\elu$, $\mathrm{SeLU}$, $\GeLU$, $\Swish$, $\Mish$, and $\sin$, under significantly more realistic floating-point execution models than previously known.
\end{abstract}

\begin{IEEEkeywords}
Universal Approximation, Floating-Point Arithmetic, Arbitrary Reduction Order
\end{IEEEkeywords}

\input{1-introduction}
\input{2-preliminary}

\input{3-main_results}

\input{4-proofs}

\bibliography{reference}
\bibliographystyle{plain}

\end{document}

%% file: macros.tex
\allowdisplaybreaks

\newcommand*{\commentout}[1]{}

\newlength{\parskiptrue}
\definecolor{lred}{rgb}{1.0, 0.5, 0.5}
\definecolor{lorange}{rgb}{1.00, 0.90, 0.20}
\definecolor{lgreen}{rgb}{0.35, 0.95, 0.35}
\definecolor{lime}{rgb}{0.9, 1.0, 0.6}
\definecolor{lblue}{rgb}{1.0, 0.85, 0.75}

\makeatletter

\newcommand*\wthelper[2]{%
        \hbox{\dimen@\accentfontxheight#1%
                \accentfontxheight#11.1\dimen@
                $\m@th#1\widetilde{#2}$%
                \accentfontxheight#1\dimen@
        }%
}
\newcommand*\accentfontxheight[1]{%
        \fontdimen5\ifx#1\displaystyle
                \textfont
        \else\ifx#1\textstyle
                \textfont
        \else\ifx#1\scriptstyle
                \scriptfont
        \else
                \scriptscriptfont
        \fi\fi\fi3
}

\newcommand*\whhelper[2]{%
        \hbox{\dimen@\accentfontxheight#1%
                \accentfontxheight#11.2\dimen@
                $\m@th#1\widehat{#2}$%
                \accentfontxheight#1\dimen@
        }%
}
\makeatother

\makeatletter
\newcommand{\oset}[3][0ex]{%
  \mathrel{\mathop{#3}\limits^{
    \vbox to#1{\kern-3\ex@
    \hbox{$\scriptstyle#2$}\vss}}}}
\makeatother

\newcommand*{\defeq}{\coloneq}

\newcommand*{\relu}{\mathrm{ReLU}}

\newcommand*{\SoftPlus}{\mathrm{SoftPlus}}
\newcommand*{\Mish}{\mathrm{Mish}}
\newcommand*{\GeLU}{\mathrm{GELU}}

\newcommand*{\Sigmoid}{\mathrm{Sigmoid}}
\newcommand*{\ELU}{\mathrm{ELU}}
\newcommand*{\Swish}{\mathrm{Swish}}
\newcommand*{\elu}{\mathrm{ELU}}

\DeclareMathOperator*{\argmin}{arg\,min}
\newcommand*{\fpq}{\mathbb{F}}
\newcommand*{\efpq}{{\overline{\mathbb{F}}}}
\newcommand*{\fmin}{\omega}
\newcommand*{\fmax}{\Omega}

\newcommand*{\round}[1]{ \left\lceil{#1}\right\rfloor }

\newcommand*{\ulp}{    \mathrm{ulp}}

\makeatletter

\newcommand*{\tranLR}{\xRightarrow{(\sigma, \overline{\mcS})}{}}

\newcommand{\xMapsto}[2][]{\ext@arrow 0599{\Mapstofill@}{#1}{#2}}
\def\Mapstofill@{\arrowfill@{\Mapstochar\Relbar}\Relbar\Rightarrow}
\makeatother
\makeatletter
\def\moverlay{\mathpalette\mov@rlay}
\def\mov@rlay#1#2{\leavevmode\vtop{%
   \baselineskip\z@skip \lineskiplimit-\maxdimen
   \ialign{\hfil$\m@th#1##$\hfil\cr#2\crcr}}}
\newcommand{\charfusion}[3][\mathord]{
    #1{\ifx#1\mathop\vphantom{#2}\fi
        \mathpalette\mov@rlay{#2\cr#3}
      }
    \ifx#1\mathop\expandafter\displaylimits\fi}
\makeatother

\newcommand*{\resig}{\rho}
\newcommand{\lrp}[1]{\left({#1}\right)}

\newcommand{\emin}{\mathfrak{e}_{\min}}
\newcommand{\emax}{\mathfrak {e}_{\max}}
\newcommand{\hatsig}{\hat{\sigma}}

\newcommand*{\depth}{\mathrm{depth}}

\newcommand*{\mant}[1]{\mathfrak {m}_{#1}}

\newcommand*{\expo}[1]{\mathfrak {e}_{#1}}
\newcommand*{\mantd}[2]{\mathfrak {m}_{#1,#2}}

\newcommand*{\nmant}[1]{\widetilde{\mathfrak {m}}_{#1}}
\newcommand*{\nexpo}[1]{\widetilde{\mathfrak {e}}_{#1}}
\newcommand{\nan}{\text{NaN}}

\newcommand*{\domain}{\mathcal{X}}

\newcommand{\mbit}{p}
\newcommand{\ebit}{q}

\newcommand*{\indcc}[1]{\mathbbm{1}_{#1}}
\newcommand*{\indc}[2]{\mathbbm{1}_{#1}\left({#2}\right)}

\DeclarePairedDelimiter\ceil{\lceil}{\rceil}

%% file: fonts.tex
\newcommand{\mathboldcommand}[1]{\mathbb{#1}}

\newcommand{\bbN}{\mathboldcommand{N}}

\newcommand{\bbR}{\mathboldcommand{R}}
\newcommand{\bbS}{\mathboldcommand{S}}

\newcommand{\bbZ}{\mathboldcommand{Z}}

\newcommand{\bfx}{\mathbf{x}}

\newcommand{\bfz}{\mathbf{z}}

\usepackage{euscript}
\newcommand{\mathcalcommand}[1]{\mathcal{#1}}

\newcommand{\mcI}{\mathcalcommand{I}}

\newcommand{\mcK}{\mathcalcommand{K}}

\newcommand{\mcM}{\mathcalcommand{M}}
\newcommand{\mcN}{\mathcalcommand{N}}
\newcommand{\mcO}{\mathcalcommand{O}}

\newcommand{\mcR}{\mathcalcommand{R}}
\newcommand{\mcS}{\mathcalcommand{S}}

\newcommand{\mcZ}{\mathcalcommand{Z}}

\DeclareMathAlphabet{\mathpzc}{T1}{pzc}{m}{it}

%% file: 1-introduction.tex
\section{Introduction}
\label{sec:intro}

Deep neural networks have achieved remarkable success across a wide range of scientific and engineering applications \cite{lecun2015deep}.
A central theoretical foundation behind this success is the universal approximation theorem, which states that neural networks can approximate broad classes of target functions.
Classical results show that shallow neural networks with non-polynomial activation functions can approximate arbitrary continuous functions on compact domains \cite{cybenko1989approximation,Hornik89,pinkus99,leshno1993multilayer}.
These results have since been extended to width-bounded architectures and modern neural network models \cite{Lu17,kidger2020universal,park21,zhou2020universality,tabuada2021universal,Yun2020Are,ramanujan2020s,yuan2023comprehensive}.

Most existing expressivity theories, however, analyze neural networks under exact real arithmetic.
In practice, neural networks are executed using finite-precision machine arithmetic, where parameters belong only to a finite set of machine-representable numbers and arithmetic operations incur rounding errors.
This discrepancy between theoretical assumptions and practical computation becomes particularly significant in low-precision settings such as float16, bfloat16, and float8, where both the number of representable values and the precision of arithmetic operations are severely limited.

Motivated by this gap between theory and practice, several recent works have begun studying neural networks under finite-precision computation.
For example, \cite{ding2018on} and \cite{gonon2023approximation} studied neural networks with quantized parameters under exact arithmetic.
More recent works investigated neural networks under inexact machine arithmetic.
\cite{hwang2024expressive} studied neural networks under fixed-point arithmetic and established universal approximation results for several practical activation functions.
Likewise, \cite{park2024expressive} analyzed floating-point neural networks under floating-point arithmetic and showed that networks using $\relu$ and $\mathrm{Step}$ activations can represent arbitrary floating-point functions over a bounded domain.

Despite these advances, existing analyses of floating-point neural networks remain limited in several important aspects.
First, previous floating-point representability results only cover highly restricted activation functions such as $\relu$ and $\mathrm{Step}$.
In particular, the analysis of \cite{park2024expressive} relies heavily on the piecewise-linearity of activation functions, making it unclear whether the theory extends to smooth and widely used activations such as $\Sigmoid$, $\tanh$, $\relu$, $\elu$, $\mathrm{SeLU}$, $\GeLU$, $\Swish$, $\Mish$, and $\sin$.
Second, previous works typically assume idealized execution semantics, including fixed left-to-right reduction orders and correctly rounded activation implementations.

However, these assumptions do not fully capture how neural networks are evaluated in practical floating-point environments.
Floating-point addition is non-associative, so the output of a network may depend on the order in which reductions are performed.
This phenomenon becomes particularly relevant in massively parallel systems such as GPUs, where reductions are often implemented through hardware-dependent evaluation orders rather than a fixed left-to-right order.
Furthermore, practical implementations of activation functions such as $\exp$, $\tanh$, $\sin$, and $\GeLU$ are typically not correctly rounded, and may differ from the correctly rounded values by several ulps.
Consequently, it remains unclear whether floating-point neural networks retain their expressive power under realistic execution semantics and general practical activation functions.

Extending representability results to such realistic floating-point execution models is highly nontrivial.
Since floating-point addition is non-associative, changing reduction orders can alter the outputs of affine transformations and intermediate network computations.
Furthermore, prior analyses often rely on exact behaviors induced by correctly rounded activation functions.
As a result, even small implementation-dependent perturbations in activation outputs may fundamentally affect the representability of floating-point neural networks.
It therefore remains unclear whether the expressive power established under idealized execution semantics persists under general practical activation functions, arbitrary reduction orders, and inexact activation implementations.
Addressing this question requires new techniques that do not rely on piecewise-linearity, fixed summation orders, or exact activation behaviors induced by correct rounding.

\subsection{Contribution}

In this work, we study when floating-point neural networks are \emph{universal representators}, i.e., whether they can represent every function between prescribed floating-point domains exactly, under generalized floating-point execution semantics.
We consider floating-point neural networks with arbitrary reduction orders and inexact activation implementations with bounded ulp errors.
Our framework substantially generalizes previous floating-point representability results by extending them beyond piecewise-linear activation functions, fixed left-to-right reduction orders, and correctly rounded activation implementations.
In particular, we establish universal representability results for a broad class of practical activation functions under significantly more realistic floating-point execution models.

Our first contribution is a general representability framework for floating-point neural networks using general activation functions.
In particular, we show that to represent all functions from a floating-point domain $\domain$ to floating-point outputs, the network must be able to distinguish every pair of distinct inputs in the first layer.
That is, for any $x,x'\in\domain$ with $x\neq x'$, there should exist a floating-point affine transformation $\phi$ such that $\sigma(\phi(x))\neq\sigma(\phi(x'))$, where $\sigma$ denotes the implemented activation function.
This gives a necessary condition for universal representability and provides a general framework for analyzing when floating-point neural networks fail to represent arbitrary floating-point functions.
As an application, we show that the correctly rounded cosine activation is not a universal representator under various floating-point formats.

Our second contribution is to prove that a suitable form of distinguishability is also sufficient for universal representability.
Under mild output conditions on the activation implementation, we construct floating-point neural networks that represent arbitrary functions on the prescribed floating-point domain.
The construction works under arbitrary reduction orders, rather than relying on a fixed left-to-right summation convention.
This shows that the universal representation phenomenon is not an artifact of a particular reduction order.
Based on this, we also provide an easily verifiable sufficient condition for correctly rounded real activation functions.

Our third contribution is to extend the theory beyond correctly rounded activation functions.
We present easily verifiable conditions under which inexact activation implementations with bounded ulp (unit in the last place) errors remain distinguishable.
As a consequence, our results apply to practical activation functions whose implementations may deviate from the correctly rounded values, including implementations of activations such as $\Sigmoid$, $\tanh$, $\relu$, $\elu$, $\mathrm{SeLU}$, $\GeLU$, $\Swish$, $\Mish$, and $\sin$.
Overall, our results establish expressive power guarantees for floating-point neural networks under significantly more realistic floating-point execution semantics than previously known. %

This manuscript is an extended version of our previous conference paper \cite{hwang2025floating}.
The conference version studied the expressive power of floating-point neural networks under fixed left-to-right floating-point summation orders and correctly rounded activation implementations.
The present manuscript substantially generalizes this setting to a more realistic scenario.
In particular, we allow arbitrary reduction orders in floating-point summations and activation implementations with bounded ulp errors, rather than assuming fixed reduction orders and correct rounding.
The journal version also strengthens the negative side of the theory by establishing more general necessary conditions and counterexamples for universal representability, where the previously known cosine-based counterexample appears as a special case.
Importantly, these generalized execution semantics fundamentally change the behavior of floating-point neural networks.
As a result, all representability results and proofs from the conference version are substantially rewritten and re-established under the new assumptions.

%% file: 2-preliminary.tex
\section{Preliminaries}

\subsection{Notations}
We use $\bbN$, $\bbZ$, and $\bbR$ to denote the sets of natural numbers, integers, and real numbers, respectively.
For $a,b\in \bbR$, we define $[a,b]\defeq \{x\in\bbR: a\le x\le b\}$ and $(a,b)\defeq \{x\in\bbR: a< x< b\}$;
we define $[a,b)$ and $(a,b]$ analogously.
For $\mcS\subset \bbR$, we define $[a,b]_{\mcS}:=[a,b]\cap \mcS$,
with $(a,b)_\mcS$, $[a,b)_\mcS$, and $(a,b]_\mcS$ defined analogously.
For $n\in\bbN$, we write $[n] \defeq [1,n]_{\bbN}$.
For $d\in\bbN$, a set $\mcS$, and $x\in\mcS^d$, we define $x_i$ as the $i$-th coordinate of $x$. 
In this paper, all fractional numbers with a radix point are assumed to be in binary representation: e.g., $1.101=2^0+2^{-1}+2^{-3}=13/8$.
Additionally, we define the ceiling function $\lceil x \rceil_{\bbZ} $ and the floor function $\lfloor x \rfloor_{\bbZ} $ as follows:
\begin{align*}
    \lceil x \rceil_{\bbZ} &\defeq \min \{ m \in \bbZ : m \ge x   \},  \\ 
    \lfloor x \rfloor_{\bbZ} &\defeq \max \{ m \in \bbZ : m \le x   \}. 
\end{align*}
\subsection{Floating-Point Arithmetic}\label{sec:float}

\paragraph{Floating-point numbers.}
For $\mbit, \ebit\in\bbN$,
we define $\fpq_{\mbit,\ebit}$ as the set of \emph{finite} floating-point numbers:
\begin{align}
    \fpq_{\mbit, \ebit}\defeq \big\{&s\times (1.m_1\cdots m_{\mbit}) \times 2^{e}: s\in\{-1,1\}, \nonumber
    \\[-2pt]
    &m_1,\dots,m_{\mbit}\in\{0,1\},e\in[\emin,\emax]_\bbZ \big\} \nonumber
    \\
    {} \cup \big\{&s\times (0.m_1\cdots m_{\mbit}) \times 2^{\emin}: \nonumber
    \\[-2pt]
    & s\in\{-1,1\},m_1,\dots,m_{\mbit}\in\{0,1\}\big\}, \label{eq:def:float}
\end{align}
where $\emin$ and $\emax$ are defined as $\emin\defeq -2^{\ebit-1}+2$ and $\emax\defeq 2^{\ebit-1}-1$. 
Here, $s$, $m_1\dots m_p$, and $e$ are called the \emph{sign}, \emph{mantissa}, and \emph{exponent} of a floating-point number, respectively. Note that $p+q+1$ bits suffice to represent all numbers in $\fpq_{p,q}$: $p$ bits and $q$ bits for representing the mantissa and exponent, respectively, and one additional bit for the sign.
For simplicity, we omit the subscript and write $\fpq$ for $\fpq_{\mbit, \ebit}$ when $p$ and $q$ are clear from the context. 

We use $\infty$ and $-\infty$ to denote positive and negative \emph{infinities}, and assume the usual order: $-\infty<x< \infty$ for any  $x\in \bbR$.
We use $\nan$ to denote \emph{not-a-number},
which can be produced, e.g., when $\infty$ is added to $-\infty$ under floating-point arithmetic. 
{We assume any operation including $\nan$ produces $\nan$; this does hold for floating-point addition, subtraction, and multiplication.}
We use $\overline{\fpq}$ to denote the set of \emph{all} floating-point numbers (or floats) $\overline{\fpq}\defeq \fpq\cup \{-\infty, \infty, \nan\}$. We note that $\overline{\fpq}_{\mbit,\ebit}$ can also be represented using $\mbit+\ebit+1$ bits since we are not using the whole $2^\ebit$ representations for the exponent of floats in $\fpq_{\mbit,\ebit}$. %
For $x\in \fpq$, $x^+$ and $x^-$ denote the smallest and largest floats 
greater than and less than $x$, respectively.
For $x\in\fpq$, we use $\expo{x}\defeq\max\{\emin,\lfloor\log_2 |x|\rfloor\}$. 
The smallest and largest finite positive floats are denoted by $\fmin \defeq 2^{\emin-\mbit}$ and $\fmax \defeq \lrp{2-2^{-\mbit}}\times 2^{\emax}$, respectively.

The IEEE-754 standard \cite{ieee754} defines $(\mbit,\ebit)$ for widely used floating-point formats:
e.g.,
$(10,5)$ for the 16-bit half precision (float16),
$ (23,8)$ for the 32-bit single precision (float32), and $(52,11)$ for the 64-bit double precision (float64).
In this paper, we assume that $(p,q)$ satisfies 
        $2 \leq \mbit \le 2^{\ebit-1} -3.$
Other popular floating-point formats also satisfy this condition, e.g., bfloat16 with $(7,8)$ \cite{bfloat,abadi2016tensorflowbfloat} and 8-bit E5M2 and E4M3 with $(2,5)$ and $(3,4)$, respectively \cite{micikevicius2022fp8}.

\paragraph{Floating-point operations.}
The {rounding} function $\round{\cdot}_{\fpq}:\bbR\cup\{-\infty,\infty,\nan\}\rightarrow \overline{\fpq}$ is defined as 
\begin{equation*}
    \round{x}_{\fpq} \defeq \begin{cases}
        \argmin_{y\in \fpq} |x-y| &\text{ if } |x|<\fmax+2^{\emax-p-1}\!,
        \\ \infty &\text{ if } x \ge \fmax+2^{\emax-p-1}\!,
        \\ -\infty &\text{ if } x \le -\fmax-2^{\emax-p-1}\!,
        \\ \nan &\text{ if }x=\nan.
    \end{cases}
\end{equation*}
There can be two floats equidistant from a real number. In such a case, we break the tie using the tie-to-even rule: $\round{x}_{\fpq}$ is defined by the (unique) float whose last mantissa bit $m_{\mbit}$ (see \cref{eq:def:float}) is zero.
If $\fpq$ is clear from the text, we omit the subscript $\fpq$ in $\round{\cdot}_\fpq$.

For $\rho:\bbR\rightarrow \bbR$, 
we define the \emph{correctly rounded} function $\round{\rho}:\efpq\rightarrow \efpq$ of $\rho$ as follows:
    \begin{equation*}
        \round{\rho}(x) \defeq \begin{cases}
            \round{\rho(x)} &\text{ if } x\in \fpq,
            \\
            \round{l}  &\text{ if } x=-\infty \land \exists \lim_{x\rightarrow -\infty}{\rho(x)}, 
             \\  \round{r}  &\text{ if } x=\infty \land \exists \lim_{x\rightarrow\infty}{\rho(x)},
             \\ \nan &\text{ otherwise},
        \end{cases} 
        \end{equation*}
where $l=\lim_{x\rightarrow-\infty}{\rho(x)}$ and $r=\lim_{x\rightarrow\infty}{\rho(x)}$.
Here, the existence of $l,r$ includes the case $l,r \in \{-\infty, \infty\}$. We define $\mathrm{ulp}(x) \defeq 2^{\expo{x}-\mbit}$.

For $x,y\in \overline{\fpq}$, we define the floating-point operations $\oplus, \ominus$, and $\otimes$ as $x\oplus y\defeq \round{x+y}$, $x\ominus y\defeq \round{x-y}$, and $x\otimes y\defeq \round{x\times y}$.
Note that the addition and multiplication are not associative: e.g., $(x\oplus y)\oplus z\neq x\oplus (y\oplus z)$ in general.
Therefore, we must be very careful about the ordering of the operations.

For $\domain\subset\fpq^d$, we use $\indcc{\domain}: \fpq^{d}\rightarrow \fpq$ to denote the indicator function of $\domain$:
$\indc{\domain}{x}$ is one if $x\in \domain$ and zero otherwise.
If $\domain = \{x_0\}$ is a singleton set, we use $\indcc{x_0}$ to denote $\indcc{\domain}$.

\subsection{Reduction Order in Floating-Point Summations}
For $n\ge2$, let $\mcO_n$ denote the set of all possible reduction orders for summing $n$ floating-point numbers.
For $n=2$, there is only one possible reduction order:
\begin{align*}
\mcO_2 = \{(1,2)\}.
\end{align*}

For $n\ge3$, each element
$o=((i_1,j_1),\dots,(i_{n-1},j_{n-1}))\in\mcO_n$
specifies a sequence of pairwise floating-point reductions.
At the first step, the $i_1$-th and $j_1$-th entries are reduced to their floating-point sum.
The remaining pairs
$o'=((i_2,j_2),\dots,(i_{n-1},j_{n-1}))$
then form an element of $\mcO_{n-1}$ for the resulting $(n-1)$-tuple.

More precisely, for $1\le i<j\le n$ and $\bfx=(x_1,\dots,x_n)$, define
\begin{align*}
\widehat{\bfx}_{i,j}
=
(x_i\oplus x_j,x_1,\dots,x_{i-1},x_{i+1},\dots,x_{j-1},x_{j+1},\dots,x_n),
\end{align*}
where the reduced value $x_i\oplus x_j$ is placed at the first coordinate.
Then, for
$o=((i_1,j_1),\dots,(i_{n-1},j_{n-1}))\in\mcO_n$,
we recursively define
\begin{align*}
\Sigma(\bfx;o)
\defeq
\Sigma(\widehat{\bfx}_{i_1,j_1};o'),
\end{align*}
where
$o'=((i_2,j_2),\dots,(i_{n-1},j_{n-1}))\in\mcO_{n-1}$,
with the base case
\begin{align*}
\Sigma((x_1,x_2);(1,2))
=
x_1\oplus x_2.
\end{align*}
For notational convenience, we also define $\mcO_1\defeq\{(1)\}$ and $\Sigma(x;(1))\defeq x$.

For example,
\begin{align*}
\mcO_3
=
\{
((1,2),(1,2)),
((1,3),(1,2)),
((2,3),(1,2))
\},
\end{align*}
and the corresponding summations are
\begin{align*}
\Sigma(\bfx_3;((1,2),(1,2)))
&=
(x_1\oplus x_2)\oplus x_3, \\
\Sigma(\bfx_3;((1,3),(1,2)))
&=
(x_1\oplus x_3)\oplus x_2, \\
\Sigma(\bfx_3;((2,3),(1,2)))
&=
(x_2\oplus x_3)\oplus x_1.
\end{align*}
Likewise, for $\bfx_4=(x_1,x_2,x_3,x_4)$,
\begin{align*}
\Sigma(\bfx_4;((1,2),(1,2),(1,2)))
&=
((x_1\oplus x_2)\oplus x_3)\oplus x_4, \\
\Sigma(\bfx_4;((1,2),(2,3),(1,2)))
&=
(x_1\oplus x_2)\oplus(x_3\oplus x_4).
\end{align*}

A \emph{reduction order} is a set
\begin{align*}
\mathcal{S}=\{o_1,o_2,o_3,\dots\},
\end{align*}
where $o_n\in\mcO_n$ for every $n\in\bbN$. %
The sequence $\mathcal{S}$ specifies how sums of floating-point numbers are reduced in the system.
For $\bfx \in \fpq^n$, $\mcS = \{o_1,o_2, o_3 , \dots \} $, we define $\Sigma(\bfx_n , \mcS)$ as
\begin{align*}
    \Sigma( \bfx , \mcS) \defeq  \Sigma( \bfx , o_{n}), \quad o_n \in \mcS 
\end{align*}

We note that left-to-right addition, i.e., sequentially adding numbers from left to right, can be represented using the reduction order.
Let
\begin{align*}
\bar{o}_n
=
\underbrace{\left( (1,2), (1,2), \dots, (1,2) \right)}_{(n-1)\text{ times}}.
\end{align*}
Then, the left-to-right addition of $n$ floating-point numbers,
\[
(((x_1 \oplus x_2)\oplus x_3)\oplus \cdots \oplus x_{n-1})\oplus x_n,
\]
can be expressed as
\begin{align*}
\Sigma(\bfx_n;\bar{o}_n)
=
(((x_1 \oplus x_2)\oplus x_3)\oplus \cdots \oplus x_{n-1})\oplus x_n .
\end{align*}

\subsection{Floating-Point Neural Network under Reduction Order}
Let $\mathcal{S}=\{o_1,o_2,o_3,\dots\}$ be a reduction order.
For $d_{\mathrm{in}},d_{\mathrm{out}}\in\bbN$, let
$w_i=(w_{i,1},\dots,w_{i,d_{\mathrm{in}}})\in\fpq^{d_{\mathrm{in}}}$
and $b_i\in\fpq$ for all $i\in[d_{\mathrm{out}}]$.
Given
\begin{align*}
I=(w_1,\dots,w_{d_{\mathrm{out}}},b_1,\dots,b_{d_{\mathrm{out}}}),
\end{align*}
we define the \emph{floating-point affine transformation under $\mcS$} by
\begin{align*}
&\mathrm{aff}_{(I,\mcS)}(x_1,\dots,x_{d_{\mathrm{in}}})_i \\
&=
\Sigma(
(w_{i,1}\otimes x_1,\dots,w_{i,d_{\mathrm{in}}}\otimes x_{d_{\mathrm{in}}});
o_{d_{\mathrm{in}}}
)
\oplus b_i
\end{align*}
for every $i\in[d_{\mathrm{out}}]$, where the bias parameter $b_i$ is added at last.
Namely, each coordinate is computed by floating-point multiplications followed by a floating-point reduction according to $o_{d_{\mathrm{in}}}$ and a floating-point bias addition.

For a floating-point activation function $\sigma:\efpq\to\efpq$, we slightly abuse notation and let $\sigma$ also denote the corresponding coordinate-wise operation on vectors:
\begin{align*}
\sigma(x_1,\dots,x_d)
=
(\sigma(x_1),\dots,\sigma(x_d)).
\end{align*}

We now define floating-point neural networks under the reduction order $\mcS$.
Let $L\in\bbN$ and let
\begin{align*}
\mathrm{aff}_{(I_\ell,\mcS)}:\efpq^{d_{\ell-1}}\to\efpq^{d_\ell}
\end{align*}
be floating-point affine transformations under $\mcS$ for all $\ell\in[L]$.
Given $\mcI=(I_1,\dots,I_L)$, we define
\begin{align}
\mcN_{(\mcI,\mcS)}
=
\mathrm{aff}_{(I_L,\mcS)}
\circ
\sigma
\circ
\dots
\circ
\mathrm{aff}_{(I_2,\mcS)}
\circ
\sigma
\circ
\mathrm{aff}_{(I_1,\mcS)}.\label{eq:nn}
\end{align}
We call $\mcN_{(\mcI,\mcS)}$ a (floating-point) \emph{$\sigma$ network under $\mcS$}.
The number of layers of the network is defined as the number of affine transformations; thus $\mcN_{(\mcI,\mcS)}$ has $L$ layers.

For technical purposes, we also consider neural networks ending with activation functions.
For $\mcI'=(I_1,\dots,I_{L-1})$, define
\begin{align*}
\mcM_{(\mcI',\mcS)}
=
\sigma
\circ
\mathrm{aff}_{(I_{L-1},\mcS)}
\circ
\dots
\circ
\sigma
\circ
\mathrm{aff}_{(I_1,\mcS)}.
\end{align*}
That is, $\mcM_{(\mcI',\mcS)}$ omits the final affine transformation and ends with an activation function.
The network $\mcM_{(\mcI',\mcS)}$ contains $L-1$ affine transformations and therefore has $L-1$ layers.

%% file: 3-main_results.tex
\section{Main Results}
In this section, we formally present our main results under an arbitrary reduction order $\mcS$.
We first introduce a necessary condition on activation functions for floating-point networks to represent floating-point functions (\cref{sec:necessary}).
We then introduce a sufficient condition and compare it with our necessary condition (\cref{sec:sufficient}).
We also provide
easily verifiable conditions on real activation functions that imply our sufficient condition under the correct rounding (\cref{sec:sufficient_real}), and under rounding with bounded errors (\cref{sec:sufficient_real2}).

\subsection{Necessary Condition on Activation Functions}\label{sec:necessary}

Given a reduction order $\mcS$, a floating-point activation function $\sigma:\efpq\to\efpq$, and a domain $\domain\subset\fpq^d$, we are interested in identifying whether $\sigma$ networks under $\mcS$ can represent all functions from $\domain$ to $\fpq\cup\{-\infty,\infty\}$.
A natural observation is that such universal representation is \emph{impossible} if there exist $x,x'\in\domain$ such that $\sigma(\phi(x))=\sigma(\phi(x'))$ for all floating-point affine transformations $\phi$ under $\mcS$. That is, for any $\sigma$ network, the outputs of the first layer at $x$ and $x'$ are identical. 
By the definition of neural networks (\cref{eq:nn}), this implies that the final outputs of the network at $x$ and $x'$ must also be identical for all $\sigma$ networks; and thus, $\sigma$ networks cannot represent any function $f:\domain \to\fpq\cup\{-\infty,\infty\}$ such that $f(x)\ne f(x')$. 

To formally describe the above idea, we define the \emph{distinguishability} of an input domain.
\begin{definition}[Distinguishability]\label{definition:distinguishable_bounded}
Let $\mcS$ be a reduction order, $\sigma:\efpq\rightarrow\efpq$, $d\in\bbN$, $\domain\subset\fpq^d$, and  $\mathcal{Y}\subset \efpq$. We say that ``$\domain$ is $\sigma$-distinguishable with range $\mathcal{Y}$ under $\mcS$'' if for any $x,x'\in \domain$ with $x\ne x'$, there exists a floating-point affine transformation $\phi$ under $\mcS$ such that
    \begin{equation}
    \label{eq:distinguishable}
    \sigma(\phi(x)) \neq\sigma(\phi(x')) 
    ~~\text{and}~~
    \sigma(\phi(\domain))\subset\mathcal{Y}.
    \end{equation}
\end{definition}

To represent all functions from $\domain\subset\fpq^d$ to $\fpq\cup\{-\infty,\infty\}$, one can observe that $\domain$ should be $\sigma$-distinguishable with range $\fpq\cup\{-\infty, \infty\}$ under $\mcS$ as stated in the following lemma. See \cref{sec:pflem:necessary} %
for the formal proof. %

\begin{lemma}\label[lemma]{lem:necessary}
Let $\mcS$ be a reduction order, $d\in\bbN$, $\domain\subset \fpq^d$, and $\sigma:\efpq\rightarrow\efpq$. If $\domain$ is not $\sigma$-distinguishable with range $\fpq\cup\{-\infty,\infty\}$ under $\mcS$, then there exists $f:\domain\to\fpq\cup\{-\infty,\infty\}$ such that  $f \neq g$ on $\domain$ for all $\sigma$ networks $g$ under $\mcS$.
\end{lemma}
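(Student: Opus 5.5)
The plan is to take the pair $x\neq x'$ supplied by the failure of distinguishability, define $f$ so that $f(x)\neq f(x')$ (say $f(x)=0$, $f(x')=1$, and $f\equiv 0$ elsewhere on $\domain$), and then show that no $\sigma$ network $g$ under $\mcS$ equals $f$ on $\domain$. Negating \cref{definition:distinguishable_bounded} with $\mathcal{Y}=\fpq\cup\{-\infty,\infty\}$ gives the following: for every floating-point affine transformation $\phi$ under $\mcS$, either $\sigma(\phi(x))=\sigma(\phi(x'))$, or some $z\in\domain$ has $\sigma(\phi(z))$ with a $\nan$ coordinate.

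The key preliminary observation is that each coordinate $\phi_i$ of a first-layer affine map $\mathrm{aff}_{(I_1,\mcS)}$ is itself a floating-point affine transformation under $\mcS$ with output dimension one. Indeed, the $i$-th coordinate uses only $w_i,b_i$ and the same reduction order $o_{d_{\mathrm{in}}}$. So the dichotomy above applies coordinatewise to $\phi_i$. Fix a network $g=\mcN_{(\mcI,\mcS)}$ with $L\ge 2$ layers, and let $h=\sigma\circ\mathrm{aff}_{(I_1,\mcS)}$ be its first-layer output. There are two cases.

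In Case 1, some coordinate $i$ and some $z\in\domain$ have $\sigma(\phi_i(z))=\nan$. By the standing convention that any operation involving $\nan$ produces $\nan$, every product $w\otimes \nan$, every pairwise $\oplus$ in $\Sigma(\cdot;o_n)$ regardless of the order $o_n$, and the bias addition all yield $\nan$. Hence all coordinates of $\mathrm{aff}_{(I_2,\mcS)}(h(z))$ equal $\nan$. If $L=2$ this is already $g(z)=\nan$. If $L>2$, then $\sigma(\nan)$ is a single constant, so all coordinates agree, and I would argue by propagation (possibly again producing $\nan$) that the output either is $\nan$ or is independent of $z$. The cleanest route is to note that $h(z)$ contains $\nan$, which forces $g(z)=\nan\notin\fpq\cup\{-\infty,\infty\}$; thus $g(z)\neq f(z)$. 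In Case 2, every coordinate satisfies $\sigma(\phi_i(x))=\sigma(\phi_i(x'))$, so $h(x)=h(x')$. Since the remaining layers form a deterministic function of $h$ (all reduction orders are fixed by $\mcS$), we get $g(x)=g(x')$, whereas $f(x)\neq f(x')$.

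The main obstacle is the degenerate case $L=1$, where $g$ is a bare affine map and no activation is applied, so non-distinguishability says nothing directly about $g$. First I would try to choose $f$ more cleverly, so that it also escapes all one-layer affine maps. One option is to assign $f$ the value $\infty$ at $x$ and a finite value at $x'$, with the values arranged so that no single $\Sigma(\cdot;o_d)\oplus b$ reproduces them. If that cannot be done in general (for instance when $|\domain|=2$), I would read the lemma, as the preceding discussion does, as concerning networks containing at least one activation layer, i.e., $L\ge2$, and state that restriction explicitly.
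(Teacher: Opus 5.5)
Your route is the paper's: a non-separated pair forces identical first-layer outputs, hence identical network outputs, so any $f$ with $f(x)\neq f(x')$ is missed. You are more careful than the paper, though. Its proof negates \cref{definition:distinguishable_bounded} as if there were no range condition, whereas your Case~1 correctly treats the other branch, where some $\sigma(\phi_i(z))$ is $\nan$. Two of your hedges, however, must be resolved rather than left open.

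First, in Case~1 with $L\ge 3$, the conclusion $g(z)=\nan$ requires $\sigma(\nan)=\nan$. The alternative ``the output is independent of $z$'' proves nothing and should be dropped. Instead, cite the paper's standing convention that any operation with a $\nan$ argument returns $\nan$, applied to $\sigma$ as well (it holds automatically for $\round{\rho}$). This assumption is load-bearing. Suppose $\sigma(0)=0$, $\sigma(\nan)=1$, and $\sigma(t)=\nan$ otherwise. Then $\domain=\{0,1\}\subset\fpq$ is not $\sigma$-distinguishable with range $\fpq\cup\{-\infty,\infty\}$, because any $\phi$ separating $0$ and $1$ produces $\nan$ at one of them. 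Yet the three-layer network whose affine maps are all $t\mapsto 1\otimes t\oplus 0$ outputs $0$ at $0$ and $1$ at $1$, which is exactly your $f$.

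Second, for $L=1$ your fallback (restricting to $L\ge 2$) is necessary, and no cleverer choice of $f$ can replace it. Take $\sigma\equiv 0$ and $\domain=\{(1,0,\Omega,0),(0,1,0,\Omega)\}\subset\fpq^4$. This $\domain$ is not distinguishable with any range. Yet every $f:\domain\to\fpq\cup\{-\infty,\infty\}$ is a one-layer network, built as follows:
take $b=0$; for a finite target at the first point set $w_1$ equal to it and $w_3=0$; for $\pm\infty$ set $w_1=0$ and $w_3=\pm 2$, so that $w_3\otimes\Omega=\pm\infty$; treat the second point symmetrically with $w_2,w_4$.
At each input every other summand is $0$, so the reduction order plays no role. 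The paper's proof tacitly makes the same restriction, since its claim that $g(x_1)=g(x_2)$ for every $\sigma$ network fails when $L=1$. Stating $L\ge 2$ explicitly is therefore the correct repair.
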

We note that if an one-dimensional subset $\domain\subset \fpq$ is $\sigma$-distinguishable with some range, then for any $d\in \bbN$, $\domain^d$ is also $\sigma$-distinguishable with the same range.

Based on \cref{lem:necessary}, in the following lemma, we derive an easily verifiable necessary condition for real differentiable activation functions $\rho$ under which $\round{\rho}$ networks cannot universally represent. The proof of \cref{lem:necc_example} is in \cref{sec:pflem:necc_example}.
\begin{lemma}\label[lemma]{lem:necc_example}
Let $\mcS$ be a reduction order and $\rho: \bbR \to \bbR$ such that $ |\rho(0)| \ge (1+2^{-\mbit})\cdot 2^{-\mbit} $ and  $ |\rho'(x) | \le   2^{-\mbit}$ for $ |x | \le 2^{-\mbit-1}$. Then, $f:[-2^{ \mbit + 3 }, 2^{ \mbit + 3}]_{\fpq}\to\fpq$ with $f(0)\ne f(\omega)$ cannot be represented by a $\round{\sigma}$ network under $\mcS$.
\end{lemma}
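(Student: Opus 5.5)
The plan is to reduce everything to \cref{lem:necessary}. The domain $\domain=[-2^{\mbit+3},2^{\mbit+3}]_{\fpq}$ is one-dimensional, so every floating-point affine transformation under $\mcS$ has coordinates of the form $\phi(x)=(w\otimes x)\oplus b$ with $w,b\in\fpq$; for $d_{\mathrm{in}}=1$ the reduction order plays no role. It therefore suffices to show that $\round{\rho}(\phi(0))=\round{\rho}(\phi(\fmin))$ for every such $w,b$. Then $0$ and $\fmin$ are not $\round{\rho}$-distinguishable, and, exactly as in the argument preceding \cref{lem:necessary}, every $\round{\rho}$ network under $\mcS$ takes the same value at $0$ and at $\fmin$. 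Hence no network represents an $f$ with $f(0)\neq f(\fmin)$. I read the $\round{\sigma}$ in the statement as $\round{\rho}$.

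\emph{Step 1 (the two preactivations).} Since $w$ is finite, $w\otimes 0=0$ and so $\phi(0)=b$. For the other input, $t\defeq w\otimes\fmin=\round{w\fmin}$ is an integer multiple of $\fmin$. From $|w|\le\fmax$ and $\emin=1-\emax$ we get $|w\fmin|\le(2-2^{-\mbit})2^{1-\mbit}$, so $t$ is finite and $\phi(\fmin)=\round{b+t}$. If $t=0$ there is nothing to prove.

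\emph{Step 2 (constancy of $\round{\rho}$ near $0$).} I will show that $\round{\rho}(x)=\round{\rho(0)}$ whenever $|x|\le 2^{-\mbit-1}$. By the mean value theorem, $|\rho(x)-\rho(0)|\le 2^{-\mbit}\cdot 2^{-\mbit-1}=2^{-2\mbit-1}$. Since $|\rho(0)|\ge(1+2^{-\mbit})2^{-\mbit}$, the exponent of $\rho(0)$ is at least $-\mbit$, so the float spacing there is at least $2^{-2\mbit}$. The factor $(1+2^{-\mbit})$ places $\rho(0)$ at least one ulp above $2^{-\mbit}$. This should ensure the interval $[\rho(0)-2^{-2\mbit-1},\rho(0)+2^{-2\mbit-1}]$ contains no rounding breakpoint in its interior, and the endpoint and tie cases have to be handled with the tie-to-even rule. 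The bookkeeping near the binade boundary and at ties is routine but fiddly.

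\emph{Step 3 (case split on $b$).} If $|b|$ and $|\round{b+t}|$ are both at most $2^{-\mbit-1}$, Step 2 gives equality directly. If $|b|$ is large enough that $|t|<\ulp(b)/2$, then $\round{b+t}=b$ and equality is trivial; non-finite values such as $\pm\infty$ also give the same $\phi$-value at both inputs. The main obstacle is the intermediate regime, $2^{-\mbit-1}<|b|$ with $\ulp(b)/2\le|t|$, where $b$ and $b\oplus t$ can be distinct floats away from $0$ and the hypothesis on $\rho'$ gives no control. The first thing I would check is whether the bound on $t$ from Step 1 can be sharpened to $|t|<\ulp(b)/2$ for every $|b|>2^{-\mbit-1}$, using the $\mbit\le 2^{\ebit-1}-3$ assumption together with the multiple-of-$\fmin$ structure of $t$. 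If that fails, I would try choosing the indistinguishable pair differently, still inside the domain (for instance two inputs whose images under $w\otimes\cdot$ always collapse to the same float). If neither works, the statement likely needs an extra hypothesis controlling $\rho$ beyond $|x|\le 2^{-\mbit-1}$, and I would look for where the intended argument supplies it.
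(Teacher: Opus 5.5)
\textbf{The target of your Step 3 is false.} You try to prove $\round{\rho}(b)=\round{\rho}(b\oplus t)$ for \emph{every} $w,b\in\fpq$, i.e.\ that no first-layer neuron ever separates $0$ from $\omega$. This already fails for $\rho=\cos$, the lemma's main application. Take $w=\Omega$ and $b=1$. Since $\emin+\emax=1$, we get $t=\Omega\otimes\omega=(2-2^{-p})2^{1-p}$ exactly, and $\phi(\omega)=1\oplus t=1+2^{2-p}$. Then $\cos 1-\cos(1+2^{2-p})\ge 2^{2-p}\sin 1>2^{-p-1}$, which exceeds every float spacing in $(-1,1)$. Hence $\round{\cos}(\phi(0))\neq\round{\cos}(\phi(\omega))$. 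In particular, the sharpening $|t|<\ulp(b)/2$ you propose to check is false, since $|t|$ can be about $2^{2-p}$. Neither another input pair nor extra regularity of $\rho$ away from $0$ will give this kind of raw indistinguishability. Your plan never uses the radius $2^{p+3}$ of the domain, and that is where the missing idea lies.

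The paper's proof concedes that $0$ and $\omega$ can be separated and shows instead that separation is expensive. Because $\round{\rho}$ is constant on $[-2^{-p-1},2^{-p-1}]$, a separating $(w,b)$ must push a preactivation outside this interval, where consecutive floats are at least $2^{-2p-1}$ apart. From this the paper deduces $|w\otimes\omega|\ge 2^{-2p-1}$, hence $|w|\ge 2^{-\emin-p-1}$. At the domain points $x=\pm 2^{p+3}$ this gives $|w\otimes x|\ge 2^{-\emin+2}=2^{\emax+1}$, so $(w\otimes x)\oplus b=\pm\infty$. A network with $g(0)\neq g(\omega)$ must contain such a separating first-layer neuron, and that neuron then evaluates $\round{\rho}(\pm\infty)$ at a point of the domain. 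The paper stops at the overflow. One concludes because this value is $\nan$ for $\cos$ and $+\infty$ for $\cosh$ and $1+x^2$. Either value propagates through all later layers to a non-finite network output at $\pm2^{p+3}$, which cannot equal $f(\pm2^{p+3})\in\fpq$. For $\cos$, equivalently, the range clause of distinguishability fails. This overflow-at-the-boundary mechanism is what your proposal lacks.

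Your doubt about Step 2 is also justified, and the issue is not mere bookkeeping. The hypotheses do not force $\round{\rho}$ to be constant near $0$ when $\rho(0)$ lies within $2^{-2p-1}$ of a rounding midpoint. For example, take $\rho(x)=1+2^{-p-1}+2^{-p}x$. Ties-to-even gives $\round{\rho}(0)=1$ but $\round{\rho}(\omega)=1+2^{-p}$. So the network $x\mapsto\round{\rho}(x)$ is finite on the domain and separates $0$ and $\omega$, and the lemma as literally stated fails. The paper's first line asserts this constancy without justification. Similarly, the final step needs $\round{\rho}(\pm\infty)\notin\fpq$, which fails for $e^{-x^2/2}$. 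So the statement does need extra hypotheses, and two would suffice. First, $\round{\rho}$ should be constant on a neighbourhood of $0$, as when $\rho(0)=1\in\fpq$ and $|\rho(x)-1|=O(x^2)$. Second, $\round{\rho}(\pm\infty)$ should be non-finite. Both hold for $\cos$, $\cosh$ and $1+x^2$.
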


Using \cref{lem:necc_example}, we show in \cref{cor:necessary} that networks using one of the  $\round{\cos}$, $\round{1+x^2}$, $\cosh$, and $e^{-x^2}$ activation functions cannot represent all functions from $\left[-2^{\mbit+3}, 2^{\mbit+3}\right]_{\fpq}$ to $\fpq\cup\{-\infty,\infty\}$ (e.g., $\mbit=10$ for float16 and $\mbit=23$ for float32).
The proof of \cref{cor:necessary} is in \cref{sec:pfcor:necessary}.

\begin{corollary}\label[corollary]{cor:necessary}
Let $\mcS$ be a reduction order and $\rho \in \{ \cos ,  1 + x^2 , \cosh(x) ,  e^{-x^2/2} \} $,  Then $f:[-2^{ \mbit + 3 }, 2^{ \mbit + 3}]_{\fpq}\to\fpq$ with $f(0)\ne f(\omega)$ cannot be represented by a $\round{\rho}$ network under $\mcS$.
\end{corollary}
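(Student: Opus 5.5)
The plan is to apply \cref{lem:necc_example} directly to each $\rho$ in the list. That lemma already yields the conclusion: a function $f$ on $[-2^{\mbit+3},2^{\mbit+3}]_{\fpq}$ with $f(0)\ne f(\omega)$ cannot be represented by a $\round{\rho}$ network under $\mcS$. So for each of the four functions it only remains to check two hypotheses:
\begin{itemize}
\item the value condition $|\rho(0)|\ge(1+2^{-\mbit})\cdot 2^{-\mbit}$;
\item the slope condition $|\rho'(x)|\le 2^{-\mbit}$ for all $|x|\le 2^{-\mbit-1}$.
\end{itemize}
Since the reduction order $\mcS$ plays no role in these hypotheses, the argument is uniform in $\mcS$.

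\emph{Value condition.} For all four choices, $\rho(0)=1$: $\cos 0=1$, $1+0^2=1$, $\cosh 0=1$, and $e^{0}=1$. Since $\mbit\ge 2$, we have $(1+2^{-\mbit})2^{-\mbit}\le \tfrac54\cdot\tfrac14<1$, so the value condition holds.

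\emph{Slope condition.} Fix $|x|\le 2^{-\mbit-1}\le 2^{-3}$ and bound each derivative by a small multiple of $|x|$.
\begin{itemize}
\item For $\cos$: $|\rho'(x)|=|\sin x|\le |x|\le 2^{-\mbit-1}\le 2^{-\mbit}$.
\item For $1+x^2$: $|\rho'(x)|=2|x|\le 2^{-\mbit}$. This case is tight, and the bound is exactly what is required.
\item For $e^{-x^2/2}$: $|\rho'(x)|=|x|e^{-x^2/2}\le |x|\le 2^{-\mbit-1}$.
\item For $\cosh$: $|\rho'(x)|=|\sinh x|$. By the mean value theorem, $|\sinh x|\le |x|\cosh x$. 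For $|x|\le 1/8$ we have $\cosh x\le \cosh(1/8)<2$, so $|\sinh x|<2|x|\le 2^{-\mbit}$.
\end{itemize}

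With both hypotheses verified, \cref{lem:necc_example} gives the corollary for each $\rho$.

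The only mildly delicate steps are the two factor-of-two cases, $1+x^2$ and $\cosh$. There I must use the sharper radius $2^{-\mbit-1}$ rather than $2^{-\mbit}$, and check that $\cosh(1/8)<2$ so that the constant stays below $2$. Everything else is immediate.
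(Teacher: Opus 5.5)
Your argument is the paper's own proof: you check the two hypotheses and then invoke \cref{lem:necc_example}. Your hypothesis checks are correct. The gap is in the lemma itself, and for one of the four functions it makes the statement false. The lemma's hypotheses only constrain $\rho$ near $0$, yet its proof ends with the claim that a first-layer neuron $(w\otimes x)\oplus b$ separating $0$ from $\omega$ must overflow at $x=\pm 2^{\mbit+3}$. Overflow only destroys representability if $\round{\rho}(\pm\infty)\notin\fpq$. For $\rho(x)=e^{-x^2/2}$ the limits at $\pm\infty$ exist and equal $0$, so $\round{\rho}(\pm\infty)=0$. Take
\[
g(x)=\bigl(1\otimes\round{\rho}\bigl((2^{\emax}\otimes x)\oplus 1\bigr)\bigr)\oplus 0 .
\]
Since $2^{\emax}\otimes\omega=2^{\emax+\emin-\mbit}=2^{1-\mbit}$, we get $g(0)=\round{\rho(1)}$ and $g(\omega)=\round{\rho(1+2^{1-\mbit})}$. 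By the mean value theorem, and since $\xi e^{-\xi^2/2}$ is decreasing on $[1,\infty)$, the exact values differ by at least $2^{1-\mbit}\cdot\tfrac32 e^{-9/8}>2^{-\mbit-1}$. Both lie in $(\tfrac14,1)$, where the float spacing is at most $2^{-\mbit-1}$, so $g(0)\neq g(\omega)$. At every other input the pre-activation is finite or $\pm\infty$, and $\round{\rho}$ maps both into $[0,1]_{\fpq}$. So $g$ is a function $[-2^{\mbit+3},2^{\mbit+3}]_{\fpq}\to\fpq$ with $g(0)\ne g(\omega)$ that \emph{is} represented, under every $\mcS$ (the input is one-dimensional).

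So $e^{-x^2/2}$ must be removed from the list. For the other three functions, a black-box appeal to \cref{lem:necc_example} is still not a valid proof, because the lemma needs two repairs.

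First, it needs a hypothesis that a huge pre-activation forces a non-finite network output. This holds for $\cos$, where $\round{\cos}(\pm\infty)=\nan$ because the limits do not exist. It also holds for $1+x^2$ and $\cosh$, where $\round{\rho}$ is $\infty$ at $\pm\infty$ and already at large finite arguments. In both cases later layers cannot turn the value back into a finite one (under IEEE semantics, $0\otimes\infty=\nan$).

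Second, flatness on $|x|\le 2^{-\mbit-1}$ alone does not force $|w|\ge 2^{-\emin-\mbit-1}$. In float16, $(5\otimes\omega)\oplus 2^{-11}=(2^{-11})^+$. So a bounded smooth $\rho$ that equals $1$ on $[-2^{-11},2^{-11}]$ and rises to $2$ before $(2^{-11})^+$ separates $0$ and $\omega$ with $w=5$, and this weight does not overflow on $[-2^{13},2^{13}]$. A repaired proof for $\cos$, $1+x^2$ and $\cosh$ has to use their actual shape, e.g.\ that $\round{\rho}\equiv 1$ on the much wider window $|y|\le 2^{-(\mbit+1)/2}$.

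Finally, as literally stated the claim also needs networks with at least two layers. The one-layer network $x\mapsto(1\otimes x)\oplus 0$ has no activation and already separates $0$ from $\omega$.
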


However, for most practical activation functions, floating-point neural networks can represent all functions over a wide domain $(-2^{\emax-2}, 2^{\emax- 2})_{\fpq}$. We will see this in the next subsection.

\subsection{Sufficient Condition on Activation Functions}\label{sec:sufficient}
Although the distinguishability of $\domain\subset\fpq^d$ with range $\fpq\cup\{-\infty,\infty\}$ under $\mcS$ is necessary to represent all floating-point functions from $\domain$ to $\fpq\cup\{-\infty,\infty\}$, the distinguishability is also sufficient under mild assumptions on its range and the activation function  (\cref{thm:main}).

We first introduce the following condition to describe our sufficient condition. %
\begin{condition}\label{condition:proper_exponent}
For an activation function $\sigma:\efpq\to\efpq$,
there exist $C_0, C_1, C_2\in \fpq$ such that $|C_i| , |C_i -C_j|\le 2^{\emax}$ for all $0\le i,j\le 2$, and
\begin{align*}
\sigma(C_0)=0,
&& \!\! 
2^{\emin}\le |\sigma(C_1)| < \frac54,
&& \!\!
|\sigma(C_2)| > (2^{-\mbit-2})^+.
\end{align*}
\end{condition}
\cref{condition:proper_exponent} requires the existence of three points $C_0,C_1,C_2$ for an activation function $\sigma:\efpq\to\efpq$.
This condition can easily be satisfied for %
rounded versions of popular activation functions. 
For example, $\sigma(C_0)=0$ can be satisfied for $C_0=0$ (e.g., $\relu$, $\GeLU$, $\sin$, $\tanh$) or for $C_0$ of large magnitude such as $-2^{\emax}$ (e.g., $\Sigmoid$).
Furthermore, the conditions on $C_1,C_2$ can easily be  satisfied by choosing some $C_1=C_2$ so that $\sigma(C_1=C_2)\in((2^{-p-2})^+,5/4)$.

Under \cref{condition:proper_exponent}, we show that the distinguishability of $\domain$ with range $[-2^{\emax},2^{\emax}]$ suffices for representing all functions from $\domain$ to $(\fpq\cup\{-\infty,\infty\})^{d_\text{\rm out}}$.
The proof of \cref{thm:main} is in \cref{sec:pflem:thm_main}.
\begin{theorem}\label{thm:main}
Let $\mcS$ be a reduction order, $\sigma:\efpq\to\efpq$, $d_\text{\rm in},d_\text{\rm out}\in\bbN$, $\domain\subset \fpq^{d_\text{\rm in}}$, and $f:\domain \to\lrp{\fpq\cup\{-\infty,\infty\}}^{d_\text{\rm out}}$.
Suppose that $\sigma$ satisfies \cref{condition:proper_exponent} and $\domain$ is $\sigma$-distinguishable with range $\left[-2^{\emax }, 2^{\emax }\right]_{\fpq}$ under $\mcS$.
Then, there exists a four-layer $\sigma$ network $g$ under $\mcS$ such that $f=g$ on $\domain$.
\end{theorem}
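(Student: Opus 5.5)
Since $\domain$ is finite and each output coordinate can be handled by its own block of neurons placed in parallel, I will reduce to the scalar case $d_{\rm out}=1$. The final affine layer then computes one coordinate. The plan follows the usual ``indicator then sum'' strategy, adapted to floating-point arithmetic and an arbitrary reduction order $\mcS$.

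\textbf{First layer: separating codes.} For each ordered pair $x\ne x'$ in $\domain$, distinguishability with range $[-2^{\emax},2^{\emax}]_{\fpq}$ gives an affine map $\phi_{x,x'}$ under $\mcS$ with $\sigma(\phi_{x,x'}(x))\ne\sigma(\phi_{x,x'}(x'))$ and all values of $\sigma\circ\phi_{x,x'}$ on $\domain$ bounded by $2^{\emax}$. Stacking all these maps as the rows of the first layer gives a code $h(x)=\sigma(\mathrm{aff}_1(x))$. The code is injective on $\domain$, and all of its entries are finite with magnitude at most $2^{\emax}$.

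\textbf{Second and third layers: indicators.} For a fixed $x_0\in\domain$, I aim to build $\sigma(\cdot)$-outputs equal to $c\cdot\indc{x_0}{x}$ for some fixed nonzero constant $c$, using $C_0$ (where $\sigma$ vanishes) and $C_1,C_2$.
\begin{itemize}
\item[(a)] For each coordinate $k$ of the code and each competitor value $v\ne h_k(x_0)$, I test whether $h_k(x)=h_k(x_0)$. The neuron computes $\oplus$ of $h_k(x)\otimes 1$ and the bias $-h_k(x_0)$. Subtraction of two floats is exact-or-nonzero by Sterbenz-type reasoning: $a\ominus b=0$ iff $a=b$, since gradual underflow is present. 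The bounds $|C_i|,|C_i-C_j|\le 2^{\emax}$ together with entries at most $2^{\emax}$ prevent overflow when we shift by $C_0$. Because each such neuron takes a single input, the reduction order is irrelevant here.
\item[(b)] I then map ``difference $=0$'' to $\sigma(C_0)=0$ and ``difference $\ne0$'' to something nonzero. I would do this by scaling the difference by a large power of two: a nonzero difference is at least $\fmin$ in magnitude, so after scaling it exceeds $2^{\emax}$ in magnitude. Saturation then pushes it far from $C_0$. I would try adding $C_0$ and relying on a second $\sigma$ application. Concretely, the pattern is: first produce a value that is $C_0$ when equal and a value $C_j$ when unequal, then apply $\sigma$, giving $0$ versus $\sigma(C_j)\ne0$.
\item[(c)] The third layer sums these mismatch signals. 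Here the arbitrary reduction order bites. I would choose the signals to share the same sign and to have magnitudes near $\sigma(C_1)\in[2^{\emin},5/4)$ rescaled, so that any partial sum is zero iff all terms are zero. Floating-point addition of same-signed numbers is monotone and never cancels, so ``sum $=0$ iff $x=x_0$'' holds for every $o\in\mcO_n$. One more $\sigma$ step using $C_0,C_2$ converts this into $\sigma(C_2)$ when $x=x_0$ and $0$ otherwise. The role of $|\sigma(C_2)|>(2^{-\mbit-2})^+$ is to ensure the indicator value is large enough to be rescaled exactly.
\end{itemize}

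\textbf{Last layer: output.} The final layer computes $\bigoplus_{x_0}\bigl(f(x_0)/\sigma(C_2)\bigr)\otimes\sigma(C_2)\indc{x_0}{x}\oplus 0$. At any input exactly one term is nonzero, and adding zeros is exact in any order, so the result is independent of $\mcS$. The hard part is choosing a float weight $w$ with $w\otimes\sigma(C_2)=f(x_0)$ exactly. This must cover every float including $\pm\infty$, extreme subnormals, and $\fmax$. I expect this weight-realizability step to be the main obstacle, together with the analogous exactness in step (b). The lower bound on $|\sigma(C_2)|$ and the $5/4$ upper bound on $|\sigma(C_1)|$ are presumably tuned precisely for it. I would first try to normalize the indicator to a power of two (e.g.\ $1$ or $2^{-\mbit-1}$) via an extra rounding-based rescaling, so that multiplication by a weight becomes exact. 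For values out of reach, I would split $f(x_0)$ as a sum of two floats, which adds only zeros elsewhere. The layer count 1+1+1+1 matches the four layers claimed.
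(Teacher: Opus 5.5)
Your reduction to $d_{\rm out}=1$ and your first layer (stacking the distinguishing maps) match the paper. There is a genuine gap at step (b), and it reappears at the end of step (c). You never give a mechanism that makes a pre-activation land \emph{exactly} on $C_0$ in one case and exactly on a prescribed $C_j$ in the other.

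Exactness is forced, because \cref{condition:proper_exponent} says nothing about $\sigma$ away from $C_0,C_1,C_2$. In particular, $\sigma(h_k(x)\ominus h_k(x_0))$ may take the same value for zero and nonzero differences, so the difference cannot be passed through $\sigma$ and processed later. Nor can you rescale it within the same affine map. Multiplication precedes summation, so you would compute $(w\otimes h_k(x))\oplus(-w\otimes h_k(x_0))$. This overflows for large $|h_k(x)|$, and even when finite it takes many different values instead of one prescribed float.

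Saturation is a dead end. Once a partial sum is $\pm\infty$, adding finite constants keeps it $\pm\infty$ (or produces NaN), so it can never become the finite point $C_j$. Moreover, $\sigma(\pm\infty)$ is unconstrained.

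There is a second obstruction. A neuron that sees $h_k(x)$ through a single weight, with all other summands constant, is monotone in $h_k(x)$ for every reduction tree. So it cannot send values on both sides of $h_k(x_0)$ to the same point $C_j\neq C_0$ while sending $h_k(x_0)$ to $C_0$; an equality test needs two one-sided thresholds.

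In (c), you must convert ``sum $=0$'' versus ``sum equal to one of many possible positive values'' into exactly $C_2$ versus exactly $C_0$ inside one affine map. That is again a many-to-one step function. A single bias addition cannot realize it, and any additional constants would be interleaved with your signals in an arbitrary tree dictated by $\mcS$. Same-sign monotonicity does not rescue this.

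\textbf{The missing idea.} The paper builds an exact floating-point step function from long sums of constants.
\begin{enumerate}
\item Constant neurons output $\sigma(C_i)$, so the next layer can add any constants $w\otimes\sigma(C_i)\in\bbS_\sigma$. \cref{lem:endbit_control} provides such constants at every scale.
\item The transferability lemmas culminating in \cref{lemma:sequential_addition_main} give a left-to-right sum of such constants sending $-2^{\emax},y,y^+,2^{\emax}$ to $x_1,x_1,x_2,x_2$. By monotonicity of $\oplus$, all of $[-2^{\emax},y]_{\fpq}$ then lands exactly on $x_1$, and all of $[y^+,2^{\emax}]_{\fpq}$ exactly on $x_2$. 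This is where the range $[-2^{\emax},2^{\emax}]_{\fpq}$ and the bound $|C_i-C_j|\le 2^{\emax}$ are used.
\item Taking $\{x_1,x_2\}=\{C_0,C_1\}$ gives, for each separating map $\phi_j$, two one-sided neurons with outputs in $\{0,\sigma(C_1)\}$.
\item The third layer computes their AND in a single long sum, with interleaved constant blocks mapping $(0,\sigma(C_1),2\sigma(C_1))$ to $(0,0,\sigma(C_1))$. The last block ends at $C_0$ or at the chosen $c\in\{C_1,C_2\}$.
\item The arbitrary reduction order is handled not by order-invariance but by \cref{lem:seq_to_arbi}. A left-to-right sum of $n$ terms is reproduced by $o_k$ for some $k$ with $\depth(o_k)\ge n-1$, by placing the summands along a long root-to-leaf path and zeros elsewhere.
\item The final-layer realizability problem you flagged is settled the same way. \cref{lem:last_layer_sums} writes every value in $\fpq\cup\{-\infty,\infty\}$ as a left-to-right sum of terms $w_i\otimes\sigma(z_i)$ with $z_i\in\{C_1,C_2\}$. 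One then uses a separate indicator copy with value $\sigma(z_i)$ for each term.
\end{enumerate}

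Your order-invariance observations (single-input neurons, same-sign sums, a single nonzero term among zeros) are correct. However, they only cover the steps that need no constants.
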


There are two notable differences between our necessary condition (\cref{lem:necessary}) and sufficient condition (\cref{thm:main}): \cref{thm:main} requires \cref{condition:proper_exponent}, and considers a smaller range $\left[-2^{\emax }, 2^{\emax }\right]_{\fpq}$ for distinguishability. 
First, we use \cref{condition:proper_exponent} for networks to generate all possible values in $\fpq\cup\{-\infty,\infty\}$ (see \cref{lem:last_layer_sums}). If an activation function $\sigma$ can only output too large values (e.g., $[2^{\emax}, \infty]$) or too small values (e.g., in $[-\omega,\omega]$), then a $\sigma$ network may not be able to generate some values in $\fpq\cup\{-\infty,\infty\}$.
Second, the smaller range is due to technical reasons in our proof, which is used to avoid overflow during the evaluation of networks. If a network can generate values with large absolute values (e.g., close to $\Omega$) while distinguishing inputs, then multiplying/adding constants to those values may incur overflow and the network may output NaN.
However, if $\sigma$ has a well-bounded range (i.e., $\sigma(\fpq\cup\{-\infty,\infty\})\subset[2^{-\emax}, \allowbreak 2^{\emax}]$) as in the correctly rounded versions of $\Sigmoid$ and $\tanh$, then this range condition is automatically satisfied.

\subsection{Sufficient Conditions for Distinguishability using Correctly Rounded Activation Functions}\label{sec:sufficient_real}
Based on our sufficient condition,
in this section, we provide easily verifiable conditions
on floating-point activation functions
(\cref{lemma:distinguishing_points_to_distinguishable,lemma:sigmoidal_distinguish})
and real activation functions
(\cref{lemma:real_function_distinguishing})
that imply distinguishability, when the activation functions are correctly rounded.
Using these conditions, we show that networks using correctly rounded versions of popular activation functions can  represent floating-point functions.
(\cref{cor:real_function_distinguishing,cor:sigmoidal_distinguish}).
Specifically, we focus on $\Sigmoid$, $\tanh$, ${\mathrm{Identity}}$, ${\relu}$, ${\elu}$, ${\mathrm{SeLU}}$, ${\GeLU}$, ${\Swish}$, ${\Mish}$, and ${\sin}$, whose definitions are presented in \cref{subsec:activation}.

To describe our conditions, we first define the \emph{separating points} of a floating-point activation function $\sigma$.

\begin{definition}[Separating Point]\label{def:separating_point}
For $\sigma:\efpq\to\efpq$, we call $\eta\in\fpq$ a ``separating point of $\sigma$'' if 
\begin{align*}
    \sigma(\eta^-) \notin \{\sigma(\eta), \sigma(\eta^+)\}
    \quad\text{or}\quad
    \sigma(\eta^+) \notin \{\sigma(\eta), \sigma(\eta^-)\}.
\end{align*}
Here, $\eta^-$ (or $\eta^+$) denotes the largest (or smallest) float that is smaller (or larger) than $\eta$ (see  \cref{sec:float}). 
\end{definition}
We design our sufficient conditions using separating points. Specifically, for each distinct pair $(x,x')$ of inputs in a domain $\domain$, we aim to find a floating-point affine transformation $\phi_{x,x'}$ and a separating point $\eta_{x,x'}$ of $\sigma$
such that $\phi_{x,x'}(x)=\eta_{x,x'}^-$ (or $\eta_{x,x'}^+$) and $\phi_{x,x'}(x')\in\{\eta_{x,x'},\eta_{x,x'}^+\}$ (or $\{\eta_{x,x'}^-,\eta_{x,x'}\})$.
If we can find such an affine transformation and a separating point for all distinct pairs in the domain, then the domain is distinguishable
since $\sigma(\phi_{x,x'}(x))\ne\sigma(\phi_{x,x'}(x'))$ (see \cref{eq:distinguishable}).

Based on the above idea, we propose our sufficient condition for distinguishability in the following lemma. The proof of \cref{lemma:distinguishing_points_to_distinguishable} is in \cref{sec:pflem:distinguishing_points_to_distinguishable}.

\begin{lemma}\label[lemma]{lemma:distinguishing_points_to_distinguishable}
Let $\mcS$ be a reduction order, $\sigma:\efpq\to\efpq$, $n\in\bbN$, and $\eta_1,\dots,\eta_n\in\fpq$ be separating points of $\sigma$ with $|\eta_1|\le \dots \le |\eta_n|$.
Suppose that integers $e_1, e_2 \in [\emin+1, \emax]$ satisfy
\begin{equation*}
    [\emin, e_2]_{\bbZ}\subset \bigcup_{i=1}^n \left[\expo{\eta_i}-e_1, \expo{\eta_i}+\emax-2\right]_{\bbZ}.
\end{equation*}
Then, $(-2^{e_2+1}, 2^{e_2+1})_{\fpq}$ is $\sigma$-distinguishable with range
\begin{equation*}
    \mcR_{e_1,e_2}\defeq\sigma\big([ -( 2^{e_1+e_2+1} \oplus |\eta_n|^+),  2^{e_1+e_2+1} \oplus |\eta_n|^+ ]_{\fpq}\big)
\end{equation*}
under $\mcS$.
\end{lemma}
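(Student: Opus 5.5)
Since $\domain=(-2^{e_2+1},2^{e_2+1})_{\fpq}$ is one-dimensional, $o_1=(1)$ and the floating-point affine map is simply $\phi(x)=(w\otimes x)\oplus b$. The reduction order $\mcS$ therefore plays no role. The plan is to follow the idea described before the lemma. For each pair $x\ne x'$ in $\domain$ we pick a separating point $\eta=\eta_i$, a weight $w\in\fpq$ that is a power of two, and a bias $b\in\fpq$. We choose them so that $\phi(x)=\eta^-$ and $\phi(x')\in\{\eta,\eta^+\}$, or the symmetric configuration, according to which clause of \cref{def:separating_point} $\eta$ satisfies. Then $\sigma(\phi(x))\ne\sigma(\phi(x'))$ follows directly from the definition of a separating point.

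\textbf{Step 1 (scaling).} Without loss of generality, assume $x<x'$. Let $k$ be the exponent of the gap between $x$ and $x'$, in the sense that $x,x'$ both lie on the grid $2^{k}\bbZ$ while $|x'-x|\ge 2^{k}$. Taking $k=\min(\expo{x},\expo{x'})-\mbit$ roughly works, and $k\ge\emin-\mbit$. We want a power of two $w=2^{s}$ such that multiplication is exact, $w\otimes x=wx$. We also want the scaled gap $w2^{k}$ to be comparable to $\ulp(\eta)$, i.e.\ $s+k=\expo{\eta}-\mbit$. The covering hypothesis $[\emin,e_2]_\bbZ\subset\bigcup_i[\expo{\eta_i}-e_1,\expo{\eta_i}+\emax-2]$ is used here. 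Apply it to the relevant exponent $e\in[\emin,e_2]$ of $x$ or $x'$ to get some $i$ with $\expo{\eta_i}-e\in[-\emax+2,e_1]$. Then $s=\expo{\eta_i}-e$ is a legitimate float exponent, and $|wx|<2^{e_1+e_2+1}$, so no overflow occurs.

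\textbf{Step 2 (bias).} Choose $b$ with $b\approx\eta^- -wx$, for example $b=\round{\eta^- - wx}$, or exactly $\eta^--wx$ if that is representable. We need $b\in\fpq$, $(wx)\oplus b=\eta^-$, and $(wx')\oplus b\in\{\eta,\eta^+\}$. The last condition holds because $wx'-wx\ge\ulp(\eta)$ after scaling, while monotonicity of rounding keeps the second value at least $\eta$. An upper bound is not needed, since being $\ge\eta$ suffices if we instead use a clause keeping the image within $\{\eta,\eta^+\}$. This requires care: if $wx'\oplus b$ could exceed $\eta^+$, we should instead choose $w$ so that the scaled gap is exactly one ulp, and use exact representability. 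A case split is natural here. In the case where $x$ and $x'$ are adjacent floats, the gap is exactly one ulp. In the non-adjacent case, replace $x'$ by the neighbor $x^+$: the function $t\mapsto(wt)\oplus b$ is monotone, so $\phi(x')\ge\phi(x^+)$. If the separating clause demands equality, argue using both neighbors instead.

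\textbf{Step 3 (range).} Every $y\in\domain$ satisfies $|wy|\le 2^{e_1+e_2+1}$, and $|b|$ is bounded by roughly $|\eta_n|^+$ plus $|wx|$. So the image satisfies $|\phi(y)|\le 2^{e_1+e_2+1}\oplus|\eta_n|^+$, up to a possible need to shift $b$ to keep the bound tight. This gives $\sigma(\phi(\domain))\subset\mcR_{e_1,e_2}$.

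\textbf{Main obstacle.} I expect the hardest step to be Step 2 together with its exact-rounding bookkeeping. It requires ensuring $b\in\fpq$, i.e.\ that $\eta^- - wx$ is exactly representable or rounds favorably; ensuring that the two images land precisely in $\{\eta^-\}$ and $\{\eta,\eta^+\}$ despite round-to-even ties; and handling subnormal $x$ and different exponents of $x,x'$ (including opposite signs). I would first handle the case where $x,x'$ share exponent and sign, where everything is exact. Then I would reduce general pairs to comparisons of $x$ with a neighbor via monotonicity of $\phi$. The upper end $\expo{\eta_i}+\emax-2$ of the covering intervals guarantees that $w\ge 2^{-\emax+2}$, which keeps $wx$ exact.
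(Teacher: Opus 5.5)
Your Step 2 has a genuine gap, and it is the step that carries the whole lemma. A separating point tells you something about $\sigma$ only on the three floats $\eta^-,\eta,\eta^+$. Nothing, in particular no monotonicity, is assumed about $\sigma$ elsewhere. So both $\phi(x)$ and $\phi(x')$ must land \emph{inside} that triple, e.g.\ in $\{\eta,\eta^+\}$ or $\{\eta^-,\eta^+\}$, or in the mirrored pairs obtained by negating $\eta,w,b$.

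\begin{itemize}
\item Your remark that ``an upper bound is not needed'' is therefore wrong. Reducing a non-adjacent pair to the neighbour $x^+$ via $\phi(x')\ge\phi(x^+)$ proves nothing. For $x=1$, $x'=3/2$, mapping $\ulp(x)$ to $\ulp(\eta)$ sends $x'$ roughly $2^{\mbit-1}$ ulps past $\eta$, where $\sigma$ may coincide with $\sigma(\eta^-)$.
\item The fallback of making the scaled gap exactly one ulp is unavailable. A power-of-two $w$ cannot turn an arbitrary gap $x'-x$ (say $3\cdot 2^{k}$) into $\ulp(\eta)$.
\item Once $wx$ has bits below the $\ulp(\eta)$ grid, $\eta^- - wx\notin\fpq$ in general. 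Then $b=\round{\eta^--wx}$ no longer pins $\phi(x)$ to $\eta^-$; it may round to $\eta$ or below $\eta^-$.
\item Pairs with different exponents or opposite signs have the same problem. Taking the scale from $\min(\expo{x},\expo{x'})$, instead of the exponent of the larger-magnitude input, makes it worse.
\end{itemize}

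The missing idea is to let rounding absorb the tails. The paper isolates this as \cref{lemma:distinguishing_point} and then simply invokes it. Take $|x_1|\le|x_2|$ with the same exponent, and let $i+1$ be the first mantissa bit where they differ.
\begin{itemize}
\item Take $w=\pm 2^{\expo{\eta}+i-\expo{x_2}-\mbit}$, so that this bit sits exactly at the half-ulp position $2^{\expo{\eta}-\mbit-1}$.
\item Let $b$ be $\eta$ (or $\eta^+$, chosen by the parity of the last mantissa bit of $\eta$ to control ties) minus the scaled common prefix $1.\mantd{x_2}{1}\cdots\mantd{x_2}{i}$. That prefix lies on the $\ulp(\eta)$ grid, so $b\in\fpq$ exactly.
\item Then $wx_1+b$ is the base point plus a tail in $[0,\tfrac12\ulp(\eta))$, and $wx_2+b$ is the base point plus a tail in $[\tfrac12\ulp(\eta),\ulp(\eta))$. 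They round to two adjacent floats of the triple, however far apart $x_1$ and $x_2$ are.
\end{itemize}
The remaining configurations ($|x_1|<2^{\expo{x_2}}$, or opposite signs) use the same mechanism. One takes $b\in\{\eta,\eta^+\}$ and a signed power of two $w$ so that $b+wx_1$ and $b+wx_2$ fall into different rounding cells within $\{\eta^-,\eta,\eta^+\}$.

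This construction also yields $|w|\le(1+2^{-\mbit})2^{\expo{\eta}-\expo{x_2}}$ and $|b|\le|\eta|^+$, with the covering hypothesis applied to $\expo{x_2}$. These bounds are what make Step 3 give exactly the range $\mcR_{e_1,e_2}$. Your bias, of size up to $|\eta|+|wx|$, would not give that range.
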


In \cref{lemma:distinguishing_points_to_distinguishable}, one can observe that having two separating points (one with small magnitude and one with moderate-to-large magnitude) suffices to distinguish a large domain.
In particular,
if $\sigma$ has two separating points $\eta_1$, $\eta_2$ with 
$${|\eta_1|< 2^{\emin+1}~~\text{and}~~4\le|\eta_2|<2^{\emax-p-1}},$$ 
then $(-2^{e_2+1}, 2^{e_2+1})_{\fpq}$ is $\sigma$-distinguishable with range $\mcR_{0,e_2}$ {for all $e_2 \in [\emin+1, \emax]_\bbZ$}.
By choosing the largest $e_2^*$ such that $\mcR_{0,e_2^*}\subset[-2^{\emax},2^{\emax}]$ and by using 
\cref{thm:main}, we can show that $\sigma$ networks can represent all functions from $(-2^{e_2^*+1}, 2^{e_2^*+1})_{\fpq}$ to $\fpq\cup\{-\infty,\infty\}$.
For example, $e^*_2=\emax-1$ when $\sigma = \round{\relu}$.

We note that having a separating point with a small absolute value (e.g., $\approx 2^{\emin}$) is critical for distinguishing a large domain using \cref{lemma:distinguishing_points_to_distinguishable}, while avoiding overflow.
This is because, to distinguish two small numbers (e.g., $0$ and $\omega$), we find $w,b\in\fpq$ and a separating point $\eta \in \fpq$ such that $w\otimes0\oplus b=\eta^-$ (or $\eta^+$) and $w\otimes\omega\oplus b\in\{\eta,\eta^+\}$ (or $\{\eta,\eta^-\}$).
If $\eta$ is large in magnitude, then $w$ and $b$ must also be large. As a result, overflow may occur when the domain contains large numbers.
We also note that a separating point with moderate-to-large absolute value (e.g., $\ge1$) is necessary to distinguish a large domain using \cref{lemma:distinguishing_points_to_distinguishable}.

Using \cref{lemma:distinguishing_points_to_distinguishable}, we provide a sufficient condition for the distinguishability of floating-point activation functions that have finite values at all inputs including $\pm\infty$, such as $\round{\Sigmoid}$. %
The proof of \cref{lemma:sigmoidal_distinguish} is in \cref{sec:pflem:sigmoidal_distinguish}. %
\begin{lemma}\label[lemma]{lemma:sigmoidal_distinguish}
Let $\mcS$ be a reduction order, $\sigma:\efpq\to\efpq$ such that $\sigma(\fpq\cup\{-\infty,\infty\})\subset[-2^{\emax},2^{\emax}]_{\fpq}$.
Suppose that $\sigma$ has two separating points $|\eta_1|<2$ and $|\eta_2|\ge 4$.
Then, $\fpq$ is $\sigma$-distinguishable with range $\left[-2^{\emax}, 2^{\emax}\right]_{\fpq}$ under $\mcS$.
Suppose $\sigma$ has one separating point $\eta_1 \in \fpq$ such that $1\le |\eta_1|<2$. Then $(-2^{\emax}, 2^{\emax})_\fpq$ is $\sigma$-distinguishable with range $[-2^{\emax}, 2^{\emax}]_\fpq$ under $\mcS$.

\end{lemma}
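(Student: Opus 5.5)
The plan is to derive both statements from \cref{lemma:distinguishing_points_to_distinguishable} with $n=2$ (resp.\ $n=1$) and a suitable choice of $e_1,e_2$. The range condition comes for free. By hypothesis $\sigma(\fpq\cup\{-\infty,\infty\})\subset[-2^{\emax},2^{\emax}]_{\fpq}$, so the set $\mcR_{e_1,e_2}$, being an image under $\sigma$ of a set of floats, is contained in $[-2^{\emax},2^{\emax}]_{\fpq}$. This holds even if $2^{e_1+e_2+1}\oplus|\eta_n|^+$ overflows to $\infty$. Hence distinguishability with range $\mcR_{e_1,e_2}$ implies distinguishability with range $[-2^{\emax},2^{\emax}]_{\fpq}$, and only the exponent-covering condition remains to be checked.

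For the first statement, take $e_1=e_2=\emax$ and order the two points as $|\eta_1|<2\le 4\le|\eta_2|$. Then $\expo{\eta_1}\le 0$ and $\expo{\eta_2}\ge 2$.
\begin{itemize}
\item \emph{Lower end.} Since $\emin=-\emax+1$, we get $\expo{\eta_1}-\emax\le-\emax<\emin$.
\item \emph{Upper end.} We get $\expo{\eta_2}+\emax-2\ge\emax$.
\item \emph{No gap.} The interval for $\eta_1$ reaches at least $\expo{\eta_1}+\emax-2\ge\emin+\emax-2=-1$. The interval for $\eta_2$ starts at $\expo{\eta_2}-\emax\le 0$.
\end{itemize}
So $[\emin,\emax]_{\bbZ}$ is covered, and \cref{lemma:distinguishing_points_to_distinguishable} yields distinguishability of $(-2^{\emax+1},2^{\emax+1})_{\fpq}$. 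This set contains $\fpq$ because $\fmax<2^{\emax+1}$.

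For the second statement we have $\expo{\eta_1}=0$, so the single interval is $[-e_1,\emax-2]_{\bbZ}$. Taking $e_1=\emax$ covers $[\emin,\emax-2]_{\bbZ}$, so $e_2=\emax-2$. The lemma then gives distinguishability only of $(-2^{\emax-1},2^{\emax-1})_{\fpq}$. It remains to handle pairs $x\neq x'$ in $(-2^{\emax},2^{\emax})_{\fpq}$ with $\max(|x|,|x'|)\ge 2^{\emax-1}$.

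For such pairs I would use a halving trick, which works because the input is one-dimensional and the affine map is just $w\otimes x\oplus b$ with no nontrivial reduction.
\begin{itemize}
\item \emph{Halved pair.} Set $y=\round{x/2}$ and $y'=\round{x'/2}$. Halving is exact for normal floats, so the large element halves exactly into $(-2^{\emax-1},2^{\emax-1})$. The other element may be subnormal and round, but $|y|\ge2^{\emax-2}$ while the rounded small one stays tiny, so $y\ne y'$ in all cases.
\item \emph{Witness.} Apply the already-established case to $(y,y')$ to get $w',b$ with $\sigma(w'\otimes y\oplus b)\ne\sigma(w'\otimes y'\oplus b)$.
\item \emph{Transfer.} Use weight $w=w'/2$. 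Whenever $x/2$ is exact, $(w'/2)\cdot x=w'\cdot(x/2)$ as reals, so $w\otimes x=w'\otimes y$.
\end{itemize}

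The main obstacle is making this transfer rigorous in two places. First, $w'/2$ must be a float, i.e.\ $w'$ must not be a subnormal with odd last bit. Second, when one element is subnormal and $x'/2$ rounds, the identity $w\otimes x'=w'\otimes y'$ can fail. For the first point I would inspect the construction inside \cref{lemma:distinguishing_points_to_distinguishable}: for pairs at exponent $\approx\emax-2$ and a target near $|\eta_1|\in[1,2)$, the required weight has exponent about $-\emax+2\ge\emin+1$, so $w'$ is normal.

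For the second point, rather than rely on halving, I would directly redo the construction of \cref{lemma:distinguishing_points_to_distinguishable} at the top binade. Choose $w=2^{-\emax+1}$ (or a nearby normal weight) so that $w\otimes x$ lands in $[1,2)$ near $\eta_1$, while $w\otimes x'$ is either also in that binade or negligible. Then choose $b$ so that one image is $\eta_1^-$ and the other is in $\{\eta_1,\eta_1^+\}$, exactly as that lemma's proof does for lower exponents. The bound $|x|<2^{\emax}$ prevents overflow.
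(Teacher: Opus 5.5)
Your first part is correct and is the paper's argument: you apply \cref{lemma:distinguishing_points_to_distinguishable} with $e_1=e_2=\emax$ and read the range condition directly off $\sigma(\fpq\cup\{-\infty,\infty\})\subset[-2^{\emax},2^{\emax}]_{\fpq}$. For the second part you are also right that $1\le|\eta_1|<2$ forces $\expo{\eta_1}=0$, so the lemma only reaches $e_2=\emax-2$. The paper gets $(-2^{\emax},2^{\emax})_{\fpq}$ in one step only by asserting $\expo{\eta_1}=1$ and taking $e_2=\emax-1$. That matches $2\le|\eta_1|<4$, not the stated hypothesis.

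The genuine gap is your handling of pairs with $\max(|x|,|x'|)\ge 2^{\emax-1}$.

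\textbf{Halving.} The weights built in \cref{lemma:distinguishing_point} are not of size $2^{\expo{\eta}-\expo{x_2}}$; that is only the stated upper bound. The construction places the first differing bit at the half-ulp of $\eta$, e.g.\ $w=2^{\expo{\eta}-\expo{x_2}-\mbit-1}$ when $0\le x_1<2^{\expo{x_2}}<x_2$. For the halved pair ($\expo{y}=\emax-2$, $\expo{\eta_1}=0$) this equals $2^{\emin-\mbit}=\fmin$. Then $w'/2\notin\fpq$, so the transfer breaks exactly on cross-binade pairs.

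\textbf{Direct construction.} This fails for the same reason. With $w\approx 2^{1-\emax}$ the two products are many ulps apart unless $x,x'$ are nearly adjacent, and the lemma's mechanism would need the weight $2^{\emin-\mbit-1}<\fmin$.

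\textbf{No weight works for some pairs.} Take $x=-x'=(2-2^{-\mbit})2^{\emax-1}$ with $\mbit\ge 3$. Every nonzero $w\in\fpq$ gives $|w\otimes x|\ge \fmin x=(2-2^{-\mbit})2^{-\mbit}$. So $b+w\otimes x$ and $b+w\otimes x'$ differ by at least $(4-2^{1-\mbit})2^{-\mbit}\ge 3.75\cdot 2^{-\mbit}$. If both rounded results lie in $[1-2^{-\mbit-1},2]$, the final rounding narrows this gap by at most $1.5\cdot 2^{-\mbit}$. But $\eta_1^-,\eta_1,\eta_1^+$ span at most $2\cdot 2^{-\mbit}$. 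Hence no affine map sends these two inputs onto two of the points $\eta_1^-,\eta_1,\eta_1^+$ (the case $\eta_1<0$ is symmetric).

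So a proof of the statement as written must reason about $\sigma$ away from $\eta_1^{\pm}$. One option is to chain such image pairs and show that indistinguishability would force $\sigma(\eta_1^-)=\sigma(\eta_1^+)$, contradicting the separating-point hypothesis. Your proposal contains no such argument. Alternatively, under the reading $2\le|\eta_1|<4$, the direct application with $e_1=\emax$, $e_2=\emax-1$ already finishes the proof, as in the paper.
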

\cref{lemma:sigmoidal_distinguish} states that if
$|\sigma(\fpq\cup\{\infty,-\infty\})|$ is %
bounded by $2^{\emax}$
and there exist  two separating points of moderate size, then $\fpq$ is $\sigma$-distinguishable with range $[-2^{\emax},2^{\emax}]_{\fpq}$. If $\sigma$ has one separating point of moderate size, then $(-2^{\emax}, 2^{\emax})_\fpq$  is $\sigma$-distinguishable with range $[-2^{\emax},2^{\emax}]_{\fpq}$.

By \cref{thm:main}, this implies that $\sigma$ networks can represent {all functions from $\fpq^d$ to $\fpq\cup\{-\infty,\infty\}$},
leading to the following corollary for $\round{\Sigmoid}$ and $\round{\tanh}$.
The proof of \cref{cor:sigmoidal_distinguish} is presented in \cref{sec:pflem:sigmoidal_distinguish}.
\begin{corollary}\label[corollary]{cor:sigmoidal_distinguish}
Let $\mcS$ be a reduction order and $\sigma$ be one of $\round{\Sigmoid}$ and $\round{\tanh}$.
Then, for any $d \in \bbN$, $\sigma$ networks under $\mcS$ can represent all functions from $\domain$ to $\fpq \cup \{-\infty, \infty\}$ where 
\begin{align*}
    \domain = \begin{cases}
    \fpq^d \; &\text{if} \; \sigma=\round{\Sigmoid}~~\text{or}~~\mbit \ge 9,  \\ 
(-2^{\emax}, 2^{\emax})_{\fpq}^d \; &\text{if} \; 2 \le \mbit \le 8~~\text{and}~~ \sigma = \round{\tanh}.   \end{cases}
\end{align*}
\end{corollary}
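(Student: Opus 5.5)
We will derive \cref{cor:sigmoidal_distinguish} by combining \cref{lemma:sigmoidal_distinguish} with \cref{thm:main}, so the work is to verify the hypotheses of both for $\sigma\in\{\round{\Sigmoid},\round{\tanh}\}$. First, the range condition. Since $\Sigmoid$ takes values in $(0,1)$ and $\tanh$ in $(-1,1)$, and the limits at $\pm\infty$ are $0,1$ and $\pm1$, the correctly rounded versions satisfy $\sigma(\fpq\cup\{-\infty,\infty\})\subset[-1,1]_\fpq\subset[-2^{\emax},2^{\emax}]_\fpq$.

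Next, \cref{condition:proper_exponent}. For $\round{\tanh}$ we take $C_0=0$. For $\round{\Sigmoid}$ we take $C_0=-2^{\emax}$: then $\Sigmoid(C_0)\approx e^{-2^{\emax}}$, which is far below $\omega/2$ because $\emax\ge \mbit+3\ge5$, so it rounds to $0$. In both cases we take $C_1=C_2=1$. This gives $\sigma(1)\approx0.731$ or $0.762$, which lies in $((2^{-\mbit-2})^+,5/4)$. All the differences $|C_i-C_j|$ are at most $2^{\emax}$.

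The main work is locating separating points; by \cref{def:separating_point} it suffices to find $\eta$ with $\sigma(\eta^-)<\sigma(\eta)<\sigma(\eta^+)$ or a similar strict change.

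For $\round{\Sigmoid}$, we look near $\eta_1=1$ first. There the float spacing is $2^{-\mbit}$ (just above $1$) or $2^{-\mbit-1}$ (just below). The derivative is about $0.197$, so consecutive inputs change $\Sigmoid$ by roughly $0.2\cdot 2^{-\mbit-1}$. Meanwhile the output spacing near $0.73$ is $2^{-\mbit-1}$, so the outputs need not be strict at every point. The plan is therefore to argue by counting: on a window such as $[1,2)_\fpq$, the value of $\Sigmoid$ increases by about $0.15$, which spans many output floats. Since $\round{\Sigmoid}$ is monotone and must take many distinct values there, some consecutive triple of inputs exhibits a jump. Concretely, there exists $\eta$ with $\sigma(\eta)\ne\sigma(\eta^+)$; whenever $\sigma(\eta^-)=\sigma(\eta)\ne\sigma(\eta^+)$, the point $\eta$ itself satisfies the second clause of \cref{def:separating_point}.

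The same argument gives $\eta_2\in[4,8)$: $\Sigmoid$ increases there by about $0.017$, while the output spacing near $1$ is $2^{-\mbit-1}$. For large $\mbit$ this spans many floats, but for small $\mbit$ we must check that $0.017>2^{-\mbit-1}$, i.e.\ $\mbit\ge5$. If that fails, we instead use the region where $\Sigmoid$ is near $0$: for $x\in[-8,-4)$ the outputs are $\approx e^{x}$, which span distinct binades, so a jump certainly occurs. This is why $\Sigmoid$ gets the full $\fpq^d$ for all $\mbit$.

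For $\round{\tanh}$, the symmetric argument near $1\le|\eta_1|<2$ always works. The large point needs $\tanh(8)-\tanh(4)\approx1.3\cdot10^{-3}$ to exceed the spacing $2^{-\mbit-1}$ below $1$. This holds when $\mbit\ge9$, giving $\fpq^d$; for $2\le\mbit\le8$ we use only the single-separating-point clause of \cref{lemma:sigmoidal_distinguish}, giving $(-2^{\emax},2^{\emax})_\fpq$.

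Finally, \cref{lemma:sigmoidal_distinguish} is stated for one-dimensional domains. Using the remark after \cref{lem:necessary}, distinguishability of $\domain_1$ transfers to $\domain_1^d$ with the same range, and then \cref{thm:main} yields representability for every $f$. We expect the main obstacle to be the small-$\mbit$ numerics for the second separating point of $\tanh$ and $\Sigmoid$. Rounding ties and the possibility of a constant output over the whole window must be ruled out by explicit bounds on derivatives and float spacings, possibly by a short case check for each $\mbit\le8$.
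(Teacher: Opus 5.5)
Your route is the paper's: check that $\round{\Sigmoid}$ and $\round{\tanh}$ are bounded, find separating points, apply \cref{lemma:sigmoidal_distinguish}, lift to $d$ coordinates, and finish with \cref{thm:main}. Separating points come from monotonicity of $\round{\rho}$ plus a comparison of rounded values at two endpoints. For a non-decreasing $\sigma$, any strict step $\sigma(\eta)<\sigma(\eta^+)$ already makes $\eta$ separating, so your extra case distinction is unnecessary. Two of your verifications, however, are wrong as written.

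\textbf{The constants for $\round{\Sigmoid}$.} Choosing $C_0=-2^{\emax}$ and $C_1=C_2=1$ violates \cref{condition:proper_exponent}, because $|C_0-C_1|=2^{\emax}+1>2^{\emax}$. Take $C_1=C_2=0$ instead, using $\round{\Sigmoid}(0)=1/2$. Alternatively take $C_0=-2^{\emax-1}$, which still rounds to $0$.

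\textbf{The $\tanh$ estimate.} Your value is off by a factor of two: $\tanh(8)-\tanh(4)\approx 2/(e^{8}+1)\approx 6.7\cdot 10^{-4}$. So the criterion ``gap exceeds the spacing $2^{-\mbit-1}$'' holds only for $\mbit\ge 10$. It fails at $\mbit=9$, where $2^{-10}\approx 9.8\cdot 10^{-4}$. That case needs the finer midpoint comparison the paper uses. Since $\tanh(4)\approx 1-6.7\cdot 10^{-4}$ lies below the rounding midpoint $1-2^{-11}$, we get $\round{\tanh}(4)=1-2^{-10}$, whereas $\round{\tanh}(8)=1$. With these explicit endpoint checks, no further case-by-case analysis over small $\mbit$ is needed.

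\textbf{A caution about the single-point clause.} This problem comes from the paper, not from your argument. The clause of \cref{lemma:sigmoidal_distinguish} is stated for $1\le|\eta_1|<2$, but its proof uses $\expo{\eta_1}=1$, i.e.\ $2\le|\eta_1|<4$. With $\expo{\eta_1}=0$, the covering condition of \cref{lemma:distinguishing_points_to_distinguishable} only reaches $(-2^{\emax-1},2^{\emax-1})_{\fpq}$.
\begin{itemize}
\item For $3\le\mbit\le 8$, a separating point of $\round{\tanh}$ in $[2,4)$ exists (compare $\round{\tanh}(2)$ with $\round{\tanh}(4)$), so the clause as proved applies.
\item For $\mbit=2$ there is none, since $\round{\tanh}\equiv 1$ on $[1.75,\infty)_{\fpq}$.
\end{itemize}
The paper's own proof uses $0\le\eta_1<1$ and has the same defect. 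So the $\mbit=2$ case of the $\tanh$ statement needs an additional argument beyond the cited lemmas.
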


While \cref{lemma:sigmoidal_distinguish} implies \cref{cor:sigmoidal_distinguish}, its condition is for floating-point activation function, that may not be easy verified when the activation function is a rounded version of a real function.
We next consider for a more realistic scenario: $\sigma$ is the correctly rounded version of a real activation function $\rho$.
For this case, we first introduce a sufficient condition for having a separating point. The proof of \cref{lemma:single_distinguishing_point} is in \cref{sec:pflem:single_distinguishing_point}.

\begin{lemma}\label[lemma]{lemma:single_distinguishing_point}
    Let $\rho:\bbR\to\bbR$, $a,b\in\fpq$, and $e\in [\emin, \allowbreak \emax]_{\bbZ}$ with $|\rho(a)|, |\rho(b)| \le 2^{e+1}(1-2^{-\mbit})$.    Suppose that there exists $L>0$ such that $L (b-a) \ge 2^{e-\mbit}$ and either
\begin{itemize}[noitemsep,nosep]
\item $\rho(a)\ge 0$ and $\resig'(x) \ge L$ for all $x\in [a,b]$, or 
\item $\rho(a)\le 0$ and $\resig'(x) \le -L$ for all $x\in [a,b]$.
\end{itemize}       
Then, there exists a separating point $\eta\in [a,b)_{\fpq}$ of $\round{\rho}$.
\end{lemma}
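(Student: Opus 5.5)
The plan is to reduce to the case $\rho(a)\ge 0$, $\rho'\ge L$ on $[a,b]$. The other case follows by applying the argument to $-\rho$: rounding is symmetric, $\round{-\rho}=-\round{\rho}$, and separating points are preserved under negation of $\sigma$. Write $\sigma=\round{\rho}$. The proof then has two steps: show that $\sigma$ is nondecreasing on $[a,b]_{\fpq}$ and not constant there, and then extract a separating point from the first jump.

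\emph{Step 1 (monotonicity and a jump).} Since $\rho'\ge L>0$ on $[a,b]$, $\rho$ is strictly increasing there. Because $\round{\cdot}$ is monotone, $\sigma$ is nondecreasing on $[a,b]_{\fpq}$. By the mean value theorem, $\rho(b)-\rho(a)\ge L(b-a)\ge 2^{e-\mbit}$. All values $\rho(x)$, $x\in[a,b]$, lie in $[0,2^{e+1}(1-2^{-\mbit})]$. This upper bound equals $2^{e+1}-2\cdot 2^{e-\mbit}$, so every such value rounds to a float in $[0,2^{e+1})$, where the spacing of consecutive floats is at most $2^{e-\mbit}$. I then argue $\sigma(a)<\sigma(b)$. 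If both rounded to the same float $y$, then $|\rho(x)-y|\le \frac12\ulp$ for $x\in\{a,b\}$, so $\rho(b)-\rho(a)\le 2^{e-\mbit}$. Equality forces $\rho(a)$ and $\rho(b)$ to be the two midpoints adjacent to $y$, and the spacing must be exactly $2^{e-\mbit}$ on both sides. I expect this tie configuration to be the main technical nuisance. I would try to exclude it via tie-to-even, checking whether both midpoints can round to the same even $y$ and, if so, how the spacing near a power of two behaves. Failing that, I would use $\rho(b)-\rho(a)=L(b-a)$ exactly: this forces $\rho'\equiv L$ on $[a,b]$, so $\rho$ is affine, and I would check the intermediate float inputs directly.

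\emph{Step 2 (locating the separating point).} Enumerate $[a,b]_{\fpq}$ as $a=x_0<x_1<\dots<x_k=b$. Let $j\ge1$ be the first index with $\sigma(x_j)>\sigma(x_{j-1})$; it exists by Step 1. There are two cases.

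If $j\ge2$, take $\eta=x_{j-1}\in[a,b)$. Then $\eta^-=x_{j-2}$ lies in the monotone range, so $\sigma(\eta^-)=\sigma(\eta)<\sigma(\eta^+)$. Hence $\sigma(\eta^+)\notin\{\sigma(\eta),\sigma(\eta^-)\}$, and $\eta$ is separating.

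If $j=1$, I split on the value of $\sigma(a^-)$, which is uncontrolled since $a^-\notin[a,b]$.
\begin{itemize}
\item If $\sigma(a^-)\ne\sigma(a^+)$, then $\eta=a$ works, because $\sigma(a^+)\notin\{\sigma(a),\sigma(a^-)\}$.
\item If $\sigma(a^-)=\sigma(a^+)$ and $k\ge2$, take $\eta=a^+=x_1<b$. Here $\eta^-=a$ and $\sigma(a)<\sigma(x_1)\le\sigma(x_2)$, so $\sigma(\eta^-)\notin\{\sigma(\eta),\sigma(\eta^+)\}$.
\item The remaining subcase is $k=1$, i.e.\ $b=a^+$, with $\sigma(a^-)=\sigma(a^+)$. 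Then $\sigma(a^-)\notin\{\sigma(a)\}$ already, and I must also show $\sigma(a^-)\neq\sigma(a^+)$ fails cleanly or does not occur. I would handle this by noting that $\eta=a$ still satisfies the first alternative of the definition provided $\sigma(a^-)\ne\sigma(a^+)$. If instead $\sigma(a^-)=\sigma(a^+)$, I would fall back on a short direct argument using the size constraint $L(a^+-a)\ge 2^{e-\mbit}$ together with the bound on $|\rho(a)|$.
\end{itemize}
This gives a separating point $\eta\in[a,b)_{\fpq}$ in every case, as claimed.
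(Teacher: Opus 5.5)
The places where your proof is incomplete are exactly the places where the lemma itself fails, so no argument can close them; what is missing is an extra hypothesis.

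\textbf{What is correct.} Your reduction to the increasing case is fine. Your Step~2 is correct whenever $[a,b]_{\fpq}$ has at least three points ($k\ge 2$). It is in fact more careful than the paper's one-line proof. That proof asserts $\sigma(a)\neq\sigma(b)$ directly from $\rho(b)-\rho(a)\ge 2^{e-\mbit}$, and then claims an $\eta\in[a,b)_{\fpq}$ with $\sigma(\eta^-)<\sigma(\eta)\le\sigma(\eta^+)$, although such an $\eta$ may only exist at $\eta=b$. So the paper passes over the same two configurations you leave open.

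\textbf{Why the open cases cannot be closed.} $\rho$ is constrained only on $[a,b]$, so $\sigma(a^-)$ is arbitrary. In both open configurations the only possible separating point in $[a,b)_{\fpq}$ is $a$, and its status depends on $\sigma(a^-)$.

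\emph{The tie case.} Take $e=\emin$, $a=0$, $b=1$, $L=\omega$ and $\rho(x)=\tfrac32\omega+\omega x$ on $[0,1]$.
All hypotheses hold: $L(b-a)=2^{e-\mbit}$ exactly, and $\rho(b)=\tfrac52\omega\le(2^{\mbit+1}-2)\omega$.
Yet $\rho(0)$ and $\rho(1)$ are the two midpoints around the even float $2\omega$, and both round to it. Every intermediate value lies strictly between them, so $\sigma\equiv 2\omega$ on $[0,1]_{\fpq}$.
Tie-to-even therefore does not exclude this configuration. Your affine fallback only confirms that there is no jump at all.
Setting $\rho(-\omega)=2\omega$ makes $\eta=0$ non-separating, so $[0,1)_{\fpq}$ contains no separating point.

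\emph{The case $b=a^+$.} Take $e=0$, $a=1$, $b=1+2^{-\mbit}$, $L=1$, $\rho(x)=x-1$ on $[a,b]$, and $\rho(1^-)=2^{-\mbit}$.
Then $[a,b)_{\fpq}=\{1\}$ and $\sigma(1^-)=\sigma(1^+)=2^{-\mbit}\neq 0=\sigma(1)$. So $1$ is not separating, and your promised ``short direct argument'' cannot exist.
In both examples $\rho$ can be taken smooth.

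\textbf{How to repair it.} One option is to assume $\rho'\ge L$ on $[a^-,b]$ instead of on $[a,b]$.
\begin{itemize}
\item In the tie case, $\rho(a^-)<\rho(a)$ lies strictly below the midpoint, so $\sigma(a^-)<\sigma(a)=\sigma(a^+)$.
\item When $b=a^+$, you get $\sigma(a^-)\le\sigma(a)<\sigma(a^+)$.
\end{itemize}
Either way $\eta=a$ is separating, and together with your Step~2 this completes the proof.

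Another option is to assume $L(b-a)>2^{e-\mbit}$, which makes your Step~1 inequality strict and rules out the tie, together with $b\neq a^+$, which is exactly your $k\ge 2$ case.

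The first repair holds where the paper actually uses the lemma, in the proof of \cref{lemma:real_function_distinguishing}. There $a=2^{e_*}$ and $b=(2^{e_*+1})^-$, so $[a^-,b]\subset(0,2^e)$, where $\hat\rho'\ge L_1$.
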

\cref{lemma:single_distinguishing_point} states that if $\rho$ is sufficiently increasing or decreasing on a long enough interval, then that interval contains a separating point of $\round{\rho}$.
Using this lemma, we now present a sufficient condition on a real activation function $\rho$ that guarantees the distinguishability of its correctly rounded version $\sigma=\round{\rho}$.
The proof of \cref{lemma:real_function_distinguishing} is in  \cref{sec:pflem:real_function_distinguishing}.
\begin{figure}
    \centering
\includegraphics[width=\linewidth]{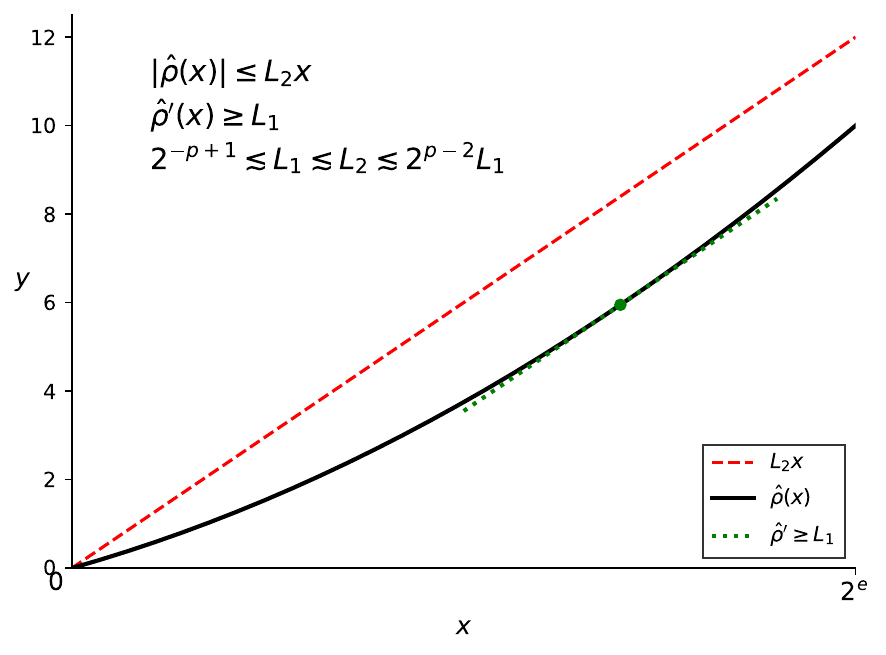}
    \caption{Visualization of the conditions in \cref{lemma:real_function_distinguishing}.}
\label{fig:real_function_distinguishing}
\end{figure}

\begin{lemma}\label[lemma]{lemma:real_function_distinguishing}
Let $\mcS$ be a reduction order, $\resig,\zeta:\bbR\to\bbR$ with $\zeta(x)=-x$. %
Suppose that there exist $\hat\rho\in\{\rho,\zeta\circ\rho,\rho\circ\zeta,\zeta\circ\rho\circ\zeta\}$, $L_1, L_2>0$, and $e\in\bbZ$
such that the following hold:
\begin{itemize}[noitemsep,nosep]
    \item $\hat\resig'(x) \geq L_1$ for all $x \in (0, 2^e)$. 
    \item $\left|\hat\resig(x)\right| \le L_2 x$ for all $x\in(-2^e,2^{e})$.
    \item %
   $-\mbit \leq l_1 \leq l_2 \leq l_1 +\mbit$,
   where $l_1= \lfloor \log_2 (L_1/2) \rfloor_{\bbZ}$ and $l_2= \lfloor \log_2 (2L_2) \rfloor_{\bbZ}$.
\end{itemize}
Then, for $e'=\emax+\min\{e -2, -l_2\}$, $(-2^{e'}, 2^{e'})_{\fpq}$ is $\round{\rho}$-distinguishable with range $\left[-2^{\emax}, \allowbreak 2^{\emax}\right]_{\fpq}$ under $\mcS$.
\end{lemma}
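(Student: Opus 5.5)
Our plan is to reduce to \cref{lemma:distinguishing_points_to_distinguishable} by producing two families of separating points of $\round{\rho}$ via \cref{lemma:single_distinguishing_point}, and then to check the range condition. First we reduce to $\hat\rho=\rho$. Negating the output commutes with rounding, since $\round{-y}=-\round{y}$, and negation is a bijection on floats. So $\round{\zeta\circ\rho}$ distinguishes a pair exactly when $\round{\rho}$ does, and the range $[-2^{\emax},2^{\emax}]_{\fpq}$ is symmetric. Negating the input amounts to replacing $w_i,b_i$ by $-w_i,-b_i$ in the affine map. Floating-point multiplication, the reduction $\Sigma(\cdot;o)$ under any fixed order, and the bias addition are all odd under global negation, so $\phi\mapsto-\phi$ is again an affine map under the same $\mcS$. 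Hence we may assume $\rho'\ge L_1$ on $(0,2^e)$ and $|\rho(x)|\le L_2|x|$ on $(-2^e,2^e)$; in particular $\rho(0)=0$ and $\rho\ge0$ on $[0,2^e)$.

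\textbf{Separating points at every scale.} For each integer $k\le e-1$ (and $k\ge\emin$), apply \cref{lemma:single_distinguishing_point} on $[a,b]=[2^{k},2^{k+1}]$, or on $[2^k,2^{k+1})$ shrunk to floats when $k=e-1$. We have $\rho(a)\ge0$ and $\rho'\ge L_1$ there. Choose the exponent parameter $e_\rho=k+l_2$. Then $|\rho(a)|,|\rho(b)|\le L_2 2^{k+1}\le 2^{k+l_2+1}(1-2^{-\mbit})$ up to an adjustment of $l_2$ by one; the definition $l_2=\lfloor\log_2(2L_2)\rfloor$ is designed to absorb this. We also need $L_1 2^{k}\ge 2^{k+l_2-\mbit}$, i.e.\ $l_2-l_1\le\mbit$ roughly, using $L_1\ge 2^{l_1+1}$. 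This is exactly the third hypothesis. The lemma yields a separating point $\eta_k\in[2^k,2^{k+1})$, so $\expo{\eta_k}=k$. The constraint $-\mbit\le l_1$ should guarantee that $e_\rho=k+l_2$ stays inside $[\emin,\emax]$ for the relevant $k$, or that the condition is still meaningful near subnormals. For the smallest $k$ near $\emin$ I expect to need the subnormal version, where the spacing is $\omega$ and the same inequality works.

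\textbf{Applying the covering lemma.} With $\eta_k$ at exponents $k=\emin,\dots,\min(e-1,\cdot)$, the union $\bigcup_k[k-e_1,k+\emax-2]$ covers $[\emin,\,e-1+\emax-2]$ for any $e_1\ge0$. Take $e_1$ small: $e_1=\emin+1$ if needed for the lemma's constraints, or effectively $0$. We then choose $e_2=e'-1=\emax+\min\{e-2,-l_2\}-1$, so the covering holds because $e_2\le e-1+\emax-2$. \cref{lemma:distinguishing_points_to_distinguishable} gives distinguishability of $(-2^{e'},2^{e'})_{\fpq}$ with range $\round{\rho}([-T,T]_{\fpq})$, where $T\approx 2^{e_1+e_2+1}+|\eta_n|\lesssim 2^{e'}$.

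\textbf{Range bound (main obstacle).} We must show $\round{\rho}([-T,T])\subset[-2^{\emax},2^{\emax}]$. On $(-2^e,2^e)$ this follows from $|\rho(x)|\le L_2|x|<2^{l_2}\cdot 2^{e'}\le 2^{\emax}$, using $e'\le\emax-l_2$; this is the role of the $-l_2$ in the minimum. The difficulty is that $T$ may exceed $2^e$, where we know nothing about $\rho$. The $e-2$ term in the minimum is meant to prevent this: $T\le 2^{e'}\cdot$ small $\le 2^{\emax+e-2}$, which is only $<2^e$ if the affine maps used actually send the domain into a window of size $2^{e}$. So I would revisit the construction inside \cref{lemma:distinguishing_points_to_distinguishable}: $e_1$ should be chosen so that $e_1+e_2+1\le e-1$ (likely $e_1$ negative-offset, i.e.\ pick $e_1=e-1-e'$ style bookkeeping), which forces the covering to use $[k-e_1,\dots]$ with $e_1\approx\emax$. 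Then the covering condition must be rechecked. Getting these exponent inequalities consistent, including the rounding slack $\oplus|\eta_n|^+$ and the $\pm1$ in $l_2$, is where I expect the real work.
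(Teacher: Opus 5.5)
\textbf{Gap: the range condition.} Your first three steps follow the paper's proof closely. The paper sets $\sigma=\round{\hat\rho}$ and gets a separating point with exponent $k$ for every $k\in[\emin,e-1]_{\bbZ}$ from \cref{lemma:single_distinguishing_point} on $[2^k,(2^{k+1})^-]$. It then applies \cref{lemma:distinguishing_points_to_distinguishable} with $e_1=0$ and $e_2=e'-1=\min\{\emax-l_2-1,\emax+e-3\}$. Two side remarks: $e_1=0$ is already admissible, and the $e-2$ in $e'$ comes from this covering step, as your own computation $e_2\le e-1+\emax-2$ shows, not from the range.

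The gap is the last step. You never prove $\round{\rho}([-T,T]_{\fpq})\subset[-2^{\emax},2^{\emax}]$ for $T=2^{e'}\oplus|\eta_n|^+$, and the repair you sketch cannot work. Keeping the image inside $(-2^e,2^e)$ for inputs of size about $2^{e'}$ forces $|w|\lesssim 2^{e-e'}$, i.e.\ $e_1\lesssim e-e'$. When $e'=\emax+e-2$, this means $e_1\lesssim 2-\emax$. Then every covering interval $[\expo{\eta_i}-e_1,\cdot\,]$ starts at about $-1$ or higher, far above $\emin$. Concretely, $w\otimes\omega=0$, so $0$ and $\omega$ are never separated.

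\textbf{No retuning of $e_1$ can fix this under the literal hypotheses.} Take $\rho(x)=x$ on $(-1,1)$ and $\rho(x)=2^{\emax+1}$ for $|x|\ge 1$, with $e=0$ and $L_1=L_2=1$, so $l_1=-1$, $l_2=1$, $e'=\emax-2$.
\begin{itemize}
\item Any admissible $\phi(y)=w\otimes y\oplus b$ must map the domain into $(-1,1)$, since otherwise $\round{\rho}$ returns $\infty$.
\item Testing at $y=2^{\emax-3}$ forces $|w|<2^{5-\emax}$.
\item Hence $w\otimes\omega=0$ (because $\emax\ge 7$), and $\phi(0)=\phi(\omega)=b$.
\end{itemize}

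\textbf{How the paper closes the step.} It bounds $|\rho(y)|\le L_2|y|<2^{l_2}|y|$ for all $|y|\le T$ and uses $e'\le\emax-l_2$. This gives $|\round{\rho}(y)|\lesssim 2^{l_2}\cdot 2^{\emax-l_2}=2^{\emax}$, up to rounding slack. That step silently uses the linear bound on all of $[-T,T]$, not only on $(-2^e,2^e)$. It holds for every activation in \cref{cor:real_function_distinguishing}, since the bound is global there (e.g.\ $|\sin y|\le|y|$). This is exactly the ingredient your proposal lacks. Your suspicion about the range step was well founded, but the correct fix is to use (i.e.\ assume) the linear growth bound on $[-T,T]$, not to retune $e_1$.
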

\cref{lemma:real_function_distinguishing} roughly states that
for some moderate-size $L_1, L_2 > 0$,
if a real activation function $\rho$ is bounded between $L_1x$ and $L_2x$, and
$\rho'$ is lower bounded by $L_1$ for all inputs between $0$ and $2^e$,
then the domain $(-2^{e'},2^{e'})_{\fpq}$ is $\round{\rho}$-distinguishable with range $[-2^{\emax},2^{\emax}]_{\fpq}$, where $e' \approx \emax + e-2$.
By \cref{thm:main}, this implies that $\round{\rho}$ networks can represent all functions from $(-2^{e'},2^{e'})_{\fpq}$ to $\fpq\cup\{-\infty,\infty\}$.
Since \cref{lemma:real_function_distinguishing} covers various practical activation functions $\rho$ with $\rho(0)=0$, %
the following corollary holds for the correctly rounded version of such $\rho$.
The proof of \cref{cor:real_function_distinguishing} is in  \cref{sec:pflem::real_function_distinguishing}.

\begin{corollary}\label[corollary]{cor:real_function_distinguishing}
Let $\mcS$ be a reduction order and $\sigma$ be one of $\round{\mathrm{Identity}}$, $\round{\relu}$, $\round{\elu}$, $\round{\mathrm{SeLU}}$, $\round{\GeLU}$, $\round{\Swish}$, $\round{\Mish}$, and $\round{\sin}$.
Then, for any $d\in \bbN$, $\sigma $ networks under $\mcS$ can represent all functions from $(-2^{\emax-1}, 2^{\emax- 1})_{\fpq}^d$ to $\fpq\cup\{-\infty,\infty\}$.
\end{corollary}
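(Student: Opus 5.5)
For each listed $\sigma=\round{\rho}$ we verify two things and then apply \cref{thm:main}. First, \cref{condition:proper_exponent} holds. Second, the one-dimensional domain $(-2^{\emax-1},2^{\emax-1})_{\fpq}$ is $\sigma$-distinguishable with range $[-2^{\emax},2^{\emax}]_{\fpq}$ under $\mcS$. By the remark after \cref{lem:necessary}, one-dimensional distinguishability lifts to the $d$-fold product with the same range. Then \cref{thm:main} with $d_{\rm out}=1$ gives a four-layer $\sigma$ network representing any $f$ on $(-2^{\emax-1},2^{\emax-1})_{\fpq}^d$.

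For \cref{condition:proper_exponent}, every listed $\rho$ satisfies $\rho(0)=0$, so $C_0=0$ works. We take $C_1=C_2=1$: each of $\mathrm{Identity}$, $\relu$, $\elu$, $\mathrm{SeLU}$ ($\approx1.0507$), $\GeLU$ ($\approx0.841$), $\Swish$ ($\approx0.731$), $\Mish$ ($\approx0.865$), $\sin$ ($\approx0.841$) has $\rho(1)\in((2^{-p-2})^+,5/4)$. This survives rounding because $p\ge2$ and the values are bounded away from $5/4$. For $\mathrm{SeLU}$ with $p=2$ this needs a check; if rounding reaches $5/4$, we pick a smaller $C_1=C_2$ such as $1/2$.

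Distinguishability comes from \cref{lemma:real_function_distinguishing} with $\hat\rho=\rho$ and $e\ge 1$. The choice must satisfy $e-2\ge -1$ and $-l_2\ge -1$, so that $e'=\emax+\min\{e-2,-l_2\}\ge\emax-1$. This requires $l_2=\lfloor\log_2(2L_2)\rfloor\le 1$, i.e.\ $L_2<2$, and a lower bound $\rho'\ge L_1$ on $(0,2^e)$ with $l_1=\lfloor\log_2(L_1/2)\rfloor\ge -p$ and $l_2\le l_1+p$. Since $p\ge 2$, choosing $L_1\ge 1/2$ gives $l_1\ge -2\ge -p$, and $l_2\le 1\le l_1+p$ then holds automatically. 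The per-function choices are as follows.

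\textbf{Identity, $\relu$, $\elu$.} Take $L_1=1$, $L_2=1$, $e=1$; then $|\rho(x)|\le|x|$ and $\rho'=1$ on $(0,2)$.

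\textbf{$\mathrm{SeLU}$.} Take $\lambda\approx1.0507$ and $L_1=\lambda$. On the negative side, $|\lambda\alpha(e^x-1)|\le\lambda\alpha|x|\approx1.758|x|$, so $L_2=1.76<2$ works.

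\textbf{$\GeLU$, $\Swish$, $\Mish$.} These satisfy $|\rho(x)|\le|x|$, so $L_2=1$. Their derivatives on $(0,2)$ stay above $1/2$: for $\Swish$, $\rho'=s(x)+xs(x)(1-s(x))\ge 1/2$, and $\GeLU$ and $\Mish$ are similar. Take $L_1=1/2$.

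\textbf{$\sin$.} Use $|\sin x|\le|x|$, take $e=0$, so $\cos x\ge\cos1>1/2$ on $(0,1)$, and set $L_1=1/2$.

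We read the second hypothesis of \cref{lemma:real_function_distinguishing} as $|\hat\rho(x)|\le L_2|x|$, which is how it must be intended.

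\textbf{Main obstacle.} For $\sin$ with $e=0$, \cref{lemma:real_function_distinguishing} only gives $e'=\emax-2$. To reach $e=1$ we would need $\cos x\ge L_1\ge1/2$ on $(0,2)$, which fails since $\cos 2<0$. The fix is to handle $\sin$ differently, e.g.\ by using $\hat\rho=\rho\circ\zeta$ variants, which does not help, or by appealing directly to \cref{lemma:distinguishing_points_to_distinguishable}. In the latter route we exhibit a tiny separating point near $0$, where $\round{\sin}$ is locally injective on subnormals, and a separating point $\eta_2\in[4,8)$ found via \cref{lemma:single_distinguishing_point} on an interval where $\sin$ is monotone with slope bounded below, such as near $2\pi$. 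This yields $e_1=0$ and $e_2=\emax-2$, hence the domain $(-2^{\emax-1},2^{\emax-1})_{\fpq}$. The range $\mcR_{0,\emax-2}$ lies within $[-1,1]$ trivially, since $\sin$ is bounded. Similar care on the range near overflow may be needed for the unbounded activations, but \cref{lemma:real_function_distinguishing} already controls that.
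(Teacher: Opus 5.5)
For the seven activations other than $\sin$, your argument is the paper's: apply \cref{lemma:real_function_distinguishing} with $\hat\rho=\rho$, then \cref{thm:main}. You add two improvements. You verify \cref{condition:proper_exponent} explicitly, and your $L_2\approx1.76$ for $\mathrm{SeLU}$ is the correct constant (the value $1.05$ in \cref{table:lip} fails for $x<0$). You are also right that this route stops at $e'=\emax-2$ for $\sin$. The paper's own proof only records $(-2^{\emax-2},2^{\emax-2})_{\fpq}$, so it does not reach the stated domain either.

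Your repair for $\sin$, however, fails as written. With $e_1=0$, a separating point $\eta$ covers exactly the exponents $[\expo{\eta},\expo{\eta}+\emax-2]_{\bbZ}$. Since $\emin+\emax=1$, your subnormal point covers $[\emin,-1]_{\bbZ}$ and $\eta_2\in[4,8)$ covers $[2,\emax]_{\bbZ}$. Exponents $0$ and $1$ are therefore missed. The covering hypothesis of \cref{lemma:distinguishing_points_to_distinguishable} fails for every $e_2\ge 0$, and pairs such as $(1,1^+)$ are not covered. Raising $e_1$ to $2$ restores the covering, but then $2^{e_1+e_2+1}=2^{\emax+1}$ exceeds $\fmax$. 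The weight bound $|w|\le(1+2^{-\mbit})2^{e_1}$ then no longer rules out $w\otimes x=\pm\infty$ on the domain, and there $\round{\sin}$ returns $\nan$. The two-point recipe stated right after that lemma in the text has the same defect.

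The fix is to add a separating point of exponent $0$. Since $\sin$ is increasing on $[0,\pi/2]$ and $\round{\sin}(1)<\round{\sin}(3/2)$ for every $\mbit\ge 2$, there is $\eta\in[1,3/2)_{\fpq}$ with $\round{\sin}(\eta^-)\le\round{\sin}(\eta)<\round{\sin}(\eta^+)$. This is a separating point with $\expo{\eta}=0$. Take also $\eta_1=\fmin$, which is separating because $\round{\sin}$ is the identity near $0$. The covering becomes $[\emin,-1]_{\bbZ}\cup[0,\emax-2]_{\bbZ}$. So $e_1=0$ and $e_2=\emax-2$ are admissible, and the range lies in $[-1,1]$ because $2^{\emax-1}\oplus|\eta|^+$ is finite.

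Separately, your claim that $l_2\le 1\le l_1+\mbit$ holds automatically is wrong. Any $L_1\in[1/2,1)$ gives $l_1=-2$. So for $\mbit=2$ (E5M2), the hypothesis $l_2\le l_1+\mbit$ of \cref{lemma:real_function_distinguishing} fails for $\GeLU$, $\Swish$ and $\Mish$ with your constants. \cref{table:lip} carries the same restriction, and that format needs separate treatment.
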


Surprisingly, \cref{cor:real_function_distinguishing} shows that floating-point networks using the identity activation function can represent all floating-point functions. Such an observation can be made since floating-point affine transformations are \emph{non-affine} under exact arithmetic due to the round-off errors, i.e., the networks can be non-affine. We note that under real arithmetic, networks using the identity activation function must be affine, and hence, cannot be universal approximators. 

\subsection{Sufficient Conditions for Distinguishability using Activations with Bounded Ulp Errors}\label{sec:sufficient_real2}

In practice, implementations of activation functions are often not correctly rounded.
While correctly rounded implementations return the nearest floating-point value of the target real function, practical implementations often allow small approximation errors to reduce computational cost and implementation complexity.
For example, elementary functions such as $\exp$, $\tanh$, $\sin$, and $\cos$ are commonly implemented using polynomial approximations, table-based methods, or hardware-specific instructions, which may introduce small floating-point errors.
As a result, practical floating-point implementations are often analyzed using ulp (\emph{unit in the last place}) error bounds rather than exact correct-rounding guarantees.

For $x\in\bbR$ such that $\round{x}\in\fpq$ and $2^e\le x<2^{e+1}$ for some $e\in\bbZ$ ($e=-\infty$ if $x=0$), we define
\begin{align*}
\ulp(x)\defeq2^{\max\{e,\emin\}-\mbit}.
\end{align*}
Intuitively, $\ulp(x)$ represents the spacing between nearby floating-point numbers around $x$.
Under the round-to-nearest mode, a correctly rounded implementation has at most $0.5$ ulp error.

Given $\rho:\bbR\to\bbR$, $\sigma:\efpq\to\efpq$, and $\mcZ\subset\fpq$, we say that \emph{$\sigma$ approximates $\rho$ with $K$ ulp error on $\mcZ$} if $\round{\rho}(\mcZ)\subset\fpq$ and
\begin{align*}
|\sigma(x)-\rho(x)|
\le
K\times\ulp(\rho(x))
\end{align*}
for all $x\in\mcZ$.

Practical floating-point implementations of elementary functions often provide ulp-error guarantees or empirical ulp-error bounds rather than exact correct rounding.
For single precision, \cite{gladman2025accuracy} reports exhaustive accuracy measurements over all float32 inputs for univariate functions.
Their results show that, for most elementary functions relevant to activation implementations, such as $\exp$, $\log$, $\sin$, $\cos$, and $\tanh$, the observed ulp errors are below $1$ for several widely used mathematical libraries.
Motivated by these observations, we study the expressive power of floating-point neural networks with activation functions allowing bounded ulp errors.

Here, we mainly focus on high-precision floating-point formats such as float16, bfloat16, float32, and float64.
In contrast, for very low-precision formats such as float8, correctly rounded activation functions can often be implemented efficiently using lookup tables because the number of representable values is small.

To describe our results, we present our assumption on a real activation function and its implementation.
\begin{condition}\label{cond:impl}
Consider a real number $ 0 \le K\le 10$.
For a real activation function $\rho:\bbR\to\bbR$ and its floating-point implementation $\sigma:\efpq\to\efpq$, $\sigma$ approximates $\rho$ in $K$ ulp errors on $[0,8]_{\fpq}$, $|\rho([0,8])| \le 2^{\emax} $, $|\sigma(x)| \le 2|\rho(x)|$ for $|x|
\ge 1$ and $\sigma(C_0)=0$ for some $C_0 \in \fpq$. 
\end{condition}

The assumptions in \cref{cond:impl} are motivated by practical implementations.
Referring to \cite{gladman2025accuracy}, several mathematical libraries such as GNU libc and LLVM libc report ulp errors below $1$ for many elementary functions in single precision.
Although widely used activations such as $\GeLU$ and $\Swish$ are not themselves elementary functions, they are compositions of elementary functions, and their implementations are expected to exhibit moderate ulp errors.
Accordingly, \cref{cond:impl} assumes bounded ulp error on $[0,8]$.
We also report the maximum ulp errors of several activations on $[0,8]$ in half and single precision in \cref{table:ulp}.

The condition $\sigma(C_0)=0$ for some $C_0\in\fpq$ is also natural in practice.
For example, although the real sigmoid function never attains zero, its floating-point implementation can underflow to zero for sufficiently negative inputs.
Under \cref{cond:impl}, implementations with bounded ulp error still admit separating points, which allows us to establish universal representability results for $\Sigmoid$ and $\tanh$ networks.
The proof of \cref{cor:sigmoidal_distinguish_nc} is in \cref{sec:pflem:sigmoidal_distinguish_nc}.

\begin{corollary}\label[corollary]{cor:sigmoidal_distinguish_nc}
Let $\mbit \ge 7$, $d\in\bbN$, and  $\mcS$ be a reduction order. Suppose $\rho\in \{ \Sigmoid, \tanh \}$ and $\sigma:\efpq\to\efpq$ satisfy \cref{cond:impl}.
Then, $\sigma$ networks under $\mcS$ can represent all functions from $\domain$ to $\fpq \cup \{-\infty, \infty\}$ where 
\begin{align*}
    \domain = \begin{cases}
    \fpq^d \; &\text{if} \; \sigma=\round{\Sigmoid}~~\text{or}~~\mbit \ge 15,  \\ 
(-2^{\emax}, 2^{\emax})_{\fpq}^d \; &\text{if} \; 7 \le \mbit \le 14~~\text{and}~~ \sigma = \round{\tanh}.   \end{cases}
\end{align*}
\end{corollary}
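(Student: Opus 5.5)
The plan mirrors the proof of \cref{cor:sigmoidal_distinguish}, with the correct-rounding arguments replaced by ulp-error estimates. The idea is to invoke \cref{thm:main} after checking \cref{condition:proper_exponent} and the distinguishability hypothesis. The distinguishability hypothesis will come from \cref{lemma:sigmoidal_distinguish}: two separating points with $|\eta_1|<2$ and $|\eta_2|\ge 4$ give the domain $\fpq^d$, while one separating point with $1\le|\eta_1|<2$ gives $(-2^{\emax},2^{\emax})_\fpq^d$. In both cases we then pass from one dimension to $d$ dimensions by the remark after \cref{lem:necessary}.

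\textbf{Step 1 (range and \cref{condition:proper_exponent}).} Since $|\rho|\le 1$, \cref{cond:impl} gives $|\sigma(x)|\le 2$ for $|x|\ge 1$. On $[0,8]_\fpq$ we have $|\sigma(x)|\le 1+10\,\ulp(1)$. For negative inputs I would use the symmetry of $\rho$; if that is not available, I would restrict all constructed affine maps to take values where the bound $|\sigma|\le 2|\rho|$ applies. This should give $\sigma(\fpq\cup\{\pm\infty\})\subset[-2^{\emax},2^{\emax}]_\fpq$, at least on every set the construction actually reaches. For \cref{condition:proper_exponent}, take $C_0$ as in \cref{cond:impl}, and take $C_1=C_2=1$. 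Then $\rho(1)\in\{0.731\ldots,0.761\ldots\}$, and an error of at most $10\,\ulp(\rho(1))=10\cdot 2^{-1-\mbit}\le 10\cdot 2^{-8}$ for $\mbit\ge7$ keeps $\sigma(1)$ inside $((2^{-\mbit-2})^+,5/4)$.

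\textbf{Step 2 (separating points from ulp errors).} This is the main obstacle, since \cref{lemma:single_distinguishing_point} assumes correct rounding. I would prove an analogue: if $\rho$ is increasing on $[a,b]\subset[0,8]$ and
\[
\rho(b)-\rho(a) > 2K\,\ulp + \ulp,
\]
where $\ulp$ is the common ulp of the values $\rho$ takes on $[a,b]$, then $\sigma(b)>\sigma(a)$. Hence $\sigma$ is not constant on $[a,b]_\fpq$, and a change point yields a separating point in $[a,b)$. The subtlety is that \cref{def:separating_point} asks for three consecutive floats $\eta^-,\eta,\eta^+$ at which $\sigma$ takes a value different from the others. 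Non-constancy alone gives only a jump between two neighbours $\eta$ and $\eta^+$, and a non-monotone $\sigma$ could return to its earlier value. I would handle this by noting that the first float $\eta$ at which $\sigma(\eta)\neq\sigma(a)$ works whenever $\eta\ge a^{+}$: then $\sigma(\eta^-)=\sigma(\eta^{--})$ or a jump already occurred. I would also choose windows with $\sigma$ constant at the start, and refine this where needed.

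\textbf{Step 3 (locating the points).} For $\eta_1$, take a window inside $[1,2)$. Near $x=1$ we have $\rho'\ge \Sigmoid'(2)\approx0.10$, so over a window of a few hundred floats of spacing $2^{-\mbit}$ the increase of $\rho$ is $\gtrsim 0.1\cdot 2^{-\mbit}\cdot N$. This exceeds $(2K+1)2^{-1-\mbit}\le 21\cdot 2^{-1-\mbit}$ once $N\approx 110$, which fits inside $[1,2)$ because $2^{\mbit}\ge128$. For $\eta_2\in[4,8]$, we need $\rho(8)-\rho(4)$ to beat $21\,\ulp(\rho)=21\cdot 2^{-1-\mbit}$ (values near $1$, so the exponent is $-1$). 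For $\Sigmoid$, $\rho(8)-\rho(4)\approx0.0177$, and $21\cdot2^{-8}\approx0.082$ is too large for $\mbit=7$. I would therefore find a window $[a,b]\subset[4,8]$ exploiting the finer float spacing $2^{2-\mbit}$, compared against the ulp of the output; if this fails for small $\mbit$, I would fall back on the distinct-value count. For $\tanh$ we have $\tanh(8)-\tanh(4)\approx 6.7\cdot10^{-4}$, which exceeds $21\cdot 2^{-1-\mbit}$ exactly when $\mbit\gtrsim 14$. This is the source of the threshold $\mbit\ge15$ versus $7\le\mbit\le14$. For $\tanh$ with $7\le\mbit\le14$ we only use $\eta_1$ and obtain the bounded domain. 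Feeding these points into \cref{lemma:sigmoidal_distinguish} and then into \cref{thm:main} concludes the argument. The delicate numerics of Step 3 for $\Sigmoid$ at small $\mbit$ are where I expect the most work.
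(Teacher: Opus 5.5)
\textbf{Verdict: genuine gap.} Your route (\cref{thm:main} fed by \cref{lemma:sigmoidal_distinguish}) is the paper's, but Step~3 breaks for $\rho=\Sigmoid$ with $7\le\mbit\le 9$. For the domain $\fpq^d$, the two-point clause of \cref{lemma:sigmoidal_distinguish} needs a separating point with $|\eta_2|\ge 4$. You never produce one, and neither fallback can. On $[4,8]$ every value of $\Sigmoid$ lies in $[1/2,1)$, so every output ulp is $2^{-1-\mbit}$. The whole variation $\Sigmoid(8)-\Sigmoid(4)\approx 0.0177$ is then below the margin $2K\,\ulp=10\cdot 2^{-\mbit}$ whenever $\mbit\le 9$. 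The finer input grid $2^{2-\mbit}$ does not help: what matters is output variation against output error. Concretely, for $\mbit=7$, an implementation equal to $254/256$ on $[4^-,\infty)_\fpq$ stays within $3$ ulps of $\Sigmoid$ on $[4^-,8]_\fpq$. So nothing forces a separating point there.

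The missing idea is the paper's: place $\eta_2$ on the negative side, in $[-5,-4)$. There $\Sigmoid(-4)\ge 2^{-6}$ and $\Sigmoid(-5)\le 2^{-7}$ are small, so the output ulps are only $2^{-6-\mbit}$ and $2^{-8-\mbit}$. Then $K\le 10$ ulps gives $\sigma(-5)\le\tfrac{9}{8}\cdot 2^{-7}<\tfrac{7}{4}\cdot 2^{-7}\le\sigma(-4)$ for every $\mbit\ge 7$. Be aware that this uses ulp control at $-4$ and $-5$, which \cref{cond:impl} as written (ulp bounds only on $[0,8]_\fpq$) does not provide; the paper uses it tacitly. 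Under the literal condition, one may also take $\sigma\equiv 0$ on $(-\infty,-1]_\fpq$. Combined with the constant example above, $\sigma$ then has no separating point of modulus $\ge 4$ at all, so no argument through \cref{lemma:sigmoidal_distinguish} reaches $\fpq^d$. You have to read \cref{cond:impl} as giving ulp control on a symmetric window such as $[-8,8]_\fpq$.

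Two smaller repairs. In Step~1, neither remedy works on $(-1,0)_\fpq$, where \cref{cond:impl} says nothing about $\sigma$. Symmetry of $\rho$ does not transfer to $\sigma$. You also cannot steer the affine maps away from $(-1,0)$: \cref{lemma:sigmoidal_distinguish}, through \cref{lemma:distinguishing_points_to_distinguishable}, bounds the range only via $\sigma$ of a whole interval $[-R,R]_\fpq$ with $R\ge 1$. The paper simply asserts $\sigma(\fpq)\subset[0,1]_\fpq$ (resp.\ $[-1,1]_\fpq$); you should state such a bound explicitly as an assumption.

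In Step~2, your worry about a non-monotone $\sigma$ is justified; the paper just declares $\sigma$ increasing, which \cref{cond:impl} does not give. It has a clean fix. Unwinding \cref{def:separating_point}, $\eta$ is a separating point if and only if $\sigma(\eta^-)\ne\sigma(\eta^+)$. So if $x_0<x_1<\dots<x_{2m}$ are consecutive floats with $\sigma(x_0)\ne\sigma(x_{2m})$, then $\sigma(x_{2k})\ne\sigma(x_{2k+2})$ for some $k$, and $x_{2k+1}$ is separating. Your margin estimates survive moving $b$ by one float to make the step count even.

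Finally, $\rho'\ge 0.10$ on $[1,2]$ holds for $\Sigmoid$ but not for $\tanh$, where $\tanh'(2)\approx 0.07$. The direct increase $\tanh(2^-)-\tanh(1)\approx 0.2$ still beats $21\cdot 2^{-8}$, so this does not break the argument.
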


Unlike the correctly rounded setting, implementations with bounded ulp error require an additional assumption such as $\mbit\ge7$.
The main reason is to guarantee the existence of separating points.
In the correctly rounded setting, if a real activation function is monotone on an interval and sufficiently many floating-point gaps exist, then its floating-point implementation admits separating points (e.g., \cref{lemma:single_distinguishing_point}).
In contrast, bounded ulp error can collapse nearby floating-point outputs into the same floating-point value, and therefore larger output gaps are required to guarantee distinguishability.
Consequently, establishing separating points requires sufficiently many floating-point gaps, which leads to additional precision requirements such as $\mbit\ge7$ (e.g., \cref{lemma:single_distinguishing_point_impl}).

As in the correctly rounded setting, we next provide an easily verifiable sufficient condition on a real activation function and its implementation.
Using this condition, we establish universal representability results for networks using implementations of $\mathrm{Identity}$, $\relu$, $\elu$, $\mathrm{SeLU}$, $\GeLU$, $\Swish$, $\Mish$, and $\sin$ under bounded ulp error.
The proofs of \cref{lemma:single_distinguishing_point_impl,lemma:real_function_distinguishing_ncrounded} are in \cref{sec:pflem:single_distinguishing_point_impl,sec:pflem:real_function_distinguishing_ncrounded,sec:pfcor:real_function_distinguishing_ncrounded}, respectively.
\begin{lemma}\label[lemma]{lemma:single_distinguishing_point_impl}
Let $\mbit\ge7$ and suppose $ \rho  : \bbR \to \bbR$ and $\sigma:
\efpq\to\efpq$  satisfies \cref{cond:impl}. 
    Let $a,b\in [0,8]_\fpq$, and $e\in [\emin, \allowbreak \emax]_{\bbZ}$ with $ |\rho(a)|, | \rho(b) |  \leq (1-2^{-\mbit}) \cdot 2^{e+1}$.
    Suppose that $\rho$ is monotone on $[a,b]$ and  
    there exists $L>0$ such that $L (b-a) \ge 2^{e-\mbit+5}$ and either
\begin{itemize}[noitemsep,nosep]
\item $\rho(a)\ge 0$ and $\resig'(x) \ge L$ for all $x\in [a,b]$, or 
\item $\rho(a)\le 0$ and $\resig'(x) \le -L$ for all $x\in [a,b]$.
\end{itemize}       
Then, there exists a separating point $\eta\in [a,b]_{\fpq}$ of $\sigma$.
\end{lemma}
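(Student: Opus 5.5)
We argue by contradiction, in the same way as for \cref{lemma:single_distinguishing_point}. Treat the case $\rho(a)\ge 0$, $\rho'\ge L$ on $[a,b]$; the other case follows by applying the argument to $-\rho$ and $-\sigma$, since \cref{cond:impl} and the separating-point property are invariant under this sign change. Suppose $\sigma$ has no separating point in $[a,b]_{\fpq}$. For consecutive floats $\eta^-,\eta,\eta^+$, the definition says the only forbidden configuration is one in which some neighbour's value differs from both other values. From this, a short induction along the floats $a=x_0<x_1<\dots<x_N=b$ gives the key structural fact: the sequence $\sigma(x_0),\dots,\sigma(x_N)$ is eventually constant on stretches and can only change in a very restricted way. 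Concretely, if $\sigma(x_{k-1})\neq\sigma(x_k)$, then $x_k$ is a separating point unless $\sigma(x_{k+1})=\sigma(x_{k-1})$, and $x_{k-1}$ is one unless $\sigma(x_{k-2})=\sigma(x_k)$. Hence $\sigma$ restricted to $[a,b]_{\fpq}$ takes at most two values $u\neq v$ and alternates or stays constant between them. In particular, $|\sigma(x_i)-\sigma(x_j)|\le |u-v|$ for all $i,j$, and all values lie in $\{\sigma(a),\sigma(a^+)\}$. A boundary issue arises if $a$ or $b$ are endpoints, but these are interior in the alternation argument: use $\eta\in[a,b]$ with neighbours possibly outside.

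Next I would bound $|u-v|$ and the total rise of $\rho$. Since $0\le\rho(a)\le\rho(b)\le(1-2^{-\mbit})2^{e+1}$ by monotonicity, every $\rho(x)$ with $x\in[a,b]_\fpq$ has $\ulp(\rho(x))\le 2^{e-\mbit}$. By \cref{cond:impl}, $|\sigma(x)-\rho(x)|\le K2^{e-\mbit}\le 10\cdot 2^{e-\mbit}$. Also, $\rho(b)-\rho(a)\ge L(b-a)\ge 2^{e-\mbit+5}=32\cdot 2^{e-\mbit}$. Comparing the endpoints gives
\begin{equation*}
|\sigma(b)-\sigma(a)|\ge 32\cdot 2^{e-\mbit}-20\cdot 2^{e-\mbit}=12\cdot 2^{e-\mbit}>0,
\end{equation*}
so $u\neq v$ genuinely occurs. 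Now the two-value alternation must be contradicted. Pick the two adjacent floats $x_k,x_{k+1}$ at which the pattern switches. By alternation, there are also points far apart, namely $a$ and $b$, where each of $u$ and $v$ is attained, in the sense that $\sigma(a),\sigma(b)\in\{u,v\}$ with $\sigma(a)\neq\sigma(b)$. Then I would use the intermediate points: choose the float $m$ nearest the midpoint where $\rho(m)$ is at least $16\cdot 2^{e-\mbit}$ away from both $\rho(a)$ and $\rho(b)$. Since $\sigma(m)$ is within $10\cdot2^{e-\mbit}$ of $\rho(m)$, it is at distance greater than $6\cdot 2^{e-\mbit}$ from both $\rho(a)\pm 10\cdot 2^{e-\mbit}$ windows. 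Therefore $\sigma(m)\notin\{\sigma(a),\sigma(b)\}=\{u,v\}$, contradicting the two-value property. The existence of such an $m$ needs floats dense enough in $[a,b]$ relative to the rise of $\rho$; with $\mbit\ge7$, the rise per float step is controlled, and this is where the hypothesis $\mbit\ge 7$ enters.

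The main obstacle is this last step: guaranteeing a float $m$ whose $\rho$-value is well separated from both endpoint values. If $[a,b]_\fpq$ contains very few floats, then $\rho$ could jump in one step. I would handle that case separately. If $b=a^+$ or there are few floats, then consecutive $\sigma$-values differ by a gap exceeding the $\pm10$ ulp tolerance. I would then use a neighbour outside $[a,b]$ (say $a^-$ or $b^+$), together with the monotonicity of $\rho$, to show that one of $a$, $b$ is itself separating; this is why the conclusion allows $\eta\in[a,b]$ rather than $[a,b)$. Making the constants $32$ versus $2\cdot 10$ fit in both cases is the delicate bookkeeping.
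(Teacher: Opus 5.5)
Your endpoint estimate $|\sigma(b)-\sigma(a)|\ge 32\cdot 2^{e-p}-20\cdot 2^{e-p}>0$ is exactly the paper's computation. Your observation that, without a separating point, $\sigma$ can only alternate is also the right one. The paper's proof jumps from $\sigma(a)\neq\sigma(b)$ straight to an $\eta$ with $\sigma(\eta^-)<\sigma(\eta)\le\sigma(\eta^+)$, which tacitly treats $\sigma$ as monotone. Sharpened: $\eta$ fails to be separating iff $\sigma(\eta^-)=\sigma(\eta^+)$. So, writing $a^-=x_{-1}<a=x_0<\dots<x_N=b<x_{N+1}=b^+$, the value $\sigma(x_i)$ depends only on the parity of $i$, and $\sigma(a)\neq\sigma(b)$ forces $N$ odd.

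The step that fails is your contradiction via a midpoint $m$: a two-valued $\sigma$ is \emph{not} inconsistent with the error bounds. In units of $2^{e-p}$, take $\rho(a)=0$, $\rho(m)=16$, $\rho(b)=32$ with $\sigma(a)=\sigma(m)=10$ and $\sigma(b)=22$; every $10$-ulp bound holds. To force $\sigma(m)\neq\sigma(a)$ you would need $|\rho(m)-\rho(a)|>20$, and the radius-$10$ windows around $u$ and $v$ can cover all of $[\rho(a),\rho(b)]$. Moreover, $\rho'$ is only bounded below, so neither $p\ge 7$ nor anything else controls the rise per float step, and the $m$ you want need not exist.

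The missing idea is to combine the parity structure with the derivative bound on same-parity pairs. For odd $N\ge 3$, $\sigma(x_0)=\sigma(x_{N-1})$ and $\sigma(x_1)=\sigma(x_N)$. By the mean value theorem these give $L(x_{N-1}-x_0)\le 20\cdot 2^{e-p}$ and $L(x_N-x_1)\le 20\cdot 2^{e-p}$. Combined with $L(b-a)\ge 32\cdot 2^{e-p}$, this yields $x_1-x_0\ge\frac38(b-a)$ and $x_N-x_{N-1}\ge\frac38(b-a)$. Since the spacing of nonnegative floats is nondecreasing, also $x_2-x_1\ge\frac38(b-a)$. These are three distinct gaps whose total exceeds $b-a$, a contradiction.

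That leaves $N=1$, i.e.\ $b=a^+$. Your plan to use monotonicity of $\rho$ at $a^-$ or $b^+$ relies on a hypothesis you do not have: $\rho$ is monotone only on $[a,b]$, and $\sigma$ is not even controlled at $a^-$ when $a=0$. In fact the statement is false in this case. Take $\sigma=\round{\rho}$, $e=0$, $a=1$, $b=1^+=1+2^{-p}$, $L=32$. Let $\rho$ be a smooth bounded function that vanishes at some float, with $\rho(1)=\frac12$ and slope $32$ on $[a,b]$, so that $\rho(b)=\frac12+2^{5-p}$. Require also $\rho(a^-)=\rho(b)$ and $\rho(b^+)=\rho(a)$. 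All hypotheses hold, including the implementation condition with error $\frac12$ ulp. Yet $\sigma(\eta^-)=\sigma(\eta^+)$ at both $\eta=a$ and $\eta=b$, so $[a,b]_{\fpq}$ contains no separating point.

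So this case cannot be ``handled separately''; it needs an extra assumption such as $b\neq a^+$. That assumption holds in the paper's application, where $[a,b]$ is a whole binade. Under it, the parity argument above finishes the proof, with even $N$ excluded by your endpoint estimate.
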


\begin{figure}
    \centering
\includegraphics[width=\linewidth]{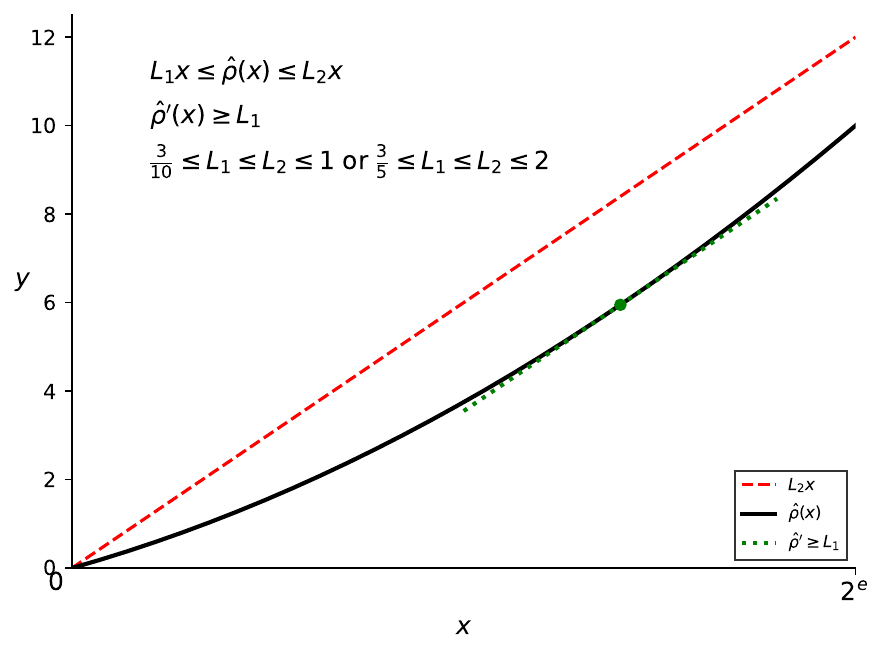}
    \caption{Visualization of the conditions in \cref{lemma:real_function_distinguishing_ncrounded}.}
\label{fig:real_function_distinguishing_nc_rounded}
\end{figure}

\begin{lemma}\label[lemma]{lemma:real_function_distinguishing_ncrounded}
Let  $\mbit \ge 7$, $\mcS$ be a reduction order, and $\zeta:\bbR\to\bbR$ with $\zeta(x)=-x$.
Suppose that $\rho:\bbR\to\bbR$ and $\sigma : \efpq \to \efpq$ satisfy \cref{cond:impl}.
Assume $ 
  \tfrac{3}{10}  \le L_1 \le L_2 \le 1$ or $
 \tfrac{3}{5} \le L_1 \le L_2 \le 2$.
Suppose that there exist $\hat\rho\in\{\rho,\zeta\circ\rho,\rho\circ\zeta,\zeta\circ\rho\circ\zeta\}$ 
such that the following hold:
\begin{itemize}[noitemsep,nosep]
    \item $\hat\resig'(x) \geq L_1$ for all $x \in (0, 2^{\emax})$,
    \item $ |\hat\resig(x) | \le L_2 x$ for all $x\in (-2^{\emax+1},2^{\emax+1})$,
\end{itemize}
Pick $k \in \bbZ$ such that $  2^{k-1}  < L_2 \le 2^k$.
Then, $(- 2^{\emax-1-k}, 2^{\emax-1-k} )_\fpq$ is $\sigma$-distinguishable with range $\left[ 2^{\emax}, \allowbreak 2^{\emax}\right]_{\fpq}$ under $\mcS$.
\end{lemma}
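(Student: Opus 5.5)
This follows the proof of \cref{lemma:real_function_distinguishing} in the correctly rounded setting. The plan has two parts. First, use \cref{lemma:single_distinguishing_point_impl} to produce separating points of $\sigma$ with a small and a moderate magnitude. Second, feed these points into \cref{lemma:distinguishing_points_to_distinguishable}, with $e_2=\emax-2-k$, so that the domain $(-2^{\emax-1-k},2^{\emax-1-k})_\fpq$ is distinguishable. Finally, check that the resulting range $\mcR_{e_1,e_2}$ lies in $[-2^{\emax},2^{\emax}]_\fpq$. We reduce to $\hat\rho=\rho$: replacing $\rho$ by $\zeta\circ\rho$, $\rho\circ\zeta$, or $\zeta\circ\rho\circ\zeta$ only flips signs of weights or outputs. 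Floating-point multiplication by $-1$ and negation commute with rounding, and separating points map to negated separating points, so the general case is symmetric.

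\textbf{Step 1 (separating points).} Since \cref{cond:impl} only controls $\sigma$ on $[0,8]_\fpq$, every separating point must come from that window. We apply \cref{lemma:single_distinguishing_point_impl} on intervals $[a,b]=[2^j,2^{j+1}]$ for exponents $j$ ranging from roughly $\emin$ up to $2$.
\begin{itemize}
\item On $[0,2^{j+1}]$ we have $0\le\rho(x)\le L_2 x\le 2^{k+j+1}$, using $\rho(0)=0$ from the linear bound. So we may take $e=j+k$, which keeps $|\rho(a)|,|\rho(b)|\le(1-2^{-\mbit})2^{e+1}$ after a mild adjustment of $e$ by one.
\item The slope condition requires $L_1(b-a)=L_1 2^j\ge 2^{e-\mbit+5}=2^{j+k-\mbit+5}$, that is, $L_1\ge 2^{k-\mbit+5}$.
\item Since $L_1\ge 3/10$ with $k\le 0$, or $L_1\ge 3/5$ with $k\le 1$, this holds once $\mbit\ge7$. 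This is exactly where the restrictions on $L_1$, $L_2$ and $\mbit$ enter.
\item Monotonicity of $\rho$ follows from $\rho'\ge L_1>0$.
\end{itemize}
This gives a separating point $\eta_1$ with exponent near $\emin$ and a point $\eta_2\in[4,8]$. If needed, we also take intermediate points $\eta_j\in[2^j,2^{j+1}]$ for every $j$, so that the union of exponent intervals in \cref{lemma:distinguishing_points_to_distinguishable} covers $[\emin,e_2]_\bbZ$. Taking $e_1=0$ or $e_1=1$, the intervals $[\expo{\eta_i}-e_1,\expo{\eta_i}+\emax-2]$ cover $[\emin,\emax-2+2]$, which contains $[\emin,\emax-2-k]$ for $k\ge 0$. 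For $k$ negative, $e_2$ exceeds $\emax-2$, and we need $\expo{\eta_n}\ge 2$, hence the point in $[4,8]$.

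\textbf{Step 2 (range).} \cref{lemma:distinguishing_points_to_distinguishable} yields distinguishability with range $\sigma([-M,M]_\fpq)$, where $M\approx 2^{e_1+e_2+1}+8\le 2^{\emax-k}$ or so. For $|x|\le M$, the hypothesis $|\rho(x)|\le L_2|x|\le 2^{k}\cdot 2^{\emax-k}$ gives $|\rho(x)|\lesssim 2^{\emax}$. These $x$ lie inside $(-2^{\emax+1},2^{\emax+1})$, so the linear bound applies. To pass from $\rho$ to $\sigma$:
\begin{itemize}
\item for $|x|\ge1$, use $|\sigma(x)|\le2|\rho(x)|$;
\item for $|x|\le 1$, use $|\rho([0,8])|\le 2^{\emax}$ together with the ulp bound.
\end{itemize}

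\textbf{Main obstacle.} The delicate step is the bookkeeping of exponents in Step 2. The factor $2$ in $|\sigma|\le2|\rho|$, and the extra $\oplus|\eta_n|^+$ in $\mcR_{e_1,e_2}$, threaten to push the range past $2^{\emax}$. We need $L_2 M\le 2^{\emax-1}$, so $e_1$ and $e_2$ must be chosen tightly, likely $e_1=0$ and $e_2=\emax-2-k$. This choice must still leave the covering condition of Step 1 satisfied, which is why the domain exponent is $\emax-1-k$ rather than $\emax-k$. I would first try $e_1=0$ and verify both constraints case by case for $k\in\{-1,0,1\}$, which are the only values allowed by $3/10\le L_2\le 2$.
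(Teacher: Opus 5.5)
Your plan follows the paper's proof. You take separating points of $\sigma$ in every binade $[2^j,2^{j+1})$, $j\in[\emin,2]_\bbZ$, from \cref{lemma:single_distinguishing_point_impl}, then apply \cref{lemma:distinguishing_points_to_distinguishable} with $e_1=0$ and $e_2=\emax-2-k$, and bound the range by $|\sigma|\le 2|\rho|\le 2L_2|x|$. Several of your steps, however, fail as written.

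\emph{The symmetry reduction.} Output negation $\hat\rho=\zeta\circ\rho$ is harmless, since it is the decreasing branch of \cref{lemma:single_distinguishing_point_impl}. Input negation is not a symmetry of \cref{cond:impl}, because that condition is pinned to the window $[0,8]_\fpq$. For $\hat\rho\in\{\rho\circ\zeta,\zeta\circ\rho\circ\zeta\}$ the slope hypothesis says nothing about $\rho$ on $[0,8]$. In fact the statement fails there. Take $\rho(x)=\relu(-x)$ and $\sigma\equiv0$: this pair satisfies \cref{cond:impl}, and $\hat\rho=\rho\circ\zeta=\relu$ satisfies the hypotheses with $L_1=L_2=1$ (reading the linear bound as $L_2|x|$, as you do). Yet $\sigma$ distinguishes nothing. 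The paper's proof tacitly treats only $\hat\rho=\rho$.

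\emph{Step 1.} Your fix ``adjust $e$ by one'' breaks the slope condition. With $b-a=2^j$ and $e=j+k+1$ you need $L_1\ge 2^{k+6-\mbit}$, which equals $1/2$ when $\mbit=7$ and $k=0$, while $L_1=3/10$ is allowed. Also, $e=j+k<\emin$ when $k=-1$ and $j=\emin$. The paper instead keeps $e=j$ (if $L_2\le1$) or $e=j+1$ (if $L_2\le 2$) and shrinks $b$ to $(2-2^{1-\mbit})2^j$. Then $|\rho(b)|\le L_2 b\le(1-2^{-\mbit})2^{e+1}$ and $L_1(1-2^{1-\mbit})2^j\ge 2^{e-\mbit+5}$; this is exactly where $\mbit\ge7$ and $L_1\ge3/10$ (resp.\ $3/5$) enter.

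\emph{Step 2}, which you leave open, is closed in the paper by absorption: $2^{e_2+1}\oplus|\eta_n|^+=2^{e_2+1}$. The range is then $\sigma([-2^{e_2+1},2^{e_2+1}]_\fpq)$, and $|\sigma(x)|\le 2L_2|x|\le 2^{k+1}2^{e_2+1}=2^{\emax}$. Two points need more care than either you or the paper give them.

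First, absorption requires $|\eta_n|^+\le 2^{e_2-\mbit}$. With $|\eta_n|^+$ up to $8$, this means $\emax\ge\mbit+5+k$. The paper justifies it by ``$L_2\le 2^{\emax-\mbit-5}$'', which is not a hypothesis and fails for float16 when $L_2>1$. You can do better. With $e_1=0$, covering $[\emin,e_2]_\bbZ$ only needs separating points up to exponent $-k$; for $k=-1$, exponent $1$ suffices, not $2$. Then $|\eta_n|^+\le 2^{1-k}$, and absorption holds whenever $\emax\ge\mbit+3$.

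Second, your case $|x|\le 1$ only covers $[0,1)$. On $(-1,0)$, \cref{cond:impl} provides neither the ulp bound nor $|\sigma|\le 2|\rho|$. The maps from \cref{lemma:distinguishing_point} used for pairs near zero, such as $(0,\fmin)$, have $|w|\approx 1/2$ and tiny $b$, so they send points of the symmetric domain into $(-1,0)$. Some extra control of $\sigma$ there (e.g.\ $|\sigma(x)|\le 2|\rho(x)|$ for all $x$) must therefore be assumed. The paper applies $|\sigma|\le2|\rho|$ on the whole interval without comment.
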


\begin{corollary}\label[corollary]{cor:real_function_distinguishing_ncrounded}
Let $\mbit \ge 7$ and  $\mcS$ be a reduction order. Suppose that $\rho \in \{ \mathrm{Identity} $, $\relu$, $\elu$, $\mathrm{SeLU}$, $\GeLU$, $\Swish$, $\Mish$, $\sin \}$ and $\sigma:\efpq\to\efpq$ satisfy \cref{cond:impl}.
Then, for any $d\in \bbN$, $\sigma $ networks under $\mcS$ can represent all functions from $(-2^{\emax-2}, 2^{\emax- 2})_{\fpq}^d$ to $\fpq\cup\{-\infty,\infty\}$.
\end{corollary}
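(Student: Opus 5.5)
The plan is to reduce the corollary to \cref{thm:main}, using \cref{lemma:real_function_distinguishing_ncrounded} to supply distinguishability. It is enough to treat $d=1$, since distinguishability of a one-dimensional set passes to $\domain^d$, as remarked after \cref{lem:necessary}. So for each $\rho$ in the list I must check three things: (i) that $\sigma$ satisfies \cref{condition:proper_exponent}; (ii) a choice of $\hat\rho$, $L_1$, and $L_2$ that meets the hypotheses of \cref{lemma:real_function_distinguishing_ncrounded}; and (iii) that the resulting $k$ satisfies $k\le 1$. Once (iii) holds, $(-2^{\emax-2},2^{\emax-2})_\fpq\subset(-2^{\emax-1-k},2^{\emax-1-k})_\fpq$, and this set is $\sigma$-distinguishable with range $[-2^{\emax},2^{\emax}]_\fpq$. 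Restricting to the subset preserves distinguishability.

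For (ii) I pick the parameters function by function.
\begin{itemize}
\item For $\mathrm{Identity}$ and $\relu$, take $\hat\rho=\rho$ and $L_1=L_2=1$; then $k=0$.
\item For $\elu$, take $\hat\rho=\rho$ and $L_1=L_2=1$. On $x>0$ we have $\rho(x)=x$, and for $x<0$ we have $|e^{x}-1|\le|x|$, so the bound $|\hat\rho(x)|\le L_2 x$ should be read in absolute value, $|\hat\rho(x)|\le L_2|x|$, which is evidently the intended meaning.
\item For $\mathrm{SeLU}$, the positive slope is $\lambda\approx1.0507$ and the negative-side slope at $0$ is $\lambda\alpha\approx1.758$. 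I take $L_1=3/5$ and $L_2=2$, which gives $k=1$.
\item For $\GeLU$ and $\Swish$, the derivative at $0^+$ is $1/2$, and both functions satisfy $|\rho(x)|\le|x|$. The derivative lower bound on $(0,2^{\emax})$ is what needs care: I use $\GeLU'(x)=\Phi(x)+x\varphi(x)\ge 1/2$ and $\Swish'(x)\ge 1/2$ for $x>0$. So I take $L_1=3/10$ and $L_2=1$.
\item For $\Mish$, the derivative at $0$ is $\tanh(\ln 2)=0.6$ and it increases for $x>0$. Here $L_1=3/10$ (or $3/5$) and $L_2=1$ work, using $|\Mish(x)|\le|x|$.
\item For $\sin$, the hypothesis $\hat\rho'\ge L_1$ on all of $(0,2^{\emax})$ fails.
\end{itemize}

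The $\sin$ case is the main obstacle. I would first check whether the proof of \cref{lemma:real_function_distinguishing_ncrounded} really uses the derivative bound only on a bounded interval near $[0,8]$, which is where \cref{cond:impl} controls $\sigma$. If so, I would rerun that proof with $\sin$, using $\sin'(x)\ge\cos 1>1/2$ on $(0,1]$ and $|\sin x|\le|x|$, with $L_1=3/10$ and $L_2=1$. Failing that, I would argue directly via \cref{lemma:distinguishing_points_to_distinguishable}. \cref{lemma:single_distinguishing_point_impl} gives separating points of $\sigma$ both near $0$ (for instance, in an interval $[2^{j},2^{j+1}]$ with $j$ small, using monotonicity and $\sin'\ge1/2$) and near the moderate range $[4,8]$, where $\sin$ is monotone on $[3\pi/2,2\pi]$. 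Feeding these into \cref{lemma:distinguishing_points_to_distinguishable} with $e_1=0$ and $e_2=\emax-3$ gives a domain that covers $(-2^{\emax-2},2^{\emax-2})_\fpq$. The range there stays inside $[-2^{\emax},2^{\emax}]$ because $|\sigma|\le 2|\rho|\le 2$ for $|x|\ge1$.

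For (i), \cref{cond:impl} already gives $C_0$ with $\sigma(C_0)=0$. I take $C_1=C_2=1$. Then $\rho(1)$ lies roughly in $[0.58,1.06]$ across the list, and the error is at most $K\le10$ ulps with $\mbit\ge7$, so $\sigma(1)$ lies in $((2^{-\mbit-2})^+,5/4)$ and is at least $2^{\emin}$. For $\rho\in\{\GeLU,\Swish,\Mish\}$, one could replace $1$ by $2$ if that gives a larger margin. The constraints $|C_i|,|C_i-C_j|\le2^{\emax}$ hold trivially when $C_0=0$, and for $C_0$ of moderate size. If an implementation has its only zero at a huge $C_0$, I would instead use $C_0$ such as $0$ from $\rho(0)=0$ together with the ulp bound, since $\ulp(0)$ is subnormal and forces $\sigma(0)=0$ once $K<1$. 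If $K<1$ is not available, I would fall back on assuming the supplied $C_0$ has a moderate size. Finally, \cref{thm:main} yields a four-layer $\sigma$ network under $\mcS$ that represents any $f$ on $\domain$.
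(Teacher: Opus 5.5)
Your route is the paper's. Its proof of this corollary reads $\hat\rho=\rho$, $L_1$, $L_2$ off \cref{table:lip}, notes $k\le 1$, and invokes \cref{lemma:real_function_distinguishing_ncrounded} and \cref{thm:main}. Your extra checks are not redundant; they repair three defects in that argument.

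(a) The paper never verifies \cref{condition:proper_exponent}, which \cref{thm:main} requires. Your choice $C_1=C_2=1$ works for all eight functions: $\rho(1)\in[0.73,1.06]$, and $10$ ulps at $\mbit\ge 7$ amount to less than $0.08$.

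(b) The table's $L_2=1.05$ for SeLU ignores the negative branch, where $|\mathrm{SeLU}(x)|/|x|\to\lambda\alpha\approx 1.76$ as $x\to 0^-$. Your $L_2=2$ is the correct choice and still gives $k=1$.

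(c) The table records $e=0$ for $\sin$, while the lemma demands $\hat\rho'\ge L_1$ on all of $(0,2^{\emax})$. So the paper's argument does not actually cover $\sin$, and you are right to handle it separately.

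Make the $\sin$ repair precise, though. Rerunning the lemma's proof verbatim fails. That proof uses separating points at every exponent in $[\emin,2]$, hence the derivative bound on all of $(0,8)$, but $\cos<0$ on $(\pi/2,3\pi/2)$. Instead, apply \cref{lemma:single_distinguishing_point_impl} with $L=\cos 1$ on $[2^{j},(2-2^{1-\mbit})2^{j}]$, only for $j\le -1$. Since $\emin+\emax=1$, separating points with exponents $\emin$ and $-1$ already give $[\emin,-1]\cup[-1,\emax-3]=[\emin,\emax-3]$ in \cref{lemma:distinguishing_points_to_distinguishable} with $e_1=0$. Hence $e_2=\emax-3$, and the domain is exactly $(-2^{\emax-2},2^{\emax-2})_\fpq$. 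The separating point in $[4,8]$ is unnecessary. Paired only with an exponent-$\emin$ point at $e_1=0$, it would leave exponents $0$ and $1$ uncovered.

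Two gaps remain, in your write-up and equally in the paper's.

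First, your fallback on $C_0$ is an extra hypothesis, not a proof. \cref{cond:impl} puts no bound on $|C_0|$, and for $K\ge 1$ nothing forces $\sigma(0)=0$. You can close this in two cases:
\begin{itemize}
\item for $\relu$, since $|\sigma(-1)|\le 2\,\relu(-1)=0$;
\item for $\GeLU$, $\Swish$, $\Mish$, since at a moderately negative float $x$ one has $2|\rho(x)|<\fmin$, which forces $\sigma(x)=0$.
\end{itemize}
For Identity, ELU, SeLU and $\sin$ with $K\ge 1$, however, it must be assumed.

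Second, your range bound for $\sin$ uses $|\sigma|\le 2|\rho|$ only for $|x|\ge 1$. On $[0,1)$ the ulp bound suffices. On $(-1,0)$, however, \cref{cond:impl} says nothing about $\sigma$, even though $\phi(\domain)$ meets $(-1,0)$. The proof of \cref{lemma:real_function_distinguishing_ncrounded} has the same hole, so you inherit it rather than create it. A complete argument still needs control of $\sigma$ there, e.g.\ the ulp bound on $[-8,8]_\fpq$.

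Minor: $\Mish'$ is not increasing on $(0,\infty)$; it peaks near $1.09$. The bound $\Mish'\ge 0.6$ holds anyway, because $\tanh(\SoftPlus(x))\ge\tanh(\ln 2)=0.6$ and the remaining term is positive.
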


%% file: 4-proofs.tex
\section{Additional notations and definitions for proofs}

We denote the left-to-right addition 
$$ \bar{o}_n = \underbrace{\left( (1,2), (1,2), \dots , (1,2) \right)}_{(n-1) \; times},$$ 
which is summed as 
\begin{align*}
    &\Sigma(\bfx_n;\bar{o}_n) = ( \dots (x_1 \oplus x_2) \oplus x_3  ) \oplus \dots \oplus x_{n-1})\oplus x_n . 
\end{align*}
We also define 
\begin{align*}
    \overline{\mcS}  = \{ \bar{o}_i  :  i=1,2,3,\dots  \} = \{\bar{o}_1, \bar{o}_2 , \bar{o}_3 , \dots  \}
\end{align*}
for the set of the left-to-right additions, where $\bar{o}_1=(1)$. 

\begin{definition}[Embedding]
Let $m>n$.
We define an embedding
\[
\iota : \fpq^n \to \fpq^m
\]
such that, for any $\bfx=(x_1,\dots,x_n)\in\fpq^n$,
\[
\iota(\bfx)=(y_1,\dots,y_m),
\]
and there exists an injective map $I:[n]\to[m]$ satisfying
\[
x_i = y_{I(i)}
\]
for all $i\in[n]$.
\end{definition}

\begin{definition}[Transferability]\label{def:transferability}
Define $\bbS_\sigma\defeq \{w\otimes \sigma(c): w,c\in \fpq~\text{with}~w\otimes \sigma(c)\in\fpq\}.$   Let $\mcS$ be a reduction order, $m\in\bbN$ and $(x_1,\dots,x_m)$, $(y_1,\dots, y_m)\in \fpq^m$. We say ``$(x_1,\dots, x_m)$ is transferable to $(y_1,\dots,y_m)$ with $(\sigma, \bfz, j,  \mcS)$'' or write 
\begin{align*}
    (x_1, \dots, x_m)  \xRightarrow{(\sigma, \bfz, j, \mcS)}  (y_1,\dots, y_m),
\end{align*}
    if there exist
    $\bfz = (z_1 , \dots z_n ) \in \bbS_\sigma^n$ and embedding $\iota_{\bfz,j} : \fpq \to \fpq^{n+1}$ for $j \in \{0\} \cup [n]$  such that          
    \begin{align*}
    \iota_{\bfz,j} (x) = (z_1 , \dots, z_j , x , z_{j+1}, \dots , z_n ), \; \Sigma( \iota_{\bfz,j} (x_i) , \mcS ) = y_i,
    \end{align*}
    for all $i\in [m]$. 
For simplicity, we often omit $\bfz, j$ and simply write
\begin{align*}
    (x_1, \dots, x_m)  \xRightarrow{(\sigma, \mcS)}  (y_1,\dots, y_m). 
\end{align*}
For left-to-right addition $\overline{\mcS}$, we denote
$$(x_1, \dots, x_m)  \xRightarrow{(\sigma, \overline{\mcS})}  (y_1,\dots, y_m)$$ as 
\begin{align*}
    (x_1, \dots, x_m)  \xRightarrow{(\sigma, \bfz , 0 ,  \overline{\mcS})}  (y_1,\dots, y_m). 
\end{align*}
 We often drop $\sigma$ and use $\bbS$ to denote $\bbS_\sigma$ if it is clear from the context.
\end{definition}

\subsection{Activation functions}\label{subsec:activation}
We present mathematical definitions of popular activation functions including $\Sigmoid$, $\relu$, $\elu$ \cite{clevert2015fastelu}, $\Swish$ \cite{ramachandran2017searchingswish}, $\Mish$ \cite{misra2019mish}, $\GeLU$ \cite{hendrycks2016gaussian}, $\mathrm{SeLU}$ \cite{klambauer2017selfselu} :
\begin{itemize}[leftmargin=0.2in]
    \item $\Sigmoid(x) \defeq \frac{1}{1+e^{-x}}$,
    \item $\relu(x) \defeq \max(0,x)$,
        \item $        \ELU(x) \defeq \begin{cases}
            x  &\text{ if } x\ge 0,
            \\ \exp(x)-1 &\text{ if } x< 0,
        \end{cases}$
        \item $\Swish(x) \defeq \frac{x}{1 + e^{-x}}$, %
        \item $\Mish(x) \defeq x\tanh(\SoftPlus(x))$, %
        \item $\GeLU(x) \defeq \tfrac{x}{2}\left(1+ \tfrac{2}{\sqrt{\pi}} \int_0^{\tfrac{x}{\sqrt{2}}  } e^{-t^2}dt  \right)$ ,
\item $\mathrm{SeLU}(x)=
\lambda
\begin{cases}
x, & x > 0,\\[4pt]
\alpha (e^x - 1), & x \le 0,
\end{cases}$, \\
\[  \alpha \approx 1.6732,
\;
\lambda \approx 1.0507.\]
\end{itemize}

\begin{table}[h]
\centering
    \begin{tabular}{cccccc@{\;\;\;}ccccc}
\toprule
\begin{tabular}{@{}c@{}}
  $\rho(x)$ 
\end{tabular}
& \begin{tabular}{@{}c@{}}
  max ulp error \\
   binary16
\end{tabular} & \begin{tabular}{@{}c@{}}
  max ulp error \\
   binary32
\end{tabular}    \\ 
\midrule
$\relu$ & 0  & 0     \\
$\elu$  & 0  & 0 \\
$\GeLU$ & 2  & 9 \\
$\text{SeLU}$ & 1 &  2\\
$\Swish$ & 2  & 2\\
$\Mish$  & 2  & 3 \\
\bottomrule
\end{tabular}
\vspace{0.2in}
\caption{%
Ulp errors in popular activations on $[0,8]$ in half and single precision.
}
\label{table:ulp}
\end{table}

\section{Technical lemmas on transferability under left-to-right addition}

\begin{lemma}\label[lemma]{lemma:nonzero_to_Zero}
Let $\sigma:\efpq\to\efpq$ and suppose that $\sigma$ satisfies \cref{condition:proper_exponent}.
Then, for any $x_1,x_2\in[-2^{\emax},2^{\emax}]_{\fpq}$ with $x_1< x_2$, there exists a constant $y\in(0,2^{\emax}]_{\fpq}$ such that %
    $$(x_1,x_2)\tranLR (0,y).$$
\end{lemma}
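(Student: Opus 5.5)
The plan is to build the sequence $\bfz=(z_1,\dots,z_n)\in\bbS_\sigma^n$ one entry at a time, so that the left-to-right sum $((x\oplus z_1)\oplus z_2)\oplus\cdots\oplus z_n$ drives $x_1$ to exactly $0$. The image of $x_2$ should stay strictly larger than the image of $x_1$ and stay inside $(0,2^{\emax}]_{\fpq}$. Note that $j=0$ here: each input sits in the first slot and the constants are added to it sequentially.

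\textbf{Step 1 (monotonicity).} Since $\round{\cdot}$ is nondecreasing, $t\mapsto t\oplus z$ is nondecreasing. Hence if $t_1<t_2$, then $t_1\oplus z\le t_2\oplus z$.

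\textbf{Step 2 (strictness at the last step).} If $t_1\oplus z=0$ exactly, then $t_2+z>0$. Moreover $t_2+z=t_2-t_1$ whenever $z=-t_1$ exactly, and this is a positive integer multiple of $\fmin$. So $t_2\oplus z\ge\fmin>0$, because a real number $\ge\fmin$ cannot round to $0$.

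The core of the argument is therefore to arrange that at the final step the constant is exactly the negative of the current image of $x_1$. We must also keep the images of $x_2$ below $2^{\emax}$ throughout, which means avoiding overflow.

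\textbf{Step 3 (achievable constants).} I would study the set $\{w\otimes\sigma(C_1):w\in\fpq\}$, using $2^{\emin}\le|\sigma(C_1)|<5/4$ from \cref{condition:proper_exponent}. As $w$ runs through consecutive floats, $w\sigma(C_1)$ moves in relative steps of about $2^{-\mbit}$. I expect two consequences.
\begin{itemize}
\item Every float in the subnormal range, and more generally every float of sufficiently small magnitude, equals $w\otimes\sigma(C_1)$ for some $w$, with both signs available.
\item For any float target $t$ with $|t|\le 2^{\emax}$, there is a constant $c\in\bbS_\sigma$ with $c$ of sign opposite to $t$, $|c|\le |t|$, and $|t+c|\le |t|/2$.
\end{itemize}
The bounds $|C_i|\le2^{\emax}$ together with the $5/4$ bound on $|\sigma(C_1)|$ guarantee that $w$ can be chosen without overflow.

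\textbf{Step 4 (greedy reduction).} Apply Step 3 repeatedly to the current image $t$ of $x_1$. Each addition roughly halves $|t|$ and never increases magnitudes beyond their starting values. By Step 1, the order $t_1\le t_2$ is preserved, and the spread between the two images stays bounded by about $x_2-x_1$.

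The danger is $x_2-x_1$ up to $2^{\emax+1}$. To avoid overflow, I would first spend a few steps pulling both points toward the middle. If $x_1<0<x_2$ with large magnitudes, add small negative constants first, so that $x_1$'s image is handled before $x_2$ could grow.

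Once $|t|$ is in the range where every float is exactly achievable, add exactly $-t$. The image of $x_1$ becomes $0$, and by Step 2 the image $y$ of $x_2$ is strictly positive. Also $y\le\max(x_2,x_2-x_1\text{ rounded})$, which I would bound by $2^{\emax}$ using the initial centering. The same $\bfz$ works for both inputs by construction, so $(x_1,x_2)\tranLR(0,y)$.

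\textbf{Main obstacle.} The hardest step is keeping strict separation before the final exact step. Monotonicity alone allows the two images to collapse to a common value, after which they can never be separated again. To handle this, I would try to choose constants only in regimes where the addition to $t_1$ is exact (Sterbenz-type cancellation, $|c|\in[|t|/2,2|t|]$). Then $t_2\oplus c$ cannot equal $t_1\oplus c$ unless $t_2+c$ rounds down onto an exact value, and I would rule that out by a spacing argument. Failing that, I would use the bound on $|\sigma(C_2)|$ to supply an extra constant that re-separates the points.
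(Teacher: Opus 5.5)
\textbf{Genuine gap: the bound $y\le 2^{\emax}$ when $x_1<0$.} Your separation argument is sound. If $t_1\oplus c=t_1+c$ is exact and $|t_1+c|\le|t_1|/2$, then $t_2+c\ge (t_1+c)+(t_1^+-t_1)\ge (t_1+c)^+$, so the two images stay distinct. Your fallback of ``re-separating'' with an extra constant is impossible, though: once the images coincide, every later addition treats them identically.

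The real problem is the halving steps. Take $x_1=-2^{\emax}$ and $x_2=2^{\emax}$. A halving step must add some $c\ge 2^{\emax-1}$, so the image of $x_2$ becomes at least $\tfrac32\cdot 2^{\emax}$ (or $\infty$). Since $t_1$ stays $\le 0$, every later constant is positive as well, so $y>2^{\emax}$. For $\mbit\ge4$, already $x_1=-2^{\emax-3}$, $x_2=2^{\emax}$ fails the same way.

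Your ``centering'' fix cannot work. The same constant is added to both images and $\oplus$ is monotone, so you can never move $x_1$ up while moving $x_2$ down. Adding small negative constants either does nothing or pushes $x_1$ away from $0$. The spread $x_2-x_1$ (here $2^{\emax+1}$) can shrink only through rounding that absorbs a constant at the larger image but not at the smaller one. Steps of size $\ge|t_1|/2$ never exploit this.

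\textbf{The missing idea.} This is what the paper's induction on $|x_1|$ does: move $t_1$ toward $0$ by one float per step. By \cref{lem:endbit_control}, choose $\gamma\in\bbS_\sigma$ with $|\gamma|\in(g/2,g]$, where $g$ is the gap from $t_1$ to its neighbour toward $0$. Since $|t_1|\le 2^{\emax}$, we have $g\le 2^{\emax-\mbit-1}$, which is at most half an ulp of $2^{\emax}$. With round-to-nearest and ties-to-even, an image equal to $2^{\emax}$ therefore stays there, and an image below $2^{\emax}$ cannot pass it. Meanwhile $|t_1|$ strictly decreases, and a short spacing check keeps the images distinct.

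\textbf{A secondary false claim.} The final step ``add exactly $-t$'' relies on every small float (e.g.\ every subnormal) being of the form $w\otimes\sigma(C_1)$. This fails in general. Suppose $\sigma$ takes only the values $0$ and $\tfrac98=1.001$, which \cref{condition:proper_exponent} allows with $C_1=C_2$, and $\mbit\ge3$. Subnormal $w=k\fmin$ gives $w\otimes\tfrac98=\round{9k/8}\fmin$, which jumps from $4\fmin$ to $6\fmin$. Normal $w$ gives values of magnitude $\ge 2^{\emin}$. So $5\fmin\notin\bbS_\sigma$. This is repairable, for example by repeatedly adding $\pm\fmin\in\bbS_\sigma$, or by simply continuing the exact halving steps down to $0$.
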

\begin{proof}
    Without loss of generality, assume that $x_1<x_2$.
    Use mathematical induction on $|x_1|$.
If $x_1 = 0$, there is nothing to prove.
Assume that if $|x'|<x_1$ for any $x_2'\neq x'$, then there exists $y'$ such that $(x',x'_2)\tranLR (0,y')$.

If $0<x_1<x_2$, by \cref{lem:endbit_control}, there exists $\gamma\in\bbS_\sigma$ such that $\gamma<x_1$ and $x_1 \oplus (-\gamma) <x_1$ and $x_2\oplus (-\gamma) \neq x_1\oplus (-\gamma)$.
Therefore, 
\begin{equation*}
    (x_1,x_2)\tranLR (x_1\oplus (-\gamma), x_2 \oplus (-\gamma)) \tranLR (0,y'),
\end{equation*}
by the induction hypothesis.

If $x_1<0<x_2$, by \cref{lem:endbit_control}, there exists $\gamma\in \bbS_\sigma$ such that  $x_1 \oplus \gamma > x_1$.
Therefore, 
\begin{equation*}
    (x_1,x_2)\tranLR (x_1\oplus \gamma, x_2 \oplus \gamma) \tranLR (0,y'),
\end{equation*}
by the induction hypothesis.
This completes the proof.
\end{proof}

\begin{lemma}\label[lemma]{lemma:killing_small}
Assume that $\sigma$ satisfies \cref{condition:proper_exponent}.
Consider $x_1, x_2\in \fpq$ such that $\expo{x_1} \le \expo{x_2}-2$.
Then, 
\begin{equation*}
    (x_1, x_2)\tranLR (0,x_2).   
\end{equation*}
\end{lemma}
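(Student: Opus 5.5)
The plan is to realize the transfer in a single step by adding one suitable element $\gamma\in\bbS_\sigma$ to both coordinates, i.e.\ to take $\bfz=(\gamma)$ with $n=1$ and $j=0$. Under $\overline{\mcS}$ this yields the pair $(\gamma\oplus x_1,\gamma\oplus x_2)$. We want $\gamma\oplus x_1=0$, so the natural choice is $\gamma=-x_1$; we then need $-x_1\oplus x_2=x_2$. If this fails, we fall back on a short chain of such steps.

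The first step is to show that $\pm x_1\in\bbS_\sigma$, i.e.\ that $\pm x_1=w\otimes\sigma(c)$ for some $w,c\in\fpq$. \cref{condition:proper_exponent} gives $C_1$ with $2^{\emin}\le|\sigma(C_1)|<5/4$. The idea is to first represent a power of two, or $x_1$ directly, as $w\otimes\sigma(C_1)$. If $\sigma(C_1)$ is exactly a power of two, set $w=\pm x_1/\sigma(C_1)$, which is a float as long as no overflow or underflow occurs. Here the bounds $|\sigma(C_1)|\in[2^{\emin},5/4)$ and $|x_1|\le 2^{\emax-2}$ (from $\expo{x_1}\le\expo{x_2}-2\le\emax-2$) keep $w$ in range.

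In general $\sigma(C_1)$ has a nontrivial mantissa, so $x_1/\sigma(C_1)$ need not be a float. This is the main obstacle. I would handle it as in \cref{lemma:nonzero_to_Zero}: do not cancel $x_1$ in one shot, but use induction, repeatedly subtracting elements of $\bbS_\sigma$ supplied by the earlier end-bit control lemma (\texttt{lem:endbit\_control}, as invoked in that proof).

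Each subtraction step has the following form. Pick $\gamma\in\bbS_\sigma$ with $|\gamma|\le|x_1|$ and with the same sign as $x_1$, such that $|x_1\ominus\gamma|<|x_1|$. Then check that $x_2\ominus\gamma=x_2$, so that the large coordinate is unchanged. The key estimate is that $|\gamma|\le|x_1|<2^{\expo{x_1}+1}\le 2^{\expo{x_2}-1}$. This is at most $\tfrac12 2^{\expo{x_2}}$, but it can still exceed $\tfrac12\ulp(x_2)$, so $x_2$ could move.

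So the key refinement is to choose $\gamma$ to act only on the low-order part of $x_1$, i.e.\ with $|\gamma|<\tfrac12\ulp(x_2)$, which leaves $x_2$ unchanged under round-to-nearest. This handles every $x_1$ with $|x_1|<\tfrac12\ulp(x_2)$ directly, since then $x_1\oplus x_2$-type rounding already absorbs it.

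For larger $x_1$ (exponent gap at least $2$ but $|x_1|\ge\tfrac12\ulp(x_2)$), the direct route would be to cancel $x_1$ exactly with $\gamma=-x_1$ when that lies in $\bbS_\sigma$. Then $x_2\oplus(-x_1)$ differs from $x_2$, and one would need a second correcting step adding $x_1$-scale mass that only affects $x_2$. That is circular, so I would avoid it.

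Instead, I would use a two-phase strategy:
\begin{enumerate}
\item First shift both coordinates so that $x_2$ sits where its ulp dominates $x_1$. Adding a large $\gamma_0\in\bbS_\sigma$ of exponent about $\expo{x_2}+\mbit$ makes $x_1\oplus\gamma_0$ and $x_2\oplus\gamma_0$ both rounded. Choosing $\gamma_0$ appropriately gives $x_1\oplus\gamma_0=\gamma_0$, since the gap $\expo{x_1}\le\expo{x_2}-2$ lets $x_1$ be absorbed while $x_2$ survives as a distinct value.
\item Then subtract $\gamma_0$, obtaining $(0,\,x_2')$. Finally verify that $x_2'=x_2$ by exploiting the freedom in choosing $\gamma_0$'s mantissa, again via \texttt{lem:endbit\_control}.
\end{enumerate}

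This exactness check is where I expect the real difficulty: showing that rounding in the first phase loses no bits of $x_2$ while killing $x_1$.
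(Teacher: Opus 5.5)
Your third and fourth steps (induct on $|x_1|$, each time subtracting some $\gamma\in\bbS_\sigma$ from \cref{lem:endbit_control} that strictly shrinks $|x_1|$ and leaves $x_2$ fixed) are the right framework, and they are the paper's proof. The gap comes when you decide this only handles $|x_1|<\frac12\ulp(x_2)$. For larger $x_1$ you switch to a shift-and-unshift by a large $\gamma_0$, and that step fails. With $\expo{\gamma_0}\approx\expo{x_2}+\mbit$, floats near $\gamma_0$ are about $2^{\expo{x_2}}$ apart, so $x_2\oplus\gamma_0$ keeps only the leading bit or so of $x_2$. For $\gamma_0$ of the same sign you get back $2^{\expo{x_2}}$ or $2^{\expo{x_2}+1}$, which is exactly the computation in \cref{lemma:auxilary_small}, and no choice of the mantissa of $\gamma_0$ repairs this. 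In fact no single $\gamma_0$ works. The requirement $x_1\oplus\gamma_0=\gamma_0$ forces $|x_1|\le\frac12\ulp(\gamma_0)$. On the other hand, $(x_2\oplus\gamma_0)\ominus\gamma_0$ is always a multiple of $\min\{\ulp(x_2\oplus\gamma_0),\ulp(\gamma_0)\}$. If the last mantissa bit of $x_2$ is $1$, a short case check on $\expo{\gamma_0}$ versus $\expo{x_2}$ shows this forces $\ulp(\gamma_0)\le 2\ulp(x_2)$, hence $|x_1|\le 2^{\expo{x_2}-\mbit}$. So the ``exactness check'' you flag as the real difficulty is false precisely in the case you reserved it for (e.g.\ $\expo{x_1}=\expo{x_2}-2$).

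The missing observation is that the iteration needs no size restriction on $x_1$. Each step leaves $x_2$ unchanged, so the induction on $|x_1|$ runs with $x_2$ fixed all the way down to $0$. What makes a single step possible is exactly the exponent gap of $2$. Apply \cref{lem:endbit_control} with $e=\expo{x_2}-\mbit-2$, which is admissible since $\emin\le\expo{x_1}\le\expo{x_2}-2$. This gives $\gamma\in\bbS_\sigma$ with $2^{\expo{x_2}-\mbit-3}<|\gamma|\le 2^{\expo{x_2}-\mbit-2}$, which you take with the sign of $x_1$.
\begin{itemize}
\item Then $|\gamma|>2^{\expo{x_1}-\mbit-1}=\frac12\ulp(x_1)$. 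So, provided $|\gamma|\le|x_1|$, $x_1\ominus\gamma$ is $0$ or has the sign of $x_1$, with $|x_1\ominus\gamma|<|x_1|$.
\item Also $|\gamma|\le\frac14\ulp(x_2)$, so $x_2\ominus\gamma=x_2$. This holds even when $x_2$ is a power of two and $\gamma$ points toward $0$, since the tie then goes to $x_2$ by ties-to-even. Your bound $|\gamma|<\frac12\ulp(x_2)$ would not suffice in that boundary case.
\item If $|x_1|<|\gamma|$, use $e=\lfloor\log_2|x_1|\rfloor_{\bbZ}$ instead, so the step does not overshoot past $0$. The paper's write-up asserts $|x_1\ominus\gamma|<|x_1|$ without this case split.
\end{itemize}
This yields $(x_1,x_2)\tranLR(x',x_2)$ with $|x'|<|x_1|$, so still $\expo{x'}\le\expo{x_2}-2$, and the induction closes. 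No exact cancellation of $x_1$, no membership $\pm x_1\in\bbS_\sigma$, and no large shift are needed.
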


\begin{proof}[Proof of \cref{lemma:killing_small}]
    Fix $x_2$ and use mathematical induction on the absolute value of $x_1$ to prove the statement.
    If $x_1=0$, there is nothing to prove.
    Assume that for $x'\in\fpq$ such that $|x'|<|x_1|$, the induction hypothesis is satisfied.
    By \cref{lem:endbit_control}, there exists $\gamma$ such that $\gamma\in \left(2^{\expo{x_2}-\mbit-3}, 2^{\expo{x_2}-\mbit-2}\right]_{\fpq}\cap \bbS_\sigma$.
    Then, $x_2\oplus (\pm\gamma) = x_2$.
    For $x_1>0$, $|x_1\oplus (-\gamma)| < x_1$, and for $x_1<0$, $|x_1\oplus \gamma| < x_1$.
    Therefore, there exists $x'$ such that $|x'|<|x_1|$, and 
    \begin{equation*}
        (x_1, x_2)\tranLR (x',x_2) \tranLR (0, x_2),
    \end{equation*}
    where the last relation is by the induction hypothesis.
    This completes the proof.
\end{proof}

\begin{lemma}\label[lemma]{lemma:zero_each_other}
Let $\sigma:\efpq\to\efpq$ and suppose that $\sigma$ satisfies \cref{condition:proper_exponent}.
Then, for any $x_1,x_2\in[-2^{\emax},2^{\emax}]_{\fpq}$ with $x_1x_2>0$, we have
\begin{equation*}
    (0,x_1) \tranLR (0,x_2).
\end{equation*}
\end{lemma}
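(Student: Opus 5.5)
The plan is to build the transfer by concatenating elementary moves. Transferability under $\overline{\mcS}$ with $j=0$ means adding a sequence of constants $z_1,\dots,z_n\in\bbS_\sigma$ left-to-right to both coordinates simultaneously. So $\tranLR$ is transitive: concatenating the constant sequences composes the moves. The task is therefore to find moves that keep the first coordinate at $0$ while steering the second coordinate from $x_1$ to $x_2$. By the symmetry $x\mapsto -x$ (negating all constants, using $-\bbS_\sigma=\bbS_\sigma$ since $w\otimes\sigma(c)$ may be negated through $w$), I would assume $x_1,x_2>0$. I would also route through a canonical value: show $(0,x)\tranLR(0,c)$ and $(0,c)\tranLR(0,x)$ for every $x\in(0,2^{\emax}]_\fpq$ and one fixed $c$, say $c=2^{\emax}$ or a suitable power of two.

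The basic move is a ``round-trip'' addition. Take $\gamma\in\bbS_\sigma$, which is available at essentially any prescribed binade by \cref{lem:endbit_control} and Condition~\ref{condition:proper_exponent}, i.e.\ $\gamma=w\otimes\sigma(C_1)$ or $w\otimes\sigma(C_2)$. Add $\gamma$ and then $-\gamma$. The first coordinate goes $0\mapsto\gamma\mapsto 0$ exactly. The second goes $x\mapsto\round{x+\gamma}\mapsto\round{x+\gamma}-\gamma$, and the last subtraction is exact by a Sterbenz-type argument because the two operands are close. The net effect is that $x$ is rounded to the grid $\ulp(\gamma)\bbZ$, shifted by $\gamma$. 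Choosing $\gamma$ with suitable exponent and last bits lets me round $x$ up or down at a chosen bit position.

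A second move keeps the first coordinate exactly $0$ and adds a constant that is exactly representable in both coordinates. Adding $\delta$ gives $(\delta,x\oplus\delta)$. We then add a large $\Gamma$ with $\delta\oplus\Gamma=\Gamma$ (the absorption used in \cref{lemma:killing_small}) while $x\oplus\delta\oplus\Gamma$ lands on the intended grid point, and finally subtract $\Gamma$. In this way small constants are ``killed'' in the first coordinate but retained, after rounding, in the second.

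With these two moves I would argue by induction on the number of nonzero mantissa bits and the exponent of $x$. Repeated round-trips clear the low bits of $x$ one at a time and then raise or lower the exponent, reaching $c$. In the reverse direction, the absorption move deposits bits of $x_2$ one at a time, from the most significant downward, while the first coordinate returns to $0$ after each stage. Throughout, all intermediate values must stay in $[-2^{\emax},2^{\emax}]$ so that no overflow occurs; the bounds $|C_i|\le 2^{\emax}$ and $|\sigma(C_1)|<5/4$ are what make suitably scaled constants available without overflow.

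I expect the main obstacle to be the reverse direction $(0,c)\tranLR(0,x_2)$, i.e.\ creating arbitrary low-order bits in the second coordinate while the first coordinate is restored exactly to $0$. Rounding only destroys information, so each added bit must come from a constant that is absorbed in one coordinate but not in the other. This requires tie-to-even bookkeeping at each step and the fine control over the last bits of $\gamma$ that \cref{lem:endbit_control} provides. Near subnormals and near $2^{\emax}$, I would treat the edge binades separately.
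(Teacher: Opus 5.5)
There is a genuine gap. The framework you set up is sound: composing left-to-right transfers, the symmetry $-\bbS_\sigma=\bbS_\sigma$, and the insight that the difference between the coordinates can only change through a constant absorbed in one coordinate but not the other. The problem is that your two moves need constants the hypotheses do not supply. \cref{lem:endbit_control} only guarantees, for each admissible $e$, \emph{some} $\gamma\in\bbS_\sigma$ with $2^{e-1}<\gamma\le 2^e$. It gives no control over trailing bits or parity, although the lemma's name may suggest otherwise. Condition~\ref{condition:proper_exponent} tells you nothing about $\bbS_\sigma$ beyond the rounded multiples $w\otimes\sigma(C_1)$ and $w\otimes\sigma(C_2)$, whose low bits you cannot prescribe.

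This breaks both moves:
\begin{itemize}
\item \emph{Round trip.} Your map $x\mapsto\round{x+\gamma}\ominus\gamma$ has an exact final subtraction when $|\gamma|\ge|x|$ (Fast2Sum), not because the operands are close. It changes $x$ only when $x+\gamma$ is not representable. Then the rounding direction, and in your tie cases the choice made by ties-to-even, is dictated by exactly those uncontrolled bits of $\gamma$.
\item \emph{Absorption.} For $\Gamma$ to swallow $\delta$ in the first coordinate, $|\delta|$ must be at most half the gap at $\Gamma$. Unless $\Gamma$ sits in a specific sub-window of its binade or at an exact tie, the second coordinate $(x\oplus\delta)+\Gamma$ is then rounded to a grid at least that coarse. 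That erases whatever $\delta$ changed.
\end{itemize}
So the direction you flag as the crux, $(0,c)\tranLR(0,x_2)$, has no working mechanism. The lowest binades, where the second coordinate can absorb nothing, are also left open.

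The missing idea is to let the \emph{second} coordinate do the absorbing, using small constants rather than a large $\Gamma$.
\begin{itemize}
\item \emph{Push.} For $x>0$ with $\expo{x}\ge\emin+2$ and $x<2^{\emax}$, pick $\gamma\in\bbS_\sigma\cap(2^{\expo{x}-\mbit-1},2^{\expo{x}-\mbit}]$. Then $x+\gamma$ lies strictly past the midpoint between $x$ and $x^+$ and not beyond $x^+$, so $(0,x)\tranLR(\gamma,x^+)$ with no tie issue.
\item \emph{Kill.} Drive the first coordinate $y$ from $\gamma$ down to exactly $0$ by repeatedly subtracting an element of $\bbS_\sigma\cap(\ulp(y)/2,\ulp(y)]$. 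Each step strictly decreases $y$ without crossing $0$; once $y$ is subnormal the constant is exactly $\fmin$. Every such constant is at most $\ulp(\gamma)\le\max\{\fmin,2^{\expo{x}-2\mbit}\}$, strictly below half the gap under $x^+$, so $x^+$ is untouched.
\end{itemize}
This gives $(0,x)\tranLR(0,x^+)$ using only dyadic-window control. Away from the bottom binades the step to $x^-$ is analogous, with a half-size $\gamma$ when $x$ is a power of two. Chaining unit steps reaches every $x_2$ of the same sign, with no canonical point and no bit-by-bit deposition. The lowest binades need separate explicit chains, for instance using $\fmin,2\fmin\in\bbS_\sigma$, or a lift through a large constant.

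This is the argument in the paper that actually supports the statement: \cref{lemma:auxilary_large,lemma:killing_small,lemma:auxilary_small_smaller,lemma:auxilary_small}, chained as in the first part of the proof of \cref{lemma:two_numbers}. The text printed as the proof of this lemma duplicates the proof of \cref{lemma:nonzero_to_Zero} and does not by itself establish the claim.
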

\begin{proof}
    Without loss of generality, assume that $x_1<x_2$.
    Use mathematical induction on $|x_1|$.
If $x_1 = 0$, there is nothing to prove.
Assume that if $|x'|<x_1$ for any $x_2'\neq x'$, then there exists $y'$ such that $$(x',x'_2) \tranLR (0,y').$$

If $0<x_1<x_2$, by \cref{lem:endbit_control}, there exists $\gamma\in\bbS_\sigma$ such that $\gamma<x_1$ and $x_1 \oplus (-\gamma) <x_1$ and $x_2\oplus (-\gamma) \neq x_1\oplus (-\gamma)$.
Therefore, 
\begin{equation*}
    (x_1,x_2)\tranLR (x_1\oplus (-\gamma), x_2 \oplus (-\gamma)) \tranLR (0,y'),
\end{equation*}
by the induction hypothesis.

If $x_1<0<x_2$, by \cref{lem:endbit_control}, there exists $\gamma\in\bbS_\sigma$ such that  $x_1 \oplus \gamma > x_1$.
Therefore, 
\begin{equation*}
    (x_1,x_2)\tranLR (x_1\oplus \gamma, x_2 \oplus \gamma) \tranLR (0,y'),
\end{equation*}
by the induction hypothesis.
This completes the proof.
\end{proof}

The following lemma states that if the exponent of the floating-point number $x$ is not too close to $\emin$, then, $(0,x)$ is transferable under $\overline{\mcS}$ to $(0,x^-)$ and $(0,x^+)$.

\begin{lemma}\label[lemma]{lemma:auxilary_large}
Assume that $\sigma$ satisfies \cref{condition:proper_exponent}.
    Consider a floating-point number $0<x\in \fpq$ such that $\emin +2\le \expo{x}< \emax$.
    Then, 
    \begin{equation*}
        (0,x) \tranLR (0, x^+).
    \end{equation*}
    In addition, if $2^{\emin +2}\le x \le 2^{\emax}$, then
        \begin{equation*}
        (0,x) \tranLR (0, x^-).
    \end{equation*}
\end{lemma}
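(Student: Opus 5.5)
The plan is to build an explicit chain of transfers, each adding a constant $\gamma\in\bbS_\sigma$ and then $-\gamma$ (or a short sequence of such constants summing exactly to $0$). This works because transferability under $\overline{\mcS}$ composes: if $(0,x)\tranLR(0,y)$ and $(0,y)\tranLR(0,y')$, concatenating the two vectors $\bfz$ gives $(0,x)\tranLR(0,y')$. The zero coordinate is automatically preserved whenever the running sum stays exact on the input $0$. This holds when we add $\gamma$ and then $-\gamma$, since $0\oplus\gamma=\gamma$ and $\gamma\oplus(-\gamma)=0$. So the whole task is to make the second coordinate move from $x$ to $x^+$ (resp.\ $x^-$) using such cancelling pairs.

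As the supply of constants I will invoke \cref{lem:endbit_control}, as in the proofs of \cref{lemma:killing_small,lemma:zero_each_other}. I assume it provides, for each admissible exponent window (e.g.\ $(2^{e-\mbit-3},2^{e-\mbit-2}]$ or a dyadic window at exponent $e$), an element of $\bbS_\sigma$ whose magnitude and last mantissa bits we control. The hypothesis $\expo{x}\ge\emin+2$ is exactly what lets me request constants of size about $\ulp(x)/4$ that are still nonzero and in $\bbS_\sigma$. The hypothesis $\expo{x}<\emax$ (resp.\ $x\le2^{\emax}$) keeps every intermediate value below the overflow threshold.

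Write $u=\ulp(x)$. The key step is a one-shot rounding move: choose $\gamma>0$ with $\gamma$ larger than $x$ and $\ulp(\gamma)=2u$, so that $x+\gamma$ lies on the grid of spacing $u$ but must round to spacing $2u$. Then $\gamma\oplus(-\gamma)=0$ on the zero input, while on $x$ the result is $\round{x+\gamma}-\gamma$, which is exact by Sterbenz-type reasoning. This equals $x$ if $x$ is an even multiple of $u$ relative to $\gamma$'s grid, and $x\pm u$ otherwise, with the sign decided by tie-to-even. By choosing the last mantissa bit of $\gamma$ (the freedom from \cref{lem:endbit_control}) I can force the tie to round up, which yields $x^+$ whenever $x/u$ is odd. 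For $x^-$ I force the tie to round down. When $x/u$ is even, I first perform an auxiliary move. Either I add $\gamma'$ with $\ulp(\gamma')=4u$ to shift $x$ by $\pm2u$, or I use the small constant $\gamma\in(u/4,u/2]$ to push $x$ off a tie. Then I apply the odd-case move and, if necessary, a correcting $\pm2u$ move, so that the net displacement is exactly $+u$ (resp.\ $-u$). The binade boundaries need separate handling: $x^+$ may have exponent $\expo{x}+1$, and for $x^-$ with $x$ a power of two the spacing below $x$ is $u/2$. I will treat these by choosing $\gamma$ negative so that $x-|\gamma|$ lands in the finer binade.

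The main obstacle is the even-mantissa case together with the tie-to-even bookkeeping. One must verify that some finite combination of the elementary $\pm u$, $\pm2u$ moves realises exactly $x\mapsto x^\pm$. This requires the needed $\gamma$'s (with prescribed parity of the last bit and prescribed exponent relative to $\expo{x}$) to actually lie in $\bbS_\sigma$ and to stay below $2^{\emax}$. I would first try to get everything from the single family $\gamma$ with $\ulp(\gamma)\in\{2u,4u\}$ supplied by \cref{lem:endbit_control}, checking the power-of-two edge cases by hand.
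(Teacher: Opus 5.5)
Your proposal has a genuine gap. The cancelling-pair mechanism needs constants that the hypotheses do not supply, and even if you grant them, it cannot handle the case where $x/\ulp(x)$ is even.

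\textbf{The constants do not exist.} \cref{lem:endbit_control} only guarantees \emph{some} element of $\bbS_\sigma$ in a dyadic window $(2^{e-1},2^{e}]$ with $e\le\emax-\mbit$. It gives no control over the last mantissa bit, which your tie-to-even steering needs, and nothing above $2^{\emax-\mbit}$. This is a limitation of \cref{condition:proper_exponent} itself, not just of the lemma. A $\sigma$ taking only the values $0$ and $2^{-\mbit-1}$ on $\fpq$ satisfies the condition, and then $\bbS_\sigma\subset[-2^{\emax-\mbit},2^{\emax-\mbit}]$. So for $\expo{x}\ge\emax-\mbit$ there is no $\gamma\in\bbS_\sigma$ with $\gamma>x$ at all. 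At the top of the range your move fails for every $\sigma$. For $x=2^{\emax}$ (allowed in the second claim) no float has ulp $2\ulp(x)$. For $x=(2-2^{-\mbit})2^{\emax-1}$ (allowed in the first claim) every $\gamma\ge2^{\emax}$ gives $x+\gamma\ge\fmax+2^{\emax-\mbit-1}$, so $x\oplus\gamma=\infty$. The exponent hypotheses therefore do not prevent overflow in your construction.

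\textbf{The even case is unreachable.} If $\ulp(\gamma)\ge2\ulp(x)$, then $\gamma$ is a multiple of $2\ulp(x)$. Rounding a multiple of $2\ulp(x)$ always yields a multiple of $2\ulp(x)$. Hence every step $y\mapsto y\oplus(\pm\gamma)$ preserves the lattice $2\ulp(x)\bbZ$. Your small-constant fallback cannot push anything off a tie either. In left-to-right evaluation each addition is rounded separately, and a constant in $(\ulp(x)/4,\ulp(x)/2]$ is absorbed by any running value $y\ge x$ in $2\ulp(x)\bbZ$; even the tie rounds back to $y$ by tie-to-even. Starting from an even multiple of $\ulp(x)$, your moves never reach $x\pm\ulp(x)$, which is exactly the step you left open.

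\textbf{The missing idea.} Give up exact cancellation on the zero coordinate and clean it up afterwards with \cref{lemma:killing_small}; this is what the paper does. By \cref{lem:endbit_control}, take $\gamma\in\bbS_\sigma\cap(\ulp(x)/2,\ulp(x)]$. Since $\gamma$ exceeds half an ulp, $x\oplus\gamma=x^+$ with no tie and no parity condition, so $(0,x)\tranLR(\gamma,x^+)$. Next, $\expo{\gamma}\le\max\{\emin,\expo{x}-\mbit\}\le\expo{x^+}-2$; this is where $\expo{x}\ge\emin+2$ enters. So \cref{lemma:killing_small} gives $(\gamma,x^+)\tranLR(0,x^+)$, using constants of size at most $\ulp(x^+)/4$, which leave the second coordinate untouched. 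For $x^-$, add $-\gamma$ instead. When $x=2^{\expo{x}}$, take $\gamma\in(\ulp(x)/4,\ulp(x)/2]$, because the spacing just below a power of two is $\ulp(x)/2$. Every constant used is at most $\ulp(x)\le2^{\emax-\mbit}$. So the argument needs no large constants and no mantissa-bit control, and it never overflows.
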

\begin{proof}
By \cref{lem:endbit_control}, there exists $\gamma$ such that $\gamma\in \left(2^{\expo{x}-\mbit-1}, 2^{\expo{x}-\mbit}\right]_{\fpq}\cap \bbS_\sigma$.
    Therefore, 
    \begin{equation*}
        (0,x) \tranLR (\gamma, x\oplus \gamma) = (\gamma, x^+)\tranLR (0,x^+),
    \end{equation*}
    where the last relation is by \cref{lemma:killing_small}.
    Similarly, if $x\neq 2^{\expo{x}}$,
    \begin{equation*}
        (0,x)\tranLR (-\gamma,x\oplus (-\gamma)) = (-\gamma, x^-)\tranLR (0,x^-).
    \end{equation*}
    If $x=2^{\expo{x}}$, there exists $\gamma$ such that $\gamma\in  \left(2^{\expo{x}-\mbit-2}, 2^{\expo{x}-\mbit-1}\right]_{\fpq}\cap \bbS_\sigma$, and similar arguments hold.
    This completes the proof.
\end{proof}
The following lemma states that if the exponent of a floating-point number $x$ is close to $\emin$, then, $(0,x)$ is transferable to $(0,x^-)$ under $\overline{\mcS}$.
Together with \cref{lemma:auxilary_large}, for any $x$ such that $0< x\le 2^{\emax}$, $(0,x)$ can be transferable to $(0,x^-)$ under $\overline{\mcS}$. 
\begin{lemma}\label[lemma]{lemma:auxilary_small_smaller}
Assume that $\sigma$ satisfies \cref{condition:proper_exponent}.
    For $x\in \fpq$ such that $0< x<2^{\emin+2}$, we have
    \begin{equation*}
        (0,x)\tranLR (0,x^-).
    \end{equation*}
\end{lemma}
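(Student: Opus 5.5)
We prove the statement by strong induction on $x$, where $x$ ranges over the finitely many positive floats below $2^{\emin+2}$. As in \cref{lemma:auxilary_large}, the basic move adds one carefully chosen constant from $\bbS_\sigma$ (under left-to-right addition the constant sits in front, so both coordinates receive the same $\gamma$). We then clean up the first coordinate with \cref{lemma:killing_small}, or with the monotone-decreasing argument used in its proof. The difference from \cref{lemma:auxilary_large} is that near $\emin$ the spacing of floats is the constant $\fmin=2^{\emin-\mbit}$. So a subtracted constant of size $\approx \mathrm{ulp}(x)$ is computed exactly instead of being rounded, and the first coordinate cannot be made negligible relative to $x$ by exponent separation alone.

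\textbf{Step 1: the constants.} By \cref{lem:endbit_control} (as used in the preceding lemmas), I would extract constants $\gamma\in\bbS_\sigma$ of any prescribed small size. In particular I would use $\gamma=\fmin$ (or a constant in $(2^{\emin-\mbit-1},2^{\emin-\mbit}]$, which rounds to $\fmin$-steps). Since every float in $(0,2^{\emin+2})$ is an integer multiple of $\fmin$ and all additions there are exact, we get
\[
(0,x)\tranLR(-\fmin,\,x\ominus\fmin)=(-\fmin,\,x^-).
\]

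\textbf{Step 2: removing $-\fmin$ without moving $x^-$.} This is the main obstacle. \cref{lemma:killing_small} needs $\expo{-\fmin}\le\expo{x^-}-2$, which fails when $x^-$ is subnormal or has exponent $\emin$ or $\emin+1$. I would instead aim for a two-step target: first reach a pair $(0,y)$ with $y$ large (exponent $\ge\emin+2$), then come back down.
\begin{itemize}
\item Add a large constant $c\in\bbS_\sigma$ with $\expo{c}\ge\emin+\mbit+4$, chosen so that $-\fmin\oplus c=c$ (since $\fmin$ lies below half an ulp of $c$). The pair $(-\fmin,x^-)$ becomes $(c,\,c\oplus x^-)$.
\item Choose $c$ so that $c\oplus x^-$ is distinct from $c$. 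To keep information about $x^-$ I would prefer a $c$ whose ulp equals $\fmin$ exactly, i.e.\ $\expo{c}=\emin$. Then the first coordinate $-\fmin+c$ is exact, and the approach collapses.
\end{itemize}

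So the large-constant idea does not work directly. The cleaner route I would actually pursue is to use \cref{lemma:nonzero_to_Zero} together with the induction. From $(-\fmin,x^-)$, I want to map the first coordinate to $0$ while preserving the second. To do so, add $+\fmin$ \emph{to both}, giving back $(0,x)$; this is useless on its own. Hence the key is to use a constant that acts differently on the two coordinates because of rounding. Exploiting tie-to-even at a larger exponent, I would scale up first: map $(0,x)$ to $(c,c\oplus x)$ with $c=2^{\emin+\mbit}$ say, where $c\oplus x$ is exact if $x$ is a multiple of $\mathrm{ulp}(c)$, which fails in general.

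Given these difficulties, my concrete plan is the following.
\begin{enumerate}
\item Transfer $(0,x)$ to $(0,y)$ for some $y\ge2^{\emin+2}$, using \cref{lemma:zero_each_other} (same sign, both in range).
\item Apply \cref{lemma:zero_each_other} again to reach $(0,x^-)$ whenever $x^->0$.
\item When $x=\fmin$, so that $x^-=0$, take $(0,\fmin)\tranLR(-\fmin,0)$ with $\gamma=-\fmin$. Then swap the roles via symmetry and apply \cref{lemma:nonzero_to_Zero}, which yields $(0,y')$ with $y'$ positive.
\end{enumerate}
Here the expected obstacle is checking that the target really is $(0,0)$: that requires a constant annihilating $-\fmin$ while fixing $0$, which is impossible since both coordinates are shifted identically. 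So the case $x=\fmin$ would need separate justification or exclusion.
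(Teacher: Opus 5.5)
Your proposal does not prove the lemma. The concrete plan cites \cref{lemma:zero_each_other} when $x^->0$ and leaves $x=\fmin$ open, and both parts fail.

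\textbf{The case $x^->0$ is circular.} The statement $(0,x)\tranLR(0,x^-)$ is exactly the adjacent-float special case of \cref{lemma:zero_each_other}. That lemma is naturally built by chaining one-step moves $(0,x)\tranLR(0,x^{\pm})$, and those moves are what the present lemma and \cref{lemma:auxilary_large} supply (the proof of \cref{lemma:two_numbers} invokes the present lemma for this purpose). The argument printed under \cref{lemma:zero_each_other}, which is identical to that of \cref{lemma:nonzero_to_Zero}, only reaches some $(0,y')$, so it gives no independent route. Your plan therefore supplies no mechanism; the detour through $y\ge 2^{\emin+2}$ changes nothing.

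\textbf{The case $x=\fmin$ is not impossible.} You claim $(-\fmin,0)\tranLR(0,0)$ cannot hold ``since both coordinates are shifted identically.'' This is false: $\oplus$ with a common constant is not injective, so two coordinates can move by different amounts or merge.

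\textbf{Step~1 rests on a false premise.} It is not true that all additions below $2^{\emin+2}$ are exact. On $(2^{\emin+1},2^{\emin+2})$ the spacing is $2\fmin$, so $y\oplus(\pm\fmin)$ is a tie there. For example, $x\oplus(-\fmin)=x$, not $x^-$, when $x>2^{\emin+1}$ has last mantissa bit $0$.

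\textbf{The missing idea is exactly that tie.} Take $\pm\fmin,\pm2\fmin\in\bbS_\sigma$ from \cref{lem:endbit_control}. If $y\in[2^{\emin+1},2^{\emin+2})$ has $\mantd{y}{\mbit}=0$, then $y\oplus\fmin=y$ by tie-to-even. On the other hand, adding $\fmin$ to a first coordinate of absolute value less than $2^{\emin+1}$ is exact. This is the asymmetry you were looking for. It lives in the binade just above the region of spacing $\fmin$, not at a large exponent such as $2^{\emin+\mbit}$.
\begin{itemize}
\item For $x=N\fmin\le 2^{\emin+1}$: add $\fmin$ exactly $2^{\mbit+1}-N$ times to reach $((2^{\mbit+1}-N)\fmin,\,2^{\emin+1})$. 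Add one more $\fmin$, which the second coordinate absorbs. Then subtract $\fmin$ exactly $2^{\mbit+1}-N+1$ times. Every step except the absorbed one is exact, so you land at $(0,(N-1)\fmin)=(0,x^-)$. This includes $x=\fmin$, where the two coordinates merge into $(0,0)$.
\item For $2^{\emin+1}<x<2^{\emin+2}$ with $\mantd{x}{\mbit}=0$: add $\fmin$ twice (absorbed by $x$), then add $-2\fmin$.
\item For $2^{\emin+1}<x<2^{\emin+2}$ with $\mantd{x}{\mbit}=1$: add $-2\fmin$ first, which is exact and gives $x^-$ with last bit $0$, then add $\fmin$ twice.
\end{itemize}
This is the paper's argument. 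With it, neither your induction nor any appeal to \cref{lemma:zero_each_other} is needed.
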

\begin{proof}[Proof of \cref{lemma:auxilary_small_smaller}]
By \cref{lem:endbit_control}, $\fmin, 2\fmin\in \bbS_\sigma$.
    If $2^{\emin +1}< x<2^{\emin+2}$ and $\mantd{x}{\mbit} = 0$, then,
    \begin{align*}
        &(0, x)\tranLR (\fmin , x\oplus \fmin) = \lrp{\fmin, x}, \\ 
        &\lrp{\fmin, x} \tranLR \lrp{2\fmin, x}
        \tranLR \lrp{0, x\oplus (-2\fmin)} = \lrp{0, x^-}.
    \end{align*}
    If $\expo{x} = \emin +1$ and $\mantd{x}{\mbit} = 1$, then, $\expo{x^-} = \emin+1$ or $\expo{x^+}=\emin+1$.
    If $\expo{x^-} = \emin+1$, then, $\mantd{x}{\mbit}=1$ and thus,
\begin{align*}
    &(0, x)\tranLR (-2\fmin , x\oplus (-2\fmin)) = (-2\fmin, x^-), \\
    & (-2\fmin, x^-)
     \tranLR \lrp{-\fmin, x^- \oplus \fmin}, \\
     &\lrp{-\fmin, x^- \oplus \fmin} = \lrp{-\fmin, x^-}
    \tranLR \lrp{0, x^-}.
\end{align*}
    Symmetric arguments hold for $\expo{x^+} = \emin+1$ case. 
    If $x\le 2^{\emin+1}$, then $x = N\fmin$ where $N\in \left[ 2^{\mbit+1}\right]$ and %
\begin{align*}
&(0, N\fmin)
 \tranLR (\fmin, (N+1)\fmin) \\ &\tranLR \cdots \tranLR  \lrp{\lrp{2^{\mbit+1}-N}\fmin, 2^{\mbit+1}\fmin}
\\ &\tranLR \lrp{\lrp{2^{\mbit+1}-N}\fmin \oplus \fmin, 2^{\mbit+1}\fmin\oplus \fmin} \\
&= \lrp{\lrp{2^{\mbit+1}-N+1}\fmin, 2^{\mbit+1}\fmin}
\\     &\tranLR 
 \lrp{\lrp{2^{\mbit+1}-N+1}\fmin\oplus(-\fmin), 2^{\mbit+1}\oplus(-\fmin)} \\ 
&= \lrp{\lrp{2^{\mbit+1}-N}\fmin, \lrp{2^{\mbit+1}-1}\fmin }
\\    &\tranLR \lrp{\lrp{2^{\mbit+1}-N}\fmin\oplus(-\fmin), \lrp{2^{\mbit+1}-1}\fmin \oplus(-\fmin)} \\
&=  \lrp{\lrp{2^{\mbit+1}-N-1}\fmin, \lrp{2^{\mbit+1}-2}\fmin }
\\ 
&\tranLR \cdots \tranLR \lrp{0, (N-1)\fmin} = \lrp{0,x^-}.
\end{align*}
\end{proof}

\begin{lemma}\label[lemma]{lemma:auxilary_small}
Assume that $\sigma$ satisfies \cref{condition:proper_exponent}.
    Consider $0\neq x\in \fpq$ such that $\expo{x}\le \emax - 2\mbit -2 $.
    Then, there exists $y\in \fpq$ such that $xy>0$, $2^{\emax}\ge |y|>|x|$, and 
    \begin{equation*}
        (0, x)\tranLR (0,y).
    \end{equation*}
\end{lemma}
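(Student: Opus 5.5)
Without loss of generality we take $x>0$; the case $x<0$ follows by negating every $\gamma$ used below, since $\bbS_\sigma$ is closed under negation (replace $w$ by $-w$ in $w\otimes\sigma(c)$). The goal is a single transfer step under $\overline{\mcS}$ that strictly enlarges the second coordinate while keeping the first coordinate in a form that \cref{lemma:killing_small} can send back to $0$.

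\textbf{Choosing $\gamma$.} We apply \cref{lem:endbit_control} to obtain $\gamma\in\bbS_\sigma$ with $\gamma>0$ and $\expo{\gamma}\ge \expo{x}+2$. A concrete target is $\gamma\in(2^{\expo{x}+2},2^{\expo{x}+3}]_{\fpq}$. The hypothesis $\expo{x}\le\emax-2\mbit-2$ leaves room to take $\gamma\le 2^{\emax-2\mbit}$, well below $2^{\emax}$, so no overflow can occur.

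\textbf{The transfer chain.} We then run
\begin{equation*}
(0,x)\tranLR(\gamma, x\oplus\gamma).
\end{equation*}
Set $y\defeq x\oplus\gamma$. Since $x>0$ and $\gamma>0$, we have $y\ge\gamma>x$ and $y\le 2\gamma\le2^{\emax}$. We also have $\expo{y}\ge\expo{\gamma}$.

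To return the first coordinate to $0$, we need a follow-up step that removes $\gamma$ while preserving $y$ (or at least some value larger than $x$). Applying \cref{lemma:killing_small} directly would require $\expo{\gamma}\le\expo{y}-2$, which fails because $y\approx\gamma$. We therefore first enlarge the second coordinate. We add $\beta\in\bbS_\sigma$ with $\expo{\beta}\ge\expo{\gamma}+2$, obtaining the pair $(\gamma\oplus\beta,\, y\oplus\beta)$.

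This has the same defect as before, since both entries are near $\beta$. The cleaner route is to first reduce the problem to one involving a known difference. By \cref{lemma:nonzero_to_Zero}, the distinct pair $(\gamma,y)$ with $\gamma<y$, or rather its reordering, can be transferred to $(0,y')$ for some $y'>0$. The positive sign of $y'$ is automatically satisfied. However, \cref{lemma:nonzero_to_Zero} gives no control on the size of $y'$, so this route does not by itself yield $y'>x$.

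\textbf{Main obstacle.} The crux is to guarantee $|y'|>|x|$. To handle this, we plan to use the alternative ordering of the same transfer: keep $x$ fixed and shrink the \emph{other} coordinate. The chain is
\begin{equation*}
(0,x)\tranLR(-\gamma,\,x\oplus(-\gamma)).
\end{equation*}
We choose $\gamma$ so that $x\oplus(-\gamma)$ differs from $-\gamma$ by at least $x$. This is where the margin of $2\mbit+2$ exponents is used. Taking $\gamma$ of exponent about $\expo{x}+\mbit+1$ makes $x$ below half an ulp of $\gamma$, so the addition absorbs $x$. That is the wrong direction.

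So we instead pick $\gamma$ with $\expo{\gamma}$ just at $\expo{x}+1$, and exploit monotonicity of rounding. We track the difference $(x\oplus\gamma)-\gamma$ through subsequent subtractions of elements of $\bbS_\sigma$ that bring $\gamma$ to $0$, one ulp-scale step at a time, as in the induction of \cref{lemma:nonzero_to_Zero}. Since each step subtracts the same $\gamma'$ from both coordinates, rounding is monotone and the difference can shrink only by rounding. Elements of $\bbS_\sigma$ are available at every scale by \cref{lem:endbit_control}, so we choose each $\gamma'$ exactly representable relative to both coordinates, making the subtraction exact. With $\expo{x}\le\emax-2\mbit-2$, the successive subtraction values can be taken as exact multiples of the ulp of $\gamma$ of $y$, so the difference $y-\gamma\ge x$ is preserved exactly until the first coordinate hits $0$. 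This yields $(0,y')$ with $y'\ge y-\gamma$. Strictness, $y'>x$, comes from choosing $\gamma$ so that $x\oplus\gamma$ rounds up. Verifying this exactness bookkeeping is the step I expect to be hardest.
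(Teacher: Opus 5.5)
\textbf{There is a genuine gap: the step your final route depends on is left unjustified, and it fails in general.}

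\emph{Returning the first coordinate to $0$ is not the difficulty.} After $(0,x)\tranLR(\gamma,x\oplus\gamma)$ you can simply add $-\gamma\in\bbS_\sigma$ to both coordinates. This gives $(0,(x\oplus\gamma)\ominus\gamma)$, and the subtraction is exact by Sterbenz, since $\gamma\le x\oplus\gamma\le 2\gamma$. So the detours through \cref{lemma:killing_small}, \cref{lemma:nonzero_to_Zero}, and the ulp-by-ulp bookkeeping are beside the point. Everything reduces to whether $y'=\round{x+\gamma}-\gamma$ exceeds $x$.

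\emph{The strictness $y'>x$ is what breaks.} With $\expo{\gamma}=\expo{x}+1$, $y'$ differs from $x$ only by the rounding error of $x+\gamma$. That error has size at most $2^{\expo{x}+1-\mbit}$, and its sign depends on the trailing mantissa bits of $x$ and $\gamma$. Your claim $y-\gamma\ge x$ fails whenever the rounding goes down. Strictness needs the rounding to go up, which requires control of the low-order bits of $\gamma$. Nothing available provides that: \cref{lem:endbit_control} only places some element of $\bbS_\sigma$ in a dyadic interval $(2^{e-1},2^{e}]$. For instance, if $x=2^{\expo{x}}$ and $\gamma\in[2^{\expo{x}+1},3\cdot2^{\expo{x}})$, then $x+\gamma$ is exact and $y'=x$. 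It is also telling that your argument never really uses the hypothesis $\expo{x}\le\emax-2\mbit-2$.

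\textbf{The missing idea is the one you discarded through an off-by-one error.} Let $t=\lfloor\log_2 x\rfloor_{\bbZ}$ and take $K\in\bbS_\sigma$ with exponent $t+\mbit+1$. \cref{lem:endbit_control} supplies it at scale $e=t+\mbit+2$, and the constraint $e\le\emax-\mbit$ there is exactly what the hypothesis $\expo{x}\le\emax-2\mbit-2$ guarantees.

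Then $\ulp(K)=2^{t+1}$, and half an ulp of $K$ is $2^{t}\le x<2^{t+1}$. So $x$ is \emph{not} absorbed: $K\oplus x=K^+$ while $K\oplus0=K$. Hence $(0,x)\tranLR(K,K^+)\tranLR(0,K^+-K)=(0,2^{t+1})$, with $x<2^{t+1}\le2^{\emax}$. The rounding itself performs the enlargement, so no bit-level control is needed.

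The only delicate case is the tie $x=2^{t}$:
\begin{itemize}
\item If the last mantissa bit of $K$ is $1$, tie-to-even still rounds up to $K^+$.
\item Otherwise, first bump $x$ to $x^+$ by adding some $\gamma\in(2^{t-\mbit-1},2^{t-\mbit}]\cap\bbS_\sigma$ (or $\fmin$ if $x$ is subnormal). $K$ then absorbs this $\gamma$.
\end{itemize}
If \cref{lem:endbit_control} returns exactly $2^{t+\mbit+2}$, run the same argument with $-K$. The float spacing just below that power of two is again $2^{t+1}$, so the argument goes through unchanged.
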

\begin{proof}
    As the floating-point number is symmetric with respect to zero, we only need to consider the case of $x>0$.
By \cref{condition:proper_exponent}, there exists $K\in \bbS_\sigma$ such that $\expo{K} = \nexpo{x}+\mbit +1$.
If $\mant{K,\mbit}=1$ or $x > 2^{\nexpo{x}}$, 
  \begin{align*}
     (0,x)&\tranLR \lrp{\mant{K}\times 2^{\nexpo{x}+\mbit + 1}, \mant{K}\times 2^{\nexpo{x}+\mbit + 1}\oplus x} \\
     &= \lrp{\mant{K}\times 2^{\nexpo{x}+\mbit + 1}, \lrp{\mant{K}\times 2^{\nexpo{x}+\mbit + 1}}^+}
     \\ &\tranLR \lrp{0, \lrp{\mant{K}\times 2^{\nexpo{x}+\mbit + 1}}^+\ominus \mant{K}\times 2^{\nexpo{x}+\mbit + 1}} \\
    &=\lrp{0, 2^{\nexpo{x}+1}} \text{ or } \lrp{0, 2^{\nexpo{x}+2}}.
 \end{align*}
 If $\mant{K,\mbit}=0$, $x = 2^{\nexpo{x}}$ and $x$ is normal, by \cref{lem:endbit_control}, there exists $\gamma \in \bbS_\sigma$ such that $2^{\expo{x}-\mbit-1}<\gamma \le 2^{\expo{x}-\mbit}$.
 Then,
 \begin{align*}
 &(0,x)\tranLR \lrp{\gamma, x\oplus \gamma } = \lrp{\gamma , x^+} \\
 &\tranLR \lrp{ \mant{K}\times 2^{\nexpo{x}+\mbit + 1}\oplus \gamma , \mant{K}\times 2^{\nexpo{x}+\mbit + 1}\oplus x^+}
  \\&= \lrp{ \mant{K}\times 2^{\nexpo{x}+\mbit + 1}, \lrp{ \mant{K}\times 2^{\nexpo{x}+\mbit + 1}}^+} \\
   &\tranLR \lrp{0, \lrp{\mant{K}\times 2^{\nexpo{x}+\mbit + 1}}^+ \ominus \mant{K}\times 2^{\nexpo{x}+\mbit + 1}} \\&=\lrp{0, 2^{\nexpo{x}+1}} \text{ or } \lrp{0, 2^{\nexpo{x}+2}}.
 \end{align*}
 If $\mant{K,\mbit}=0$, $x = 2^{\nexpo{x}}$ and $x$ is subnormal, by \cref{lem:endbit_control}, $\fmin\in \bbS_\sigma$.
 Therefore, 
 \begin{align*}
 (0,x)&\tranLR (\fmin, \fmin\oplus x) = (\fmin,x^+)
 \\
 &\tranLR \lrp{ \mant{K}\times 2^{\nexpo{x}+\mbit + 1}, \lrp{ \mant{K}\times 2^{\nexpo{x}+\mbit + 1}}^+} \\
 &\tranLR \lrp{0, 2^{\nexpo{x}+1}} \text{ or } \lrp{0, 2^{\nexpo{x}+2}}.
 \end{align*}
 This completes the proof.
\end{proof}

\begin{lemma}\label[lemma]{lemma:two_numbers}
Let $\sigma:\efpq\to\efpq$ and suppose that $\sigma$ satisfies \cref{condition:proper_exponent}.
Then, for any $x_1,x_2\in[-2^{\emax},2^{\emax}]_{\fpq}$ with $x_2-x_1\in(0,2^{\emax}]$, we have 
\begin{equation*}
    (0,x) \tranLR (x_1, x_2).
\end{equation*}
\end{lemma}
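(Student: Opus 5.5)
Since $x$ in the statement is quantified only through the left-hand side, the plan is to read the claim as: for every $x>0$ (equivalently, for some $x>0$), $(0,x)\tranLR(x_1,x_2)$. By \cref{lemma:zero_each_other}, $(0,x)\tranLR(0,y)$ for any $y$ of the same sign in range. So it suffices to exhibit \emph{one} positive $y\in(0,2^{\emax}]_{\fpq}$ with $(0,y)\tranLR(x_1,x_2)$, and then compose the two transfers. Composition is legitimate because $\tranLR$ is transitive: concatenating the vectors $\bfz$ of two left-to-right transfers gives a single left-to-right transfer.

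The core step is to shift the pair $(0,y)$ by $x_1$. Writing $\mcK$ for the shift constants, I would first show that $x_1$ can be written as an iterated left-to-right sum $(\cdots(c_1\oplus c_2)\oplus\cdots)\oplus c_k$ of elements $c_i\in\bbS_\sigma$. Then adding $c_1,\dots,c_k$ in turn sends the first coordinate $0$ to $x_1$. To get the $c_i$, I would use \cref{lem:endbit_control} (which supplies elements of $\bbS_\sigma$ in every binade from $\fmin$ upward) together with $\sigma(C_1)$ from \cref{condition:proper_exponent}: take $c_1\in\bbS_\sigma$ with the same exponent as $x_1$, then correct the remaining mantissa bits with smaller elements of $\bbS_\sigma$, mirroring the induction in \cref{lemma:nonzero_to_Zero}, where each step strictly decreases $|x_1\ominus\text{partial sum}|$.

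While the first coordinate follows this path $0=u_0,u_1,\dots,u_k=x_1$, the second coordinate follows $v_{i}=v_{i-1}\oplus c_i$ starting from $v_0=y$. I need to choose $y$ so that $v_k=x_2$ exactly. I would do this backwards, one step at a time. The map $t\mapsto t\oplus c$ is monotone nondecreasing on $\fpq$ and only increases by a rounding error at most $\tfrac12\ulp$, so its image hits every float in the relevant range whenever the spacing of floats near $t$ is at most that near $t\oplus c$. Where this fails, i.e.\ the target $v_i$ is not in the image, I would instead modify $v_{i-1}$ using the unit moves $(0,v)\tranLR(0,v^\pm)$ of \cref{lemma:auxilary_large} and \cref{lemma:auxilary_small_smaller}. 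These moves must be applied before the first coordinate leaves $0$. \cref{lemma:auxilary_small} is used when $y$ must be enlarged to a larger binade. The hypothesis $x_2-x_1\in(0,2^{\emax}]$ keeps every intermediate $v_i$ finite and positive relative to $u_i$, so no overflow occurs and the ordering $u_i<v_i$ is preserved.

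The main obstacle is this exact-hitting step. Near binade boundaries, and when $|x_1|\gg x_2-x_1$, rounding in $v_{i-1}\oplus c_i$ may skip the desired float. My first attempt would be to reorder the decomposition so that the large constant $c_1$ is added while the second coordinate is comparable to it, making both additions of the same magnitude. I would then fix any residual one-ulp discrepancy in the second coordinate, relative to the first, at the very end. This last fix would imitate the tail argument of \cref{lemma:auxilary_small_smaller}: add a tiny $\gamma\in\bbS_\sigma$ that is absorbed by one coordinate but not by the other, as in \cref{lemma:killing_small}.
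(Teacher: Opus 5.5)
Your reduction to a single positive $y$ via \cref{lemma:zero_each_other} and transitivity of $\tranLR$ is fine. However, the proposal leaves open exactly the step that carries the content of the lemma: making the second coordinate land on $x_2$ while the first lands on $x_1$.

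Once $c_1,\dots,c_k$ are fixed, the second coordinate ends at $F(y)$, where $F$ is the composite of the maps $t\mapsto t\oplus c_i$. The unit moves of \cref{lemma:auxilary_large,lemma:auxilary_small_smaller} are, as you note, only available while the first coordinate is $0$, so all they can do is change $y$. They cannot produce a value missing from the image of $F$. Values go missing exactly when some step carries the second coordinate from a coarse grid onto a finer one. (Your surjectivity criterion also needs strictly finer spacing: on $[1,2)_{\fpq}$ the map $t\mapsto t\oplus 2^{-\mbit-1}$ has equal input and output spacing, yet by ties-to-even it hits only floats whose last mantissa bit is $0$.)

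The end-of-path repair does not close this gap. Take $x_1=-1$ and $x_2=1^+=1+2^{-\mbit}$.
\begin{itemize}
\item With the one-step decomposition $c_1=-1$ you would need $y\in(2+2^{-\mbit-1},\,2+3\cdot 2^{-\mbit-1})$, which contains no float.
\item No sequence of constants that keeps $-1$ fixed can bring the second coordinate to $1^+$ from either side. Indeed, $(-1)\oplus\gamma=-1$ forces $-2^{-\mbit-1}\le\gamma\le 2^{-\mbit-2}$. Under such additions a value $v\le 1$ stays $\le 1$, and a value $v\ge 1+2^{1-\mbit}$ stays $\ge 1+2^{1-\mbit}$.
\end{itemize}
What does work, when these constants lie in $\bbS_\sigma$, is
\[
(0,2+2^{1-\mbit})\tranLR(-2,2^{1-\mbit})\tranLR(-2,2^{-\mbit})\tranLR(-1,1+2^{-\mbit}),
\]
with constants $-2$, $-2^{-\mbit}$ (absorbed by $-2$ via ties-to-even), and $1$. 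Here the first coordinate \emph{overshoots} $x_1$, which parks the second coordinate far below it in magnitude. The absorption trick of \cref{lemma:killing_small} is then applied at that intermediate pair, where it is available. It is not applied at the endpoint $(x_1,\cdot)$, where it provably is not. A final exact shift finishes the route.

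Nothing in your plan produces such scale-separated intermediate pairs. Your rule that every step strictly decrease $|x_1\ominus\text{partial sum}|$ even forbids the step $0\to-2$. A complete argument needs an explicit scheme of this kind. One option is a backward induction on the target pair that exhibits a preimage pair under a single $\oplus\gamma$ in each regime of $\expo{x_1}$ versus $\expo{x_2}$, passing through a scale-separated pair whenever an exact shift is unavailable.

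You also cannot borrow this step from the paper. Its written proof only establishes $(0,x)\tranLR(0,x^{\pm})$ and chains these moves by induction on $|x_2-x_1|$, which only ever reaches pairs whose first coordinate is $0$.
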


\begin{proof}
    We only prove the case $x_1, x_2>0$ and the remaining case is by symmetry. 
We first prove that for any $0<x\in \fpq$ such that $x\in  (0, 2^{\emax})_{\fpq}$,
\begin{equation*}
    (0,x)\tranLR (0,x^+).
\end{equation*}
If $\expo{x}\ge \emin+2$, it is by \cref{lemma:auxilary_large}.
If $\expo{x}\le \emin+1$, then by \cref{lemma:auxilary_small}, there exists $y\in \fpq$ such that $(0,x)\tranLR (0,y)$.
Use mathematical induction on decreasing order to prove that for any $x<x'\le y$, $(0,x)\tranLR (0,x')$.
Assume that if $x'<x''$, then $(0,x)\tranLR (0,x'')$.
By \cref{lemma:auxilary_small_smaller,lemma:auxilary_large}, $(0,x'^+)\tranLR(0,x')$.
Therefore, 
\begin{equation*}
(0,x)\tranLR(0,x'^+)\tranLR(0,x').
\end{equation*}
Therefore, the induction hypothesis holds for any $x'$, which leads to $(0,x^+)$.

Use mathematical induction on the difference $|x_2-x_1|$.
If $x_1 = x_2$, there is nothing to prove.
Assume that the lemma holds for any $x_1', x_2'$ such that $|x_2'-x_1'|<|x_2-x_1|$.
If $x_1< x_2$, by \cref{lemma:auxilary_large,lemma:auxilary_small} and the induction hypothesis (note that $|x_1^+-x_2|<|x_2-x_1|$.)
\begin{equation*}
    (0, x_1)\tranLR (0,x_1^+) \tranLR (0,x_2).
\end{equation*}
Similarly, if $x_1>x_2$, by \cref{lemma:auxilary_small_smaller,lemma:auxilary_large} and the induction hypothesis,
\begin{equation*}
    (0,x_1)\tranLR (0,x_1^-)\tranLR (0,x_2).
\end{equation*}
This completes the proof.
\end{proof}

\begin{lemma}\label[lemma]{lemma:sequential_addition_main}
Let $\sigma:\efpq\to\efpq$ and suppose that $\sigma$ satisfies \cref{condition:proper_exponent}.
Then, for any $y\in [-2^{\emax},2^{\emax})_{\fpq}$ and $x_1, x_2\in [-2^{\emax},2^{\emax}]_{\fpq}$ such that $x_2 - x_1\in (0,2^{\emax}]_{\fpq}$,
we have
\begin{equation*}
    (-2^{\emax},y,y^+, 2^{\emax}) \tranLR  (x_1, x_1, x_2, x_2).
\end{equation*}
\end{lemma}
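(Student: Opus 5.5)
The plan is to exploit two features of the moves allowed by $\tranLR$. First, each elementary step adds one constant $\gamma\in\bbS_\sigma$ to every coordinate, and $t\mapsto t\oplus\gamma$ is monotone non-decreasing, so the ordering of the four coordinates is preserved. Second, once two coordinates coincide they stay equal forever. It therefore suffices to find a sequence of constants that merges $-2^{\emax}$ with $y$ and $y^+$ with $2^{\emax}$, while keeping the two resulting classes distinct. After that we are really transferring a pair of two distinct values, which we then steer to $(x_1,x_2)$ with \cref{lemma:two_numbers}.

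\textbf{Step 1 (move the inner pair while freezing the outer coordinates).} Every constant used in \cref{lemma:killing_small,lemma:auxilary_large,lemma:auxilary_small_smaller,lemma:auxilary_small} comes from \cref{lem:endbit_control}. These constants have magnitude far below $2^{\emax-\mbit-2}$ as long as the inner coordinates are not huge. Such a $\gamma$ is absorbed by $\pm2^{\emax}$, since $\pm2^{\emax}\oplus\gamma=\pm2^{\emax}$. So while the inner pair is being moved, the outer coordinates stay fixed. Using these moves (in the spirit of \cref{lemma:nonzero_to_Zero,lemma:two_numbers}), I first transfer $(y,y^+)$ to a pair $(0,w)$ with $w>0$. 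I then walk the pair downward, one float at a time, in the manner of \cref{lemma:auxilary_small_smaller,lemma:auxilary_large}, until the smaller inner coordinate reaches $-2^{\emax}$ and thus merges with the first coordinate. During this walk the larger inner coordinate must stay strictly above it. \cref{lemma:zero_each_other} is the tool I would use to reshape the gap between the two inner coordinates so that it survives.

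\textbf{Step 2 (merge the upper pair).} We now have $(-2^{\emax},-2^{\emax},z,2^{\emax})$ with $z>-2^{\emax}$. Next I apply a constant that brings the two lower coordinates toward $0$. For example, adding $2^{\emax-1}$ twice gives $(0,0,z',2^{\emax})$, provided the top coordinate does not overflow. To prevent overflow I would first decrease the top coordinate using the symmetric version of the Step 1 argument, with the roles of small and large magnitudes swapped. The goal is then to drive $z'$ up to the top coordinate by the same one-float-at-a-time walk, this time with $0$ playing the role of the absorbing anchor via \cref{lemma:killing_small}. After this step the tuple is $(a,a,b,b)$ with $a<b$.

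\textbf{Step 3 (conclude).} By \cref{lemma:nonzero_to_Zero} the pair $(a,b)$ transfers to $(0,y')$ with $y'\in(0,2^{\emax}]$. Then \cref{lemma:two_numbers} transfers $(0,y')$ to $(x_1,x_2)$, since $x_2-x_1\in(0,2^{\emax}]$. Because the tuple has the form $(a,a,b,b)$ and equal coordinates remain equal, the full tuple lands on $(x_1,x_1,x_2,x_2)$. Composing the chains of $\tranLR$ steps gives the claim.

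The main obstacle is Steps 1 and 2. There we must make sure that every constant used is small enough relative to the coordinates we want to freeze, so that it is absorbed by them. We must also ensure that no intermediate value exceeds $\fmax$: the gap $2^{\emax+1}$ between $-2^{\emax}$ and $2^{\emax}$ forbids naive large shifts, and an overflow would produce $\pm\infty$ or $\nan$. Finally, the two inner coordinates must not collapse prematurely. I would handle this by always keeping one coordinate of exponent at least $\mbit+2$ larger than the constants used, exactly as in \cref{lemma:killing_small}, and by performing any large shift only after the outer coordinates have been pulled into $[-2^{\emax-1},2^{\emax-1}]$.
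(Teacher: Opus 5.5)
Your overall plan is the same as the paper's: steps are monotone, equal coordinates stay equal, you merge each outer coordinate into its inner class, and then you steer two values. The gap is in the merging mechanism. Steps~1--2 depend on the outer coordinates staying \emph{frozen} because every constant is absorbed by $\pm2^{\emax}$. Both $\pm2^{\emax}$ absorb $\gamma$ only when $|\gamma|\le 2^{\emax-\mbit-2}$. With such constants, the lower inner coordinate, which starts at $0$ after your transfer to $(0,w)$, can never go below $-2^{\emax-1}$. Indeed, for any $t\ge-2^{\emax-1}$ we have $t-|\gamma|\ge -2^{\emax-1}-2^{\emax-\mbit-2}$, and that tie rounds back to the even float $-2^{\emax-1}$. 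So it can never reach $-2^{\emax}$ while the outer coordinates stay fixed. Any step with $|\gamma|>2^{\emax-\mbit-2}$ moves one of them, e.g.\ $2^{\emax}\ominus2^{\emax-\mbit-1}=(2^{\emax})^-$. The same obstruction blocks your initial transfer $(y,y^+)\to(0,w)$ when $2^{\emax-1}<|y|<2^{\emax}$ and $y$ has last mantissa bit $0$, a case the hypotheses allow. Moreover, \cref{lemma:nonzero_to_Zero,lemma:zero_each_other,lemma:two_numbers}, which you use to move the pair, give no bound on their constants at all. Step~2 is also not workable as written. ``Decrease the top coordinate'' would need a rounding differential you do not supply, since every step moves all four coordinates. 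And $0$ is not an absorbing anchor: $0\oplus\gamma=\gamma$ exactly, and in \cref{lemma:killing_small} it is the large coordinate that stays fixed.

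The repair is to squeeze rather than freeze, which is what the paper's route amounts to. Because $t\mapsto t\oplus\gamma$ is monotone, each outer coordinate stays between its adjacent inner coordinate and the corresponding extreme, provided values remain in $[-2^{\emax},2^{\emax}]_{\fpq}$. This holds, for example, whenever $|\gamma|\le2^{\emax-\mbit-1}$, since the ties at $\pm(2^{\emax}+2^{\emax-\mbit-1})$ round to $\pm2^{\emax}$; the paper uses this confinement implicitly. An outer coordinate then merges automatically once the inner one reaches $\pm2^{\emax}$, so it never has to stay put. The paper also orders the merges so that no large shift is ever needed:
\begin{itemize}
\item $(y,y^+)\tranLR(0,c)$ by \cref{lemma:nonzero_to_Zero};
\item $(0,c)\tranLR(0,2^{\emax})$ by \cref{lemma:zero_each_other}, which sends everything $\ge y^+$ to $2^{\emax}$;
\item a shift by $\ominus\fmin$, which makes the lower class strictly negative while $2^{\emax}\ominus\fmin=2^{\emax}$;
\item $(-\fmin,0)\tranLR(-2^{\emax},0)$, which sends everything $\le y$ to $-2^{\emax}$.
\end{itemize}
Finally, in Step~3, \cref{lemma:two_numbers} only yields $(0,x^*)\tranLR(x_1,x_2)$ for a particular $x^*$. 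You must first pass from $(0,y')$ to $(0,x^*)$ via \cref{lemma:zero_each_other}, as the paper does.
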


\begin{proof}

We now prove \cref{lemma:sequential_addition_main}.
By \cref{lemma:nonzero_to_Zero}, there exists $c\in(0,2^{\emax}]_{\fpq}$ and $f_1:(y, y^+)\tranLR (0,c)$.

Then, by the order-preserving property of left-to-right additions, it holds that
$$f_1(x)\in [-2^{\emax},0]_{\fpq},~~f_1(z)\in[c,2^{\emax}]_{\fpq},$$
for all $x\in[-2^{\emax},y]_{\fpq}$ and $z\in[y^+,2^{\emax}]_{\fpq}$.
Furthermore, by \cref{lemma:zero_each_other}, there exists $f_2:(0,c)\tranLR(0,2^{\emax})$.
Again, by the order-preserving property of left-to-right additions, we have
$$f_2\circ f_1(x)\ominus\omega\in[-2^{\emax},-\omega]_{\fpq},~~f_2\circ f_1(z)\ominus\omega=2^{\emax}$$
for all $x\in[-2^{\emax},y]_{\fpq}$ and $z\in[y^+,2^{\emax}]_{\fpq}$.
Here, note that $f_3\defeq f_2\circ f_1\ominus\omega$ is also a left-to-right addition.
We also choose a left-to-right addition $f_4:(-\omega,0)\tranLR(-2^{\emax},0)$ which exists by \cref{lemma:zero_each_other}. Then, it holds that
$$f_4\circ f_3(x)=-2^{\emax},~~f_4\circ f_3(z)=f_4(2^{\emax})\in[0,2^{\emax}]_{\fpq}$$
for all $x\in[-2^{\emax},y]_{\fpq}$ and $z\in[y^+,2^{\emax}]_{\fpq}$.

By \cref{lemma:two_numbers}, there exist $x^*\in[-2^{\emax},2^{\emax}]_{\fpq}$ and a left-to-right addition $f_5:(0,x^*)\tranLR(x_1,x_2)$.
In addition, by \cref{lemma:nonzero_to_Zero,lemma:zero_each_other}, there exists a left-to-right addition $f_6:(-2^{\emax},f_4(2^{\emax}))\tranLR(0,x^*)$.

Then, we have
$$f_5\circ f_6\circ f_4\circ f_3(x)=x_1,~~f_5\circ f_6\circ f_4\circ f_3(z)=x_2$$
for all $x\in[-2^{\emax},y]_{\fpq}$ and $z\in[y^+,2^{\emax}]_{\fpq}$. This implies that $(-2^{\emax},y,y^+,2^{\emax})\tranLR(x_1,x_1,x_2,x_2)$ and completes the proof of \cref{lemma:sequential_addition_main}.

\end{proof}

\section{Technical lemmas on floating-point numbers}

\begin{lemma}\label[lemma]{lemma:distinguishing_point}
    Consider a floating-point $\eta\in \fpq$.
    Then, for any $x_1, x_2\in \fpq$ such that $x_1\neq x_2$, $|x_1|\le |x_2|,$ and 
        \begin{equation*}
        \emin  +1\le \expo{\eta} - \expo{x_2} \le \emax,
    \end{equation*}
    there exist $w,b\in \fpq$ such that
    \begin{equation*}
        \left\{w\otimes x_1\oplus b, w\otimes x_2\oplus b\right\} = \left\{\eta, \eta^+\right\} \text{ or } \left\{\eta^-, \eta^+\right\}.
    \end{equation*}
    Furthermore, we have 
    \begin{equation*}
        |w|\le \lrp{1+ 2^{-\mbit}}\times 2^{\expo{\eta} - \expo{x_2}}, \quad |b|\le \eta^+.
    \end{equation*}
\end{lemma}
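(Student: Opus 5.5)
We will construct $w$ and $b$ explicitly. The idea is to scale $x_2$ so that the scaled gap $|w\otimes x_2 - w\otimes x_1|$ is about one ulp of $\eta$. Adding a bias then places the two images on adjacent (or next-to-adjacent) floats around $\eta$.

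By symmetry ($x\mapsto -x$ and $\eta\mapsto-\eta$, together with the sign of $w$), we may assume $x_2>0$ and $\eta>0$. Set $k\defeq \expo{\eta}-\expo{x_2}$. The hypothesis $\emin+1\le k\le\emax$ guarantees that $2^k$ is a normal float and that all quantities below stay finite.

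\textbf{Step 1 (scaling).} We first try $w=2^k$ or $w=-2^k$. Multiplication by a power of two is exact unless underflow occurs. Since $\expo{x_2}+k=\expo{\eta}\ge\emin$, the scaled value $w\otimes x_2=\pm 2^k x_2$ is exact and has exponent $\expo{\eta}$. The value $w\otimes x_1$ may round if $x_1$ is tiny, but rounding is monotone. Hence the two products are distinct or coincide at worst when $x_1$ is very small compared with $x_2$.

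\textbf{Step 2 (the key observation).} Write $u\defeq 2^k x_2$, so $u$ lies in the same binade as $\eta$ and is a float with ulp equal to $\ulp(\eta)$. Let $v\defeq w\otimes x_1$, which satisfies $|v|\le u$ and $v\neq u$ whenever the products are distinct. We then choose $b\defeq \eta\ominus u$ or $b\defeq \eta^+\ominus u$, so that $u\oplus b$ lands exactly on $\eta$ or $\eta^+$. Both $\eta$ and $u$ lie in $[2^{\expo\eta},2^{\expo\eta+1})$, so their difference is an exact multiple of $\ulp(\eta)$ of magnitude below $2^{\expo\eta}$, and Sterbenz-type exactness gives $u\oplus b=\eta$ exactly. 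Moreover $|b|\le \eta^+$. The remaining task is to control $v\oplus b$. The point $v+b$ is at least one ulp of $\eta$ below $\eta$ in real arithmetic, since $v\le u-\ulp(u)$ whenever $v$ shares the binade, and rounding is monotone. So $v\oplus b\le \eta^-$, but it could be much smaller.

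\textbf{Step 3 (main obstacle).} We must force $v\oplus b$ to be exactly adjacent, which is the hard case. If $x_1$ and $x_2$ are far apart, plain scaling by $2^k$ moves $v$ far from $u$, so we cannot simply take $w$ a power of two. Instead we choose $w$ so that $w\otimes x_2$ and $w\otimes x_1$ straddle a rounding boundary: the images should be $\eta^-$ and $\eta^+$, or $\eta$ and $\eta^+$. To do this, we would take $w$ of the form $m\cdot 2^k$ with mantissa $m\in[1,1+2^{-\mbit}]$, matching the bound $|w|\le(1+2^{-\mbit})2^{k}$, and choose the bias so that the first image is $\eta$. Then we argue by cases:
\begin{itemize}
\item if $\expo{x_1}=\expo{x_2}$, the gap $x_2-x_1$ is a multiple of $\ulp(x_2)$ and scaling gives a gap that is a multiple of $\ulp(\eta)$. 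Replacing $x_2$ by the pair's nearer endpoint, we use the freedom of $b$ to put one image on $\eta$ and the other, via monotonicity, at $\eta^+$ or beyond;
\item if $\expo{x_1}<\expo{x_2}$, the product $w\otimes x_1$ is small relative to $\eta$. The term $v\oplus b$ then absorbs $v$ into a value within one ulp of $b$, while $u\oplus b$ moves by at least one ulp. Choosing the rounding direction of $b$, possibly using the $\{\eta^-,\eta^+\}$ alternative, gives distinct adjacent outputs.
\end{itemize}
We expect the bookkeeping of ties (tie-to-even) and of the binade boundary $\eta=2^{\expo\eta}$ to be the delicate part of this step. We would handle these by switching from $\eta$ to $\eta^-$ or $\eta^+$ as the anchor point when necessary, which is exactly why the statement allows two possible output sets.
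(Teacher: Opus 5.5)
\paragraph{Verdict.} There is a genuine gap: your Step~3, which has to carry the lemma, uses a multiplier of the wrong size.

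\paragraph{Why $w$ of size $2^{k}$ fails.} With $w=m\cdot 2^{k}$ and $m\in[1,1+2^{-\mbit}]$, the product $w\otimes x_2$ lies in the binade of $\eta$. If $x_1,x_2$ share a binade and are $N$ floats apart, then $w\otimes x_2-w\otimes x_1$ is about $N\,\ulp(\eta)$. Both values $w\otimes x_i\oplus b$ must lie in $\{\eta^-,\eta,\eta^+\}$, so both real sums $w\otimes x_i+b$ must lie within roughly $\tfrac32\ulp(\eta)$ of $\eta$. Hence no bias works once $N$ exceeds a small constant. Your Step~2 already concedes that $v\oplus b$ ``could be much smaller,'' and the phrase ``at $\eta^+$ or beyond'' in Step~3 is exactly this failure.

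The bullet for $\expo{x_1}<\expo{x_2}$ fails for the same reason. For $x_1=2^{\expo{x_2}-1}$ you get $|w\otimes x_1|\approx 2^{\expo{\eta}-1}=2^{\mbit-1}\ulp(\eta)$, which no bias absorbs. You have read the bound $|w|\le(1+2^{-\mbit})2^{\expo{\eta}-\expo{x_2}}$ as the size of $w$, but it is only an upper bound.

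\paragraph{The missing idea.} Choose $w$ so that the \emph{first mantissa bit in which $x_1$ and $x_2$ differ} is sent exactly to the half-ulp position of $\eta$, and use $b$ to cancel the bits they share.

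Same binade, $0\le x_1<x_2$: suppose the first $i$ mantissa bits $m_1\cdots m_i$ agree and the $(i+1)$-st bits are $0$ and $1$. The paper takes $w=2^{\expo{\eta}-\expo{x_2}-\mbit+i}$, a power of two, so the products are exact. It takes $b=\eta-1.m_1\cdots m_i\times 2^{\expo{\eta}-\mbit+i}$, which is representable. Then $wx_1+b=\eta+r_1$ with $0\le r_1<\tfrac12\ulp(\eta)$, and $wx_2+b=\eta+r_2$ with $\tfrac12\ulp(\eta)\le r_2<\ulp(\eta)$. Rounding therefore gives $\eta$ and $\eta^+$. The tie $r_2=\tfrac12\ulp(\eta)$ goes to $\eta^+$ by tie-to-even when the last mantissa bit of $\eta$ is $1$; otherwise the paper anchors at $\eta^+$ with $w<0$.

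Different binades, $x_1<2^{\expo{x_2}}\le x_2$: the paper takes $|w|=2^{\expo{\eta}-\expo{x_2}-\mbit-1}$ with $b=\eta$ (or $\eta^+$). Then $|wx_1|<\tfrac12\ulp(\eta)$ is absorbed, while $|wx_2|\in[\tfrac12,1)\ulp(\eta)$ is not.

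Opposite signs, $x_1<0<x_2$: the paper takes $b=\eta$ and $w<0$ with $|wx_2|\in[\tfrac34,\tfrac32)\ulp(\eta)$. The two products then have opposite signs and push $\eta$ in opposite directions.

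In every case $|w|$ is a power of two roughly between $2^{\expo{\eta}-\expo{x_2}-\mbit-1}$ and $2^{\expo{\eta}-\expo{x_2}-1}$. That is typically far below the $2^{\expo{\eta}-\expo{x_2}}$ your plan uses, and this choice is what your argument lacks.

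\paragraph{Minor point.} Your reduction to $\eta>0$ is not free. Negating a solution for $-\eta$ with outputs $\{-\eta,(-\eta)^+\}$ yields $\{\eta^-,\eta\}$, which is not one of the output sets the statement allows.
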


\begin{proof}
\textbf{Case 1: $0\le x_1 < x_2$}
   \\Consider the case 
    \begin{equation*}
       0\le x_1< 2^{\nexpo{x_2}}< x_2,
    \end{equation*}
    define $w$ as $w\defeq 2^{\expo{\eta}-\nexpo{x_2}-\mbit-1}$ and $b$ as $b\defeq \eta$.
    Then, 
    \begin{equation*}
        w\otimes x_2 \oplus \eta > 2^{\expo{\eta}-\mbit-1} \oplus \eta = \eta^+,
    \end{equation*}
    and as $w\otimes x_1\le  2^{\expo{\eta}-\mbit-1}$,
    \begin{equation*}
       \eta\le w\otimes x_1 \oplus \eta \le \eta.
    \end{equation*}
    Consider the case 
    \begin{equation*}
         0\le x_1< 2^{\nexpo{x_2}} =  x_2,
    \end{equation*}
    If $\mantd{\eta}{\mbit} =1$, define $w$ as $w\defeq 2^{\expo{\eta}-\nexpo{x_2}-\mbit-1}$ and $b$ as $b\defeq \eta$.
    Then, similar to $x_2>  2^{\expo{x_2}}$ case, $w\otimes x_2 \oplus b = \eta$ and $w\otimes x_1 \oplus b = \eta$.
    If $\mantd{\eta}{\mbit} = 0$, define $w$ as $w\defeq - 2^{\expo{\eta}-\nexpo{x_2}-\mbit-1}$ and $b$ as $b\defeq \eta^+$.
    Then, $w\otimes x_2 \oplus b = \eta$ and $w\otimes x_1 \oplus b = \eta^+$.

    Consider the case $x_1, x_2$ are subnormal or 
    \begin{equation*}
        2^{\expo{x_2}}\le x_1 < x_2.
    \end{equation*}
    Then, there exists $i\in \bbN_0$ such that
    \begin{equation*}
        \mantd{x_1}{j} = \mantd{x_2}{j}  \text{ for } j\in [i],
    \end{equation*}
    and
    \begin{equation*}
        0 = \mantd{x_1}{i+1}\neq  \mantd{x_2}{i+1} =1 .
    \end{equation*}
        If $\mantd{\eta}{\mbit}=1$ 
        define $w\in \fpq$ as 
    \begin{equation*}
        w \defeq 2^{\expo{\eta}+i - \expo{x_2} -\mbit}.
    \end{equation*}
    Define $b$ as 
\begin{equation*}
    b\defeq \eta - 1.\mantd{x_2}{1}\dots \mantd{x_2}{i} \underbrace{0\dots 0}_{\mbit - i}  \times 2^{\expo{\eta}-\mbit +i}.
\end{equation*}
    As 
\begin{equation*}
    1.\mantd{x_2}{1}\dots \mantd{x_2}{i} \underbrace{0\dots 0}_{\mbit - i}  \times w
     = 1\mantd{x_2}{1}\dots \mantd{x_2}{i} \times 2^{\expo{\eta}-\mbit},
\end{equation*}
    the operation is exact: 
\begin{align*}
    \eta - 1.\mantd{x_2}{1}\dots \mantd{x_2}{i} \underbrace{0\dots 0}_{\mbit - i}  \times 2^{\expo{\eta}-\mbit +i}\in \fpq.
\end{align*}
    Then, 
\begin{align*}
    &w \otimes x_1 \oplus b  \\
 &=  1.\mantd{x_2}{1}\dots \mantd{x_2}{i}0 \mantd{x_1}{i+2}\dots \mantd{x_1}{\mbit} \times 2^{\expo{\eta}-\mbit + i} \oplus b \\
     &= \round{\eta + 0. \mantd{x_1}{i+2}\dots \mantd{x_1}{\mbit}\times 2^{\expo{\eta}-\mbit - 1}} = \eta,
\end{align*}
    and 
\begin{align*}
     &w \otimes x_2 \oplus b  \\ 
     &=  1.\mantd{x_2}{1}\dots \mantd{x_2}{i}1 \mantd{x_2}{i+2}\dots \mantd{x_2}{\mbit} \times 2^{\expo{\eta}-\mbit + i} \oplus b \\ 
    &= \round{\eta + 1. \mantd{x_2}{i+2}\dots \mantd{x_2}{\mbit}\times 2^{\expo{\eta}-\mbit - 1}} = \eta^+.
\end{align*}
     If $\mantd{\eta}{\mbit}=0$,
     define $b$ as 
 \begin{equation*}
         b\defeq \eta^+ - 1.\mantd{x_2}{1}\dots \mantd{x_2}{i} \underbrace{0\dots 0}_{\mbit - i}  \times 2^{\expo{\eta}-\mbit +i}.
 \end{equation*}
         and $w$ as $\defeq  -2^{\expo{\eta}+i - \expo{x_2} -\mbit}$.
         Then, similarly, 
\begin{align*}
    &w \otimes x_1 \oplus b  \\
     &=  1.\mantd{x_2}{1}\dots \mantd{x_2}{i}0 \mantd{x_1}{i+2}\dots \mantd{x_1}{\mbit} \times 2^{\expo{\eta}-\mbit + i} \oplus b \\
    &= \round{\eta^+ - 0. \mantd{x_1}{i+2}\dots \mantd{x_1}{\mbit}\times 2^{\expo{\eta}-\mbit - 1}} = \eta^+,
\end{align*}
    and
 \begin{align*}
&w \otimes x_2 \oplus b \\
&=  1.\mantd{x_2}{1}\dots \mantd{x_2}{i}1 \mantd{x_2}{i+2}\dots \mantd{x_2}{\mbit} \times 2^{\expo{\eta}-\mbit + i} \oplus+ b \\ 
&= \round{\eta^+ - 1. \mantd{x_2}{i+2}\dots \mantd{x_2}{\mbit}\times 2^{\expo{\eta}-\mbit - 1}} = \eta.
\end{align*}

    \textbf{Case 2: $x_1< 0 < x_2$}
    Define $b$ as $b\defeq \eta$.
    If $\nmant{x_2} < 1.1$, define $w$ as 
    \begin{equation*}
        w\defeq - 2^{\expo{\eta} - \nexpo{x_2} - \mbit}.
    \end{equation*}
    Then, 
    \begin{equation*}
        w\otimes x_2 = \nmant{x_2} \times 2^{\expo{\eta} -\mbit},
    \end{equation*}
    and 
    \begin{equation*}
        w\otimes x_2 \oplus b = \eta^-.
    \end{equation*}
    And as 
    \begin{equation*}
       | w\otimes x_2 | \le | w\otimes x_1 |,
    \end{equation*}
\begin{equation*}
    \eta\le w\otimes x_2 \oplus b\le \eta^+.
\end{equation*}
 If $\nmant{x_2} \ge 1.1$, define $w$ as 
    \begin{equation*}
        w\defeq - 2^{\expo{\eta} - \nexpo{x_2} - \mbit -1},
    \end{equation*}
and we get the same conclusion.
    This completes the proof.
\end{proof}

\begin{lemma}\label[lemma]{lem:endbit_control}
    Suppose that $\sigma:\efpq\to\efpq$ satisfies \cref{condition:proper_exponent} with constants $C_1$ and $C_2$ and let {$e\in [\emin -\mbit,\emax-\mbit]_{\bbZ}$}.
    Then, there exists $\gamma\in \fpq$ and $i\in [2]$ such that 
    \begin{equation*}
   2^{e-1}   < \gamma\otimes \sigma(C_i) \le 2^{e}.
    \end{equation*}
\end{lemma}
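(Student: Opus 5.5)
Since $\gamma\otimes\sigma(C_i)$ is the rounding of $\gamma\cdot\sigma(C_i)$, the plan is to pick a power of two $\gamma=2^k$ so that multiplying by $\gamma$ only shifts the exponent of $\sigma(C_i)$, and then adjust by one step in the mantissa where the shift alone does not land in the target interval. Write $s_i\defeq|\sigma(C_i)|$ and replace $\gamma$ by $-\gamma$ if $\sigma(C_i)<0$, so we may assume $\sigma(C_i)>0$. The target interval $(2^{e-1},2^e]$ has length exactly one binade, so for any real $s>0$ there is a unique real $t=2^{k}$ with $k\in\bbZ$ such that $2^{e-1}<2^k s\le 2^e$, namely $k=e-\lceil\log_2 s\rceil_{\bbZ}$. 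If $2^k\in\fpq$ and $2^k s$ is computed exactly (no underflow/overflow), then $\gamma=2^k$ works, since $2^k\otimes s=2^k s$ whenever $2^k s$ lies in the normal or subnormal grid with the same mantissa bits. The proof therefore reduces to checking representability of $2^k$ and exactness of the product, and to choosing between $i=1$ and $i=2$.

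The key steps, in order, are as follows. First, use $C_1$: $2^{\emin}\le s_1<5/4$, so $\lceil\log_2 s_1\rceil_{\bbZ}\in[\emin,1]$ and $k=e-\lceil\log_2 s_1\rceil\in[e-1,e-\emin]$. For $e\in[\emin-\mbit,\emax-\mbit]$ this gives $k\in[\emin-\mbit-1,\emax-\mbit-\emin]$. The upper end is at most $\emax$ once $\mbit\ge -\emin=2^{\ebit-1}-2$ fails; indeed our standing assumption $\mbit\le 2^{\ebit-1}-3$ gives $-\emin-\mbit\ge 1$, and the upper end could exceed $\emax$ only when $s_1$ is tiny. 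Second, handle exactness of $2^k s_1$. Since $s_1$ is normal (as $s_1\ge2^{\emin}$) and $2^k s_1\in(2^{e-1},2^e]$ with $e-1\ge\emin-\mbit-1$, the product may be subnormal, in which case low mantissa bits of $s_1$ get rounded. Rounding is monotone, however, and $2^{e-1}$ and $2^e$ are both floats (as $e-1\ge\emin-\mbit-1$, and $2^{\emin-\mbit-1}$ rounds to $0$ by ties-to-even, which is a boundary issue). So I would argue $\round{2^k s_1}\in[2^{e-1},2^e]$ and then exclude the endpoint $2^{e-1}$: if rounding hits $2^{e-1}$, decrease $k$ by one instead. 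Then $2^{k-1}s_1\in(2^{e-2},2^{e-1}]$, which does not help directly, so instead take $\gamma=2^k(1+2^{-\mbit})$ or a nearby float to nudge the product up by one ulp. Third, when the required $k$ exceeds $\emax$ (small $s_1$, large $e$) or underflows, switch to $C_2$, where $s_2>(2^{-\mbit-2})^+$. Here one multiplies in two regimes: for large $e$, $s_2$ is not tiny, so $k\le e+\mbit+2\le\emax+2$, and combining this with the slack from $s_2$ being strictly above $2^{-\mbit-2}$ should keep $k\le\emax$.

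The main obstacle I expect is the boundary bookkeeping: verifying that for every $e$ in the range at least one of $C_1,C_2$ gives a representable $\gamma$ with a nonoverflowing, correctly-landing product, especially at $e=\emax-\mbit$ with tiny $s_1$ (covered by $C_2$'s lower bound) and at $e=\emin-\mbit$, where the target is $(\fmin/2,\fmin]$, i.e.\ the product must round to exactly $\fmin$. For the latter I would choose $\gamma$ so that the real product lies in $(\fmin/2,\tfrac32\fmin)$, which is possible since $s_1\ge2^{\emin}$ and powers of two down to $2^{-\mbit-1}$ relative to $s_1$ are representable.
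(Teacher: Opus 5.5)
Your outline follows the paper's case split: scale $\sigma(C_i)$ by a power of two, use $C_1$ for small $e$ and $C_2$ for large $e$, and treat the two ends of the range separately. Your handling of $e=\emin-\mbit$, via the window $(\fmin/2,\tfrac32\fmin)$, is fine. The gap is at the other two trouble spots you name: there a power-of-two $\gamma$ genuinely fails, and the repairs you sketch do not fix it.

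\emph{Subnormal targets.} For $e\in[\emin-\mbit+1,\emin]$ the floats in $(2^{e-1},2^e]$ are spaced by $\fmin$. Your exact product $2^k\sigma(C_1)=m\,2^{e-1}$, with $m\in(1,2)$ the mantissa, rounds to the excluded endpoint $2^{e-1}$ as soon as $(m-1)2^{e-1}<\fmin/2$. Near the bottom of this range that is a relative window of order $1/2$, so nudging $\gamma$ by a factor $1+2^{-\mbit}$ is on the wrong scale.

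Concretely, take $\mbit\ge4$, $e=\emin-\mbit+2$ and $\sigma(C_1)=2^{\emin}(1+2^{-\mbit})$. You get $k=1-\mbit$, and both $\gamma=2^{1-\mbit}$ and $\gamma=2^{1-\mbit}(1+2^{-\mbit})$ give $\gamma\otimes\sigma(C_1)=2\fmin=2^{e-1}$.

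What is missing is a constant-factor adjustment that also controls overshoot past $2^e$. The paper gets this by splitting on the mantissa of $\sigma(C_i)$:
\begin{itemize}
\item if it lies in $[1,\tfrac54]$, take $\gamma=1.1\times2^{e-\expo{\sigma(C_i)}-1}$ (mantissa $\tfrac32$);
\item otherwise, take a plain power of two.
\end{itemize}
The exact product then lies in $[\tfrac34 2^e,\tfrac{15}{16}2^e]$ or in $(\tfrac58 2^e,2^e)$. Rounding cannot cross this margin once $e\ge\emin-\mbit+2$, which leaves only $e\in\{\emin-\mbit,\emin-\mbit+1\}$ to check by hand. In the example above, $\gamma=1.1\times 2^{1-\mbit}$ gives $3\fmin$.

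\emph{Top of the range.} At $e=\emax-\mbit$, take $\sigma(C_1)=2^{\emin}$ and $\sigma(C_2)=(2^{-\mbit-2})^{++}$.
\begin{itemize}
\item $C_1$ is useless: $\fmax\otimes\sigma(C_1)<4\le 2^{e-1}$ whenever $\mbit<2^{\ebit-1}-3$.
\item For $C_2$, $\lceil\log_2\sigma(C_2)\rceil=-\mbit-1$, so your $k$ equals $\emax+1$ and $2^k\notin\fpq$. Meanwhile $2^{\emax}\otimes\sigma(C_2)<2^{\emax-\mbit-1}$.
\end{itemize}
So the claim that the slack ``keeps $k\le\emax$'' is false: no power of two works for either $C_i$.

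You need a $\gamma$ with a full mantissa, e.g.\ $\gamma=\fmax$ here. The paper uses $(2-2^{-\mbit})2^{\emax-\mbit-2-\expo{\sigma(C_2)}}$ when the mantissa of $\sigma(C_2)$ is at least $1^{++}$. Showing that the exact product exceeds $2^{\emax-\mbit-1}$ by more than half an ulp is exactly where the strict hypothesis $|\sigma(C_2)|>(2^{-\mbit-2})^+$ is used.
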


\begin{proof}
    
    Let $C_1$ and $C_2$ from \cref{condition:proper_exponent} be represented as 
    \begin{equation*}
        \sigma(C_1) = \mant{\sigma(C_1)}\times 2^{\expo{\sigma(C_1)}} \quad \text{ and }   \sigma(C_2) = \mant{\sigma(C_2)}\times 2^{\expo{\sigma(C_2)}}. 
    \end{equation*}

    Consider the case $\emin -\mbit -2\le e\le 2$.
    If $\mant{\sigma(C_1)}\in \left[1, \frac{5}{4}\right]_{\fpq}$, define $\gamma$ as 
    \begin{equation*}
        \gamma \defeq 1.1\times 2^{e - \expo{\sigma(C_1)}-1}.
    \end{equation*}
    As $ \expo{\sigma(C_1)}\ge \emin = -2^{\ebit-1} +2$, $e - \expo{\sigma(C_1)}-1 \le 2^{\ebit-1}-1 = \emax$, and $\gamma\in\fpq$.
    Then, 
    \begin{equation*}
       \frac{3}{4} \times  2^{e}\le \gamma \times \sigma(C_1) = \mant{\sigma(C_1)}\times 0.11 \times 2^{e} \le \frac{15}{16}\times 2^{e},
    \end{equation*}
    and thus
    \begin{equation*}
        \frac{3}{4}\times 2^{e}\le 
        \gamma \otimes \sigma(C_1) = \round{\mant{\sigma(C_1)}\times 1.1\times  2^{e}}_{\fpq}
        \le  2^{e},
    \end{equation*}
    where the first inequality is satisfied as $e\ge \emin-\mbit+2$.
    If $\mant{\sigma(C_1)}\in \left(\frac{5}{4},2\right)_{\fpq}$, define $\gamma$ as 
    \begin{equation*}
        \gamma \defeq 2^{e - \expo{\sigma(C_1)}-1}.
    \end{equation*}
    Similar to the above case, $\gamma\in\fpq$.
    Then,
    \begin{equation*}
      \frac{1}{2}\times 2^{e} <  \gamma \otimes \sigma(C_1) = \round{\mant{\sigma(C_1)}\times 2^{e -1}}_{\fpq} \le 2^{e },
    \end{equation*}
     where the first inequality is satisfied as $e-1\ge \emin-\mbit+1$.

    Consider the case of $2< e <\emax-\mbit$.
   If $\mant{\sigma(C_2)}\in \left[1, \frac{5}{4}\right]_{\fpq}$, define $\gamma$ as 
    \begin{equation*}
        \gamma \defeq 1.1\times 2^{e - \expo{\sigma(C_2)}-1}.
    \end{equation*}
    As $e - \expo{\sigma(C_2)}-1\le {\emax} -\mbit - 1 - \expo{\sigma(C_2)}-1\le \emax$, $\gamma\in \fpq$.
        Then, similar to the above case, 
        \begin{equation*}
            \frac{3}{4} \times  2^{e}\le \round{\gamma \times \sigma(C_2)}\le  2^{e}.
        \end{equation*}
      If $\mant{\sigma(C_2)}\in \left(\frac{5}{4},2\right)_{\fpq}$, define $\gamma$ as 
    \begin{equation*}
        \gamma \defeq 2^{e - \expo{\sigma(C_2)}-1},
    \end{equation*}
    and similar arguments holds.
    
    Consider the case $e=\emax -\mbit$.
     If $\mant{\sigma(C_2)}\ge 1^{++}$, define $\gamma$ as 
    \begin{equation*}
       \gamma\defeq  \lrp{2-2^{-\mbit}} \times 2^{\emax - \mbit -2 - \expo{\sigma(C_2)}}.
    \end{equation*}
    Then, $\gamma\in \fpq$, and we have
    \begin{align*}
   2^{\emax-\mbit-1} &<     \gamma \otimes \sigma(C_2) \\
   &= \lrp{\mant{\sigma(C_2)} \otimes \lrp{2-2^{-\mbit}}}\times 2^{\emax - \mbit -2} \le 2^{\emax-\mbit}.
    \end{align*}
     If $\mant{\sigma(C_2)}\le  1^{+}$, then, $\expo{\sigma(C_2)}\ge -\mbit -1$. 
    If $\mant{\sigma(C_2)}\in \left[1, \frac{5}{4}\right]_{\fpq}$, define $\gamma$ as 
       \begin{equation*}
        \gamma \defeq 1.1\times 2^{\emax-\mbit -1 - \expo{\sigma(C_2)}},
    \end{equation*}
    and if $\mant{\sigma(C_2)}\in \left(\frac{5}{4},2\right)_{\fpq}$, define $\gamma$ as 
    \begin{equation*}
        \gamma \defeq 2^{\emax-\mbit-1- \expo{\sigma(C_2)}}.
    \end{equation*}
    Then, $\gamma\in \fpq$, and similar arguments hold as in $2< e <\emax-\mbit$ case.
    
     Consider the case $e = \emin -\mbit +1$.
     If $1\le \mant{\sigma(C_1)}<\frac{5}{4}$, define $\gamma$ as 
     \begin{equation*}
         \gamma\defeq 2^{e-\expo{\sigma(C_1)}}.
     \end{equation*}
     Then,
     \begin{equation*}
         \gamma\otimes \sigma(C_1) =  \round{\mant{\sigma(C_1)}\times 2^{e}}_{\fpq}
         =  \round{\mant{\sigma(C_1)}\times 2\fmin}_{\fpq} = 2\fmin = 2^{e}.
     \end{equation*}
     If $\frac{5}{4}\le \mant{\sigma(C_1)} < \frac{5}{3}$, define $\gamma$ as 
     \begin{equation*}
        \gamma\defeq  1.1\times 2^{e-\expo{\sigma(C_1)}-1}.
     \end{equation*}
     As $\expo{\sigma(C_1)}\le -1$, $\gamma\in \fpq$. 
          Then,
\begin{align*}
 \gamma\otimes \sigma(C_1) &= \round{\mant{\sigma(C_1)}\times 1.1\times 2^{e-1}}_{\fpq} \\
 &=  \round{\mant{\sigma(C_1)}\times 1.1 \times \fmin}_{\fpq} = 2\fmin = 2^{e}.
\end{align*}
          If $\frac{5}{3}\le \mant{\sigma(C_1)} <2 $, define $\gamma$ as 
     \begin{equation*}
        \gamma\defeq  2^{e-\expo{\sigma(C_1)}-1}.
     \end{equation*}
         Then,
     \begin{equation*}
         \gamma\otimes \sigma(C_1) =  \round{\mant{\sigma(C_1)}\times  2^{e-1}}_{\fpq}
         =  \round{\mant{\sigma(C_1)}\times  \fmin}_{\fpq} = 2\fmin = 2^{e}.
     \end{equation*}

     Consider the case $e = \emin -\mbit$.
     If $1 \le \mant{\sigma(C_1)} <1.1 $, define $\gamma$ as 
    \begin{equation*}
        \gamma\defeq  2^{e-\expo{\sigma(C_1)}}.
    \end{equation*}
      Then,
     \begin{equation*}
         \gamma\otimes \sigma(C_1) =  \round{\mant{\sigma(C_1)}\times  2^{e}}_{\fpq}
         =  \round{\mant{\sigma(C_1)}\times  \fmin}_{\fpq} = \fmin = 2^{e}.
     \end{equation*}
          If $1.1 \le \mant{\sigma(C_1)} <2 $, define $\gamma$ as 
    \begin{equation*}
        \gamma\defeq  2^{e-\expo{\sigma(C_1)}-1}.
    \end{equation*}
    As $\expo{\sigma(C_1)}\le -1$, $\gamma\in \fpq$. 
      Then,
\begin{align*}
 \gamma\otimes \sigma(C_1) &=  \round{\mant{\sigma(C_1)}\times  2^{e}-1}_{\fpq} \\
 &=  \round{\mant{\sigma(C_1)}\times \frac{1}{2} \fmin}_{\fpq} = \fmin = 2^{e}.
\end{align*}
    
This completes the proof.

\end{proof}

\section{Technical lemmas on reduction order }

We present several technical lemmas on reduction order.

\begin{lemma}\label[lemma]{lemma:reduction_tree}
For every $n\ge 1$ and every $o\in\mcO_n$, there exists a full binary tree
$T_o$ with exactly $n$ leaves, whose leaves are labeled by
$x_1,\dots,x_n$, such that $\Sigma(\bfx;o)$ is obtained by evaluating
$T_o$ with the binary operation $\oplus$ at every internal node.
In particular, every reduction order $o\in\mcO_n$ determines a full binary
addition tree for $n$ inputs.
\end{lemma}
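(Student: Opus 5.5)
We argue by induction on $n$, following the recursive definition of $\Sigma(\cdot;o)$. To make the induction go through, we prove a slightly stronger statement: for any $n$-tuple of formal leaves $(t_1,\dots,t_n)$, each of which is already a full binary tree, and any $o\in\mcO_n$, the reduction $o$ combines them into a single full binary tree. Its leaves are the disjoint union of the leaves of the $t_k$, and evaluating it with $\oplus$ at each internal node gives $\Sigma((\mathrm{val}(t_1),\dots,\mathrm{val}(t_n));o)$. The lemma is the special case where each $t_k$ is the single-leaf tree labeled $x_k$.

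\textbf{Base cases.} For $n=1$, $o=(1)$ and $\Sigma(x;(1))=x$, so $T_o$ is just the single leaf $x_1$ (or the given tree $t_1$). For $n=2$, the only order is $(1,2)$. We take the tree with a root whose left and right subtrees are $t_1$ and $t_2$. This tree is full, has $n$ leaves in total, and evaluates to $\mathrm{val}(t_1)\oplus\mathrm{val}(t_2)$, as required.

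\textbf{Inductive step.} Let $n\ge3$ and $o=((i_1,j_1),o')$. We form the new tree $s$ whose root has children $t_{i_1}$ and $t_{j_1}$. Then $s$ is full and evaluates to $\mathrm{val}(t_{i_1})\oplus\mathrm{val}(t_{j_1})$, which is exactly the first coordinate of $\widehat{\bfx}_{i_1,j_1}$. The reordered tuple $(s,t_1,\dots,\widehat{t_{i_1}},\dots,\widehat{t_{j_1}},\dots,t_n)$ has $n-1$ entries whose values coincide coordinatewise with $\widehat{\bfx}_{i_1,j_1}$, and its leaves are still disjoint. Applying the induction hypothesis with $o'\in\mcO_{n-1}$ yields a full binary tree that evaluates to $\Sigma(\widehat{\bfx}_{i_1,j_1};o')=\Sigma(\bfx;o)$.

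\textbf{Checks.} Fullness is preserved because every newly created node has exactly two children. The leaf count is preserved because no leaves are created or removed; there are exactly $n$ of them, labeled $x_1,\dots,x_n$. The only real obstacle is bookkeeping: the merged entry is moved to the first coordinate, so indices shift at each step. Tracking trees attached to positions, rather than raw indices, handles this cleanly.
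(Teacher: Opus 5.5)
Your proof is correct and follows essentially the same route as the paper: induction on $n$ along the recursive definition $\Sigma(\bfx;o)=\Sigma(\widehat{\bfx}_{i_1,j_1};o')$. The only difference is bookkeeping. The paper applies the induction hypothesis to $\widehat{\bfx}_{i_1,j_1}$ and then expands the leaf carrying $x_{i_1}\oplus x_{j_1}$ into an internal node with children $x_{i_1},x_{j_1}$, whereas you strengthen the hypothesis to tuples of subtrees so the merged subtree is passed in directly, which makes that substitution step explicit.
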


\begin{proof}
We prove the claim by induction on $n$.

For $n=1$, the reduction order is $(1)$, and $\Sigma(x_1;(1))=x_1$.
The corresponding tree consists of a single leaf labeled by $x_1$.

For $n=2$, the only reduction order is $(1,2)$, and
\[
\Sigma((x_1,x_2);(1,2))=x_1\oplus x_2.
\]
Thus the corresponding tree has one internal node with two leaves labeled
$x_1$ and $x_2$.

Assume the claim holds for $n-1$. Let
\[
o=((i_1,j_1),\dots,(i_{n-1},j_{n-1}))\in\mcO_n.
\]
By definition, the first reduction replaces $x_{i_1}$ and $x_{j_1}$ by
$x_{i_1}\oplus x_{j_1}$ and produces the $(n-1)$-tuple
$\widehat{\bfx}_{i_1,j_1}$. Let
\[
o'=((i_2,j_2),\dots,(i_{n-1},j_{n-1}))\in\mcO_{n-1}.
\]
By the induction hypothesis, $o'$ determines a full binary tree for
$\Sigma(\widehat{\bfx}_{i_1,j_1};o')$. Replacing the leaf corresponding to
the entry $x_{i_1}\oplus x_{j_1}$ by an internal node with two children
labeled $x_{i_1}$ and $x_{j_1}$ gives a full binary tree with $n$ leaves.
Evaluating this tree gives exactly
\[
\Sigma(\bfx;o)
=
\Sigma(\widehat{\bfx}_{i_1,j_1};o').
\]
Therefore $o$ determines a full binary addition tree. This completes the
induction.
\end{proof}

\begin{definition}[Depth of a reduction order]
Let $o\in\mcO_n$, and let $T_o$ be the full binary tree corresponding to
the reduction order $o$ as in Lemma~\ref{lemma:reduction_tree}.

For a node $v$ of $T_o$, define its depth as the number of edges on the
unique path from the root to $v$.
The depth of the tree $T_o$ is
\[
\depth(T_o)
\defeq
\max_{v\in \mathrm{Leaf}(T_o)}
\depth(v).
\]

We define the depth of the reduction order $o$ by
\[
\depth(o)
\defeq
\depth(T_o).
\]
\end{definition}
Note that 
\[
\depth\bigl(((1,2),(1,2),\dots,(1,2))\bigr)=n-1,
\]
and 
\[
\depth\bigl(((1,2),(2,3),(1,2))\bigr)=2.
\]

\begin{lemma}\label[lemma]{lem:binary_tree}
    Let $o \in \mcO_n$. Then 
    \begin{align*}
      \ceil{\log_2(n)}   \le \mathrm{depth}(o) \le n-1.
    \end{align*}
\end{lemma}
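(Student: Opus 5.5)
The statement to prove is \cref{lem:binary_tree}: for $o\in\mcO_n$, $\ceil{\log_2(n)}\le \depth(o)\le n-1$. By \cref{lemma:reduction_tree}, $o$ determines a full binary tree $T_o$ with exactly $n$ leaves, and $\depth(o)=\depth(T_o)$. So the claim reduces to the classical statement that every full binary tree with $n$ leaves has depth between $\ceil{\log_2 n}$ and $n-1$. I will prove both bounds by strong induction on $n$, splitting the tree at its root.

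For the base case $n=1$, the tree is a single leaf, so the depth is $0=\ceil{\log_2 1}=n-1$. For $n\ge2$, the root is an internal node. Since the tree is full, the root has two subtrees $T_L$ and $T_R$, which are again full binary trees. They have $n_L,n_R\ge1$ leaves with $n_L+n_R=n$. Every leaf of $T_o$ lies in one of the subtrees, and its depth in $T_o$ is one more than its depth in that subtree. Hence
\[
\depth(T_o)=1+\max\{\depth(T_L),\depth(T_R)\}.
\]

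For the upper bound, the induction hypothesis gives $\depth(T_L)\le n_L-1\le n-2$, and likewise for $T_R$, since $n_L,n_R\le n-1$. Therefore $\depth(T_o)\le n-1$. Equality is attained by the left-to-right order $\bar o_n$, consistent with the remark preceding the lemma.

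For the lower bound, we may assume $n_L\ge n_R$, so $n_L\ge n/2$. The induction hypothesis gives $\depth(T_L)\ge\ceil{\log_2 n_L}\ge\ceil{\log_2(n/2)}$. Hence
\[
\depth(T_o)\ge 1+\ceil{\log_2 n-1}=\ceil{\log_2 n}.
\]
Alternatively, one can count leaves directly: a binary tree of depth $D$ has at most $2^D$ leaves, so $n\le 2^{\depth(o)}$, and since the depth is an integer, $\depth(o)\ge\ceil{\log_2 n}$.

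There is no real obstacle here. The only point needing care is that depth is measured as the maximum leaf depth in $T_o$, so the recursion through the root's two subtrees adds exactly one.
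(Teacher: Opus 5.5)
Your proof is correct. Like the paper, you first pass to the full binary tree $T_o$ from \cref{lemma:reduction_tree}; the difference is only in how you obtain the two bounds.

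The paper uses direct counting for both bounds:
\begin{itemize}
\item For the lower bound, a binary tree of depth $d$ has at most $2^d$ leaves. This is exactly your ``alternative'' argument.
\item For the upper bound, a full binary tree with $n$ leaves has exactly $n-1$ internal nodes. A root-to-leaf path with $k$ edges passes through $k$ distinct internal nodes, so $k\le n-1$.
\end{itemize}

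You instead use strong induction on $n$ through the root split $\depth(T_o)=1+\max\{\depth(T_L),\depth(T_R)\}$. This treats both bounds with a single recursion. It also avoids relying on the internal-node count, which is itself usually proved by the same kind of induction.

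The paper's version is shorter and non-recursive, while yours is more self-contained. Either is adequate here.
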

\begin{proof}
Let $T_o$ be the full binary tree corresponding to the reduction order $o$.
By definition,
\[
\depth(o)=\depth(T_o).
\]

Since $T_o$ has exactly $n$ leaves and every internal node has exactly two
children, a binary tree of depth $d$ can have at most $2^d$ leaves.
Therefore
\[
n \le 2^{\depth(T_o)}.
\]
Taking $\log_2(\cdot)$ gives
\[
\log_2 n \le \depth(T_o).
\]
Since $\depth(T_o)$ is an integer, we obtain
\[
\ceil{\log_2 n}\le \depth(T_o)=\depth(o).
\]

It remains to prove the upper bound. Any full binary tree with $n$ leaves
has exactly $n-1$ internal nodes. Every path from the root to a leaf contains
at most all internal nodes of the tree. Hence the number of edges on such a
path is at most $n-1$. Therefore
\[
\depth(T_o)\le n-1.
\]
Thus
\[
\ceil{\log_2 n}\le \depth(o)\le n-1.
\]
\end{proof}

\begin{lemma}\label[lemma]{lem:seq_to_arbi}
Let $\bfx \in \fpq^n$ and $\mcS = \{o_{2}, o_3, \dots , \}$ be reduction order. 
There exist $ k \in \bbN$ with $k \ge n$ and   embedding $\iota : \fpq^n \to \fpq^k$ such that 
\begin{align*}
    \Sigma(\bfx, \bar{o}_n) = \Sigma( \iota(\bfx) , o_k). 
\end{align*}

\end{lemma}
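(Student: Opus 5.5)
The plan is to pad $\bfx$ with zeros and place its entries at carefully chosen leaves of the addition tree of some $o_k\in\mcS$, so that the zeros are absorbed and the surviving computation is exactly the left-to-right sum. Two elementary facts about $\oplus$ on $\efpq$ drive this. First, $0\oplus 0=0$ and $a\oplus 0=0\oplus a=a$ for every $a\in\efpq$: for finite $a$ this holds because $\round{a}=a$, and for $a\in\{\pm\infty,\nan\}$ it holds by the conventions. Second, $\oplus$ is commutative, since $a\oplus b=\round{a+b}=\round{b+a}$. The case $n=1$ is trivial, since $\Sigma(\bfx;\bar o_1)=x_1$ and we can take any $k\ge2$ with $\iota(x_1)=(x_1,0,\dots,0)$. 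So assume $n\ge2$.

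\textbf{Choice of $k$.} Take $k\defeq 2^{n}$, and let $T$ be the full binary tree of $o_k$ given by \cref{lemma:reduction_tree}. By \cref{lem:binary_tree}, $\depth(T)\ge\ceil{\log_2 k}=n\ge n-1$. Hence there is a leaf $\ell_0$ and a root-to-leaf path $v_0,v_1,\dots,v_m=\ell_0$ with $m\ge n-1$. Keep only the bottom $n-1$ internal nodes $u_1\defeq v_{m-1},u_2\defeq v_{m-2},\dots,u_{n-1}\defeq v_{m-n+1}$ of this path. For each $j\in[n-1]$, let $s_j$ be the child of $u_j$ that is not on the path, and pick any leaf $\ell_j$ in the subtree rooted at $s_j$.

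\textbf{Construction of $\iota$.} Assign $x_1$ to leaf $\ell_0$ and $x_{j+1}$ to leaf $\ell_j$ for $j\in[n-1]$. All other leaves get $0$. This defines $\iota(\bfx)\in\fpq^k$ through an injective map $I:[n]\to[k]$ sending $i$ to the input position of the leaf assigned $x_i$. Since the leaves are distinct, $I$ is injective, so $\iota$ is an embedding in the sense of the definition.

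\textbf{Evaluation of the tree.} Evaluate $T$ bottom-up. Every subtree all of whose leaves are $0$ evaluates to $0$, by induction using $0\oplus0=0$. Every subtree containing exactly one nonzero-assigned leaf with value $a$ evaluates to $a$, by induction using $a\oplus0=0\oplus a=a$. Thus the subtree below $v_m$ gives $x_1$, and the subtree at $s_j$ gives $x_{j+1}$. By induction on $j$, using commutativity to ignore which child is on the left, $u_j$ evaluates to
\[
(\cdots((x_1\oplus x_2)\oplus x_3)\cdots)\oplus x_{j+1}.
\]
The nodes above $u_{n-1}$ have off-path siblings that are all-zero subtrees, so they pass this value unchanged up to the root. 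Therefore $\Sigma(\iota(\bfx);o_k)=\Sigma(\bfx;\bar o_n)$.

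The main point to get right is the combinatorial step: guaranteeing that the fixed but arbitrary tree of $o_k$ contains a "caterpillar" of $n$ leaves. This is handled by forcing large depth through the lower bound in \cref{lem:binary_tree}. A minor check is that zeros are truly neutral, which holds because signed zeros are not modeled in $\fpq$ and $\nan$ or infinities propagate unchanged.
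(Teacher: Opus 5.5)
Your proof is correct and follows essentially the same route as the paper. Like the paper, you pad with zeros and use the depth lower bound of \cref{lem:binary_tree} to get a long root-to-leaf path in the tree of some $o_k$; you then place the $x_i$ on that path's leaf and on leaves of the off-path sibling subtrees, so the zeros are absorbed and exactly the left-to-right sum survives. Your version is slightly more careful than the paper's write-up: you work at the bottom of a maximal path, pick leaves inside the off-path subtrees, and state explicitly that $0$ is neutral and $\oplus$ is commutative, whereas the paper implicitly assumes a leaf sits at depth exactly $n-1$.
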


\begin{proof}
We first note that every non-leaf node in a full binary tree has two children, and hence each non-root node has exactly one sibling.
From the sequence $ o_n , o_{n+1} , \dots ,  $, pick the smallest $k \in \bbN$ such that $\mathrm{depth}(o_k) \ge n-1$. Such $k \in \bbN$ exists by \cref{lem:binary_tree}. Let $\mathcal{T}_k$ be the binary tree for $\Sigma( \tilde{\bfx}_k , o_k)$ and $v_0 , \dots v_{n-1}$ be a sequence of nodes from root $v_0$ to the leaf $v_{n-1}$. Let $v_n$ be the sibling of $v_{n-1}$ ($v_{n-2} = v_{n-1} \oplus v_{n})$. Let $w_n = v_n, w_{n-1} = v_{n-1}$. For $i= n-2 , n-3, \dots , 1$, let $w_i$ be the leftest leaf of the sibling of $v_i$. Then for leaves $\lambda_1, \dots , \lambda_k$ of $\mathcal{T}_k$, we define
\begin{align*}
    \lambda_i = \begin{cases}
        x_{n+1-j} \; &\text{if} \; \lambda_i = w_j \\ 
        0 \; &\text{if} \; \lambda_i \ne w_j 
    \end{cases}
\end{align*}
We define $\iota : \fpq^n \to \fpq^k$ such that \[ \iota(\bfx) = (\lambda_1 , \dots, \lambda_k). \]
Then we have 
\begin{align*}
    \Sigma( \iota(\bfx) , o_k) &= (\dots  (x_1 \oplus x_2 ) \oplus x_3 )\oplus \dots ) \oplus x_n, \\
    &=  \Sigma(\bfx, \bar{o}_n).
\end{align*}
\end{proof}

\begin{lemma}\label[lemma]{lemma:transfer_seq_to_reduction}
Let $\mathcal{S}$ be a reduction order.
Suppose 
\begin{align*}
    (x_1, \dots, x_m)  \xRightarrow{(\sigma, \bfz, 0 ,  \overline{\mcS})}  (y_1,\dots, y_m),
\end{align*}
for $\bfz = (z_1 , \dots z_n ) \in \bbS_\sigma^n$.Then there exist $\tilde{n} \ge n$, $j \in \{0\} \cup[\tilde{n}]$, and  $\tilde{\bfz} \in \bbS^{\tilde{n}}$ such that  
\begin{align*}
    (x_1, \dots, x_m)  \xRightarrow{(\sigma,\tilde{\bfz}, j, \mcS)}  (y_1,\dots, y_m).
\end{align*}     
\end{lemma}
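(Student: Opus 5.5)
The plan is to reduce the arbitrary-order statement to \cref{lem:seq_to_arbi}, using the zero entries that lemma inserts as padding. By hypothesis there is $\bfz=(z_1,\dots,z_n)\in\bbS_\sigma^n$ such that, for every $i\in[m]$,
\begin{align*}
\Sigma\bigl((x_i,z_1,\dots,z_n);\bar{o}_{n+1}\bigr)=(\cdots((x_i\oplus z_1)\oplus z_2)\oplus\cdots)\oplus z_n=y_i .
\end{align*}
Apply \cref{lem:seq_to_arbi} to the vector $\bfx^{(i)}=(x_i,z_1,\dots,z_n)\in\fpq^{n+1}$. This gives some $k\ge n+1$ and an embedding $\iota:\fpq^{n+1}\to\fpq^k$ with $\Sigma(\iota(\bfx^{(i)});o_k)=\Sigma(\bfx^{(i)};\bar{o}_{n+1})=y_i$.

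The key observation concerns the construction in the proof of \cref{lem:seq_to_arbi}. There, $k$ and the leaf positions $w_1,\dots,w_{n+1}$ are chosen using only $\mcS$ and $n+1$: take the first $o_k$ with $\depth(o_k)\ge n$, a root-to-leaf path, and leftmost leaves of the siblings along it. None of this depends on the entries of $\bfx^{(i)}$. Hence one embedding $\iota$ works simultaneously for all $i\in[m]$, and the first entry $x_i$ always lands in one fixed leaf position, say position $j+1$.

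Next I would set $\tilde n=k-1$ and let $\tilde{\bfz}\in\fpq^{\tilde n}$ be $\iota(\bfx^{(i)})$ with coordinate $j+1$ deleted. This vector is independent of $i$. Its entries are the $z_l$, placed at the remaining $w$-leaves, together with zeros. Then $\iota(\bfx^{(i)})=\iota_{\tilde{\bfz},j}(x_i)$, so $\Sigma(\iota_{\tilde{\bfz},j}(x_i),\mcS)=y_i$ for all $i$, as required.

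The remaining check is $\tilde{\bfz}\in\bbS_\sigma^{\tilde n}$. The $z_l$ already lie in $\bbS_\sigma$. For the zeros, note that $0=0\otimes\sigma(c)$ for any $c$ with $\sigma(c)$ finite; for instance, \cref{condition:proper_exponent} supplies $\sigma(C_0)=0$, so $0\in\bbS_\sigma$. If $\sigma(c)$ were infinite for every $c$, we could not write $0$ this way. In the setting where the lemma is used, \cref{condition:proper_exponent} is in force, so I would state that assumption explicitly.

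The main obstacle is verifying that the tree construction truly reproduces left-to-right addition when the non-$w$ leaves are zero. The idea is that any subtree containing only zero leaves evaluates to $0$, and $a\oplus 0=a$ for finite $a$ (and also for $\pm\infty$ and $\nan$). So at each node $v_l$ on the path, the sibling subtree contributes exactly its single nonzero leaf. Working upward from the bottom, the accumulated value is then $(\cdots(x_i\oplus z_1)\cdots)\oplus z_l$. Two points need care here. First, $x_i$ and $z_1$ must be placed at the deepest pair of leaves, $v_{n-1}$ and its sibling. Second, the ordering of operands must be checked, which is harmless because $\oplus$ is commutative. I would spell out this induction along the path rather than cite it, since the proof of \cref{lem:seq_to_arbi} leaves it implicit.
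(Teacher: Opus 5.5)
Your proposal is correct and is essentially the paper's proof. You apply \cref{lem:seq_to_arbi} to $(x_i,z_1,\dots,z_n)$, use that the resulting embedding does not depend on the entries (so the same $k$ and the same slot for $x_i$ serve every $i$), and delete that slot to obtain $\tilde{\bfz}$ and $j$. Your extra checks fill in steps the paper leaves implicit: uniformity in $i$, the evaluation of the zero-padded tree, and membership of the padding zeros in $\bbS_\sigma$.

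You do not need to add \cref{condition:proper_exponent}, though. Since $w\otimes(\pm\infty)$ and $w\otimes\nan$ never lie in $\fpq$, the entry $z_1\in\bbS_\sigma$ already certifies some $\sigma(c)\in\fpq$, so $0=0\otimes\sigma(c)\in\bbS_\sigma$. If $n=0$, simply take $\tilde n=j=0$.
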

\begin{proof}
By assumption, we have 
\begin{align*}
    \Sigma( (x_i , z_1 , \dots , z_n) , \overline{S}) =  \Sigma( (x_i , z_1 , \dots , z_n) , \bar{o}_{n+1}) =  y_i. 
\end{align*}
By \cref{lem:seq_to_arbi}, we have embedding $\iota : \fpq^{n+1} \to \fpq^k$ such that 
\begin{align*}
    \Sigma(\bfx, \bar{o}_{n+1}) = \Sigma( \iota(\bfx) , o_k), 
\end{align*}
for $\bfx \in \fpq^{n+1}$.

Let $j_a$ denote the position of embedding of $x_a$ via $\iota$ : 

\[ \iota( (x_1 , \dots , x_{n+1}) ) = 
(\dots , \; \underset{\substack{\uparrow \\ \scriptstyle j_a\text{-th position}}}{x_a} , \; \dots ) \in \fpq^k.
\]

Define $\iota_2 : \fpq \to \fpq^{k}$ such that 
\begin{align*}
    \iota_2(x) &= \iota( (x , z_1 , \dots , z_n)) \\
    &=  (\dots , \; \underset{\substack{\uparrow \\ \scriptstyle j_1\text{-th position}}}{x} , \; \dots , \;   \underset{\substack{\uparrow \\ \scriptstyle j_{b+1}\text{-th position}}}{z_b} , \dots ) \ 
\end{align*}
for $b \in [n]$. 
Define $\tilde{z} \in \fpq^{k-1}$ by removing $x$ in $\iota_2(x)$ :  
\begin{align*}
    \tilde{z} &=   ( \dots , \; \underset{\substack{\uparrow \\ \scriptstyle j_1\text{-th position}}}{ \hat{x}} \; , \dots ) \; \text{in $\iota_2(x)$ }.
\end{align*}

Hence we have 
\begin{align*}
   \Sigma(\iota_{\tilde{z}, j_1}(x) , \mcS) =  \Sigma( \iota_2(x_i) , \mcS) =  \Sigma( (x_i ,z_1 , \dots, z_n), \overline{\mcS}) =  y_i.
\end{align*}
for $i \in [m]$.
\end{proof}

\begin{lemma}\label[lemma]{lemma:sequential_addition_main_reduction_order}
Let $\sigma:\efpq\to\efpq$ and suppose that $\sigma$ satisfies \cref{condition:proper_exponent}.
Then, for any $y\in [-2^{\emax},2^{\emax})_{\fpq}$ and $x_1, x_2\in [-2^{\emax},2^{\emax}]_{\fpq}$ such that $x_2 - x_1\in (0,2^{\emax}]_{\fpq}$, 
there exist $\bfz = (z_1 , \dots z_n ) \in \bbS_\sigma^n$ and $j \in \{0\} \cup [n]$ such that   
\begin{equation*}
    (-2^{\emax},y,y^+, 2^{\emax}) \xRightarrow{(\sigma, \bfz, j, \mcS)}  (x_1, x_1, x_2, x_2).
\end{equation*}
\end{lemma}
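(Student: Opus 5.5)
This lemma should follow by combining \cref{lemma:sequential_addition_main} with \cref{lemma:transfer_seq_to_reduction}. First apply \cref{lemma:sequential_addition_main} with the given $y$, $x_1$, $x_2$. It produces a left-to-right transfer
\begin{equation*}
(-2^{\emax},y,y^+,2^{\emax}) \tranLR (x_1,x_1,x_2,x_2),
\end{equation*}
that is, some $\bfz=(z_1,\dots,z_n)\in\bbS_\sigma^n$ with $\Sigma(\iota_{\bfz,0}(u),\overline{\mcS})$ equal to the prescribed target for each of the four inputs $u$.

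One point needs care. The proof of \cref{lemma:sequential_addition_main} builds the transfer as a composition $f_5\circ f_6\circ f_4\circ f_3$ (with $f_3=f_2\circ f_1\ominus\omega$) of several left-to-right transfers. Each of these is of the form $u\mapsto(\cdots((u\oplus z_1)\oplus z_2)\cdots)\oplus z_r$. Composing two such maps simply concatenates their $\bfz$-vectors, and subtracting $\omega$ appends the entry $-\omega$. So I will check once that the relation $\tranLR$ is closed under composition, with the witness being the concatenated tuple and $j=0$. I will also check that $-\omega$ lies in $\bbS_\sigma$, which holds by \cref{lem:endbit_control} together with closure of $\bbS_\sigma$ under negation (negate $\gamma$). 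With this, the conclusion of \cref{lemma:sequential_addition_main} is a single relation with a single $\bfz$ and $j=0$, exactly as \cref{lemma:transfer_seq_to_reduction} requires.

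Next apply \cref{lemma:transfer_seq_to_reduction} with $m=4$ to this single relation. It yields $\tilde n\ge n$, a position $j$, and $\tilde\bfz\in\bbS^{\tilde n}$ such that the same four inputs are mapped to $(x_1,x_1,x_2,x_2)$ under the arbitrary order $\mcS$. This is exactly the claim.

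The main thing to verify is that the padding entries inserted by \cref{lem:seq_to_arbi} lie in $\bbS_\sigma$. These entries are zeros, and $0=w\otimes\sigma(c)$ with $w=0$, so $0\in\bbS_\sigma$. A second subtle point is that $\tilde\bfz$ and $j$ must not depend on which of the four inputs is fed in. This holds because the embedding from \cref{lem:seq_to_arbi} depends only on $o_k$ and the tree positions, not on the input values, so the same $\tilde\bfz$ and $j$ serve all four inputs simultaneously.
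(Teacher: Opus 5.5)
Your proposal is correct and follows the paper's proof: apply \cref{lemma:sequential_addition_main} to get a single left-to-right transfer, then \cref{lemma:transfer_seq_to_reduction} to pass to the arbitrary order $\mcS$. Your extra checks make explicit details the paper leaves implicit, and they are sound: the chained maps compose into one $\tranLR$ witness with $j=0$, $-\omega\in\bbS_\sigma$ via \cref{lem:endbit_control} plus negation, $0\in\bbS_\sigma$ covers the padding, and the embedding is independent of the input.
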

\begin{proof}
    By \cref{lemma:sequential_addition_main}, 
there exist $n' \in \bbN, \bfz' \in \bbS_\sigma^{n'}$ such that   
\begin{equation*}
(-2^{\emax},y,y^+, 2^{\emax}) \xRightarrow{(\sigma, \bfz', 0 , \overline{\mcS})}  (x_1, x_1, x_2, x_2).
\end{equation*}
By \cref{lemma:transfer_seq_to_reduction}, there exist $n \ge n'$, $j \in \{0\} \cup [n]$ and  $\bfz \in \bbS^{n}$ such that  
\begin{align*}
    (x_1, \dots, x_m)  \xRightarrow{(\sigma,\bfz, j,  \mcS)}  (y_1,\dots, y_m).
\end{align*}     

\end{proof}

\subsection{Proof of \cref{thm:main}}
\label{sec:pflem:thm_main}
To prove \cref{thm:main}, we present the following lemmas with proofs presented in \cref{subsec:pflem:indicator_from_distinguishable,sec:pflem:last_layer_sums}  

\begin{lemma}\label[lemma]{lemma:indicator_from_distinguishable}
Let $\sigma:\efpq\to\efpq$, $d\in\bbN$, and $\domain\subset \fpq^d$. %
Suppose that $\sigma$ satisfies \cref{condition:proper_exponent}
and $\domain$ is $\sigma$-distinguishable with range $\left[-2^{\emax }, 2^{\emax }\right]_{\fpq}$ under $\mcS$.
Then, for any $z\in \fpq^d$ and $c\in\{C_1,C_2\}$, %
there exists a three-layer $\sigma$ network $f:\domain\rightarrow\fpq$ ending with the activation function such that 
    \begin{equation*}
        f(x) = {\sigma}\lrp{c}\indc{z}{x}.
    \end{equation*}
\end{lemma}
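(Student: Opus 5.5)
The plan is to build $\sigma(c)\indc{z}{x}$ as a conjunction of pairwise separations. First layer: for each $x'\in\domain\setminus\{z\}$ one unit separating $z$ from $x'$. Second layer: one unit per $x'$ that maps the ``$x'$-type'' value to $0$ and the ``$z$-type'' value to a large value. Third layer: one unit that adds these up and thresholds the sum so the output is $\sigma(c)$ exactly when all second-layer units fire. If $z\notin\domain$ the case is easier, and I treat it at the end. Since $\domain\subset\fpq^d$ is finite, all widths are finite.

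\textbf{First layer.} Fix $z\in\domain$. For each $x'\in\domain\setminus\{z\}$, distinguishability with range $[-2^{\emax},2^{\emax}]_\fpq$ gives an affine map $\phi_{x'}$ under $\mcS$ with $u_{x'}\defeq\sigma(\phi_{x'}(z))\neq\sigma(\phi_{x'}(x'))$ and $\sigma(\phi_{x'}(\domain))\subset[-2^{\emax},2^{\emax}]_\fpq$. This is the output of the first $\sigma$-layer.

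\textbf{Second layer.} For each $x'$, I want a unit taking a value $A$ at all $x$ with $\sigma(\phi_{x'}(x))=u_{x'}$ (in particular at $z$), and a different fixed value $B$ at $x'$ itself. The intended tool is \cref{lemma:sequential_addition_main_reduction_order}: it turns the four points $(-2^{\emax},y,y^+,2^{\emax})$ into $(x_1,x_1,x_2,x_2)$ using a sum of constants from $\bbS_\sigma$ under $\mcS$, with the input in a prescribed slot $j$. Because the constructions are order-preserving (left-to-right additions transported by \cref{lemma:transfer_seq_to_reduction}), the same map sends $[-2^{\emax},y]_\fpq$ to $x_1$ and $[y^+,2^{\emax}]_\fpq$ to $x_2$.

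I cannot isolate the single value $u_{x'}$ this way, only a threshold. So I split into two threshold units at $y=u_{x'}^-$ and $y=u_{x'}$, and apply an appropriate sign flip via the weight $\pm1$. One unit fires iff the value is $\ge u_{x'}$, the other iff it is $\le u_{x'}$.

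Each constant $z_b=w_b\otimes\sigma(c_b)$ is realized as one coordinate of the affine map: feed a neuron from the previous layer that constantly outputs $\sigma(c_b)$, and multiply by $w_b$. Such neurons are produced in layer one by weight $0$ and bias $c_b$, giving the constant $\sigma(c_b)$.

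I choose outputs $x_1=C_0$ (not firing) and $x_2$ (firing), so that after $\sigma$ the non-firing unit gives $\sigma(C_0)=0$. The firing unit should give a value whose aggregated sum can be detected. Taking $x_2=C_1$ gives $\sigma(C_1)$, and $|C_1-C_0|\le2^{\emax}$ is exactly the hypothesis needed by the lemma.

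\textbf{Third layer.} With $N$ second-layer units, $z$ is characterized by all units outputting $\sigma(C_1)$. At any $x\neq z$, at least one unit outputs $0$, namely for $x'=x$: one of the two threshold units fails.

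The main obstacle is the aggregation. A plain floating-point sum of $N$ copies of $\sigma(C_1)$ under an arbitrary order $\mcS$ is not reliably distinguishable from a sum with one term zeroed, because of rounding and absorption. To avoid this, I would not sum directly. Instead I scale with $w=\gamma$ from \cref{lem:endbit_control}, so each firing unit contributes a power-of-two-sized value small enough that all partial sums are exact, independent of the order. For example, I scale each to $2^{e}$ with $N2^{e}$ exactly representable. Then the sum equals $N2^e$ iff $x=z$, and is at most $(N-1)2^e$ otherwise.

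Finally I apply \cref{lemma:sequential_addition_main_reduction_order} once more, with $y=(N-1)2^e$, sending $\le y$ to $C_0$ and $\ge y^+$ to $c$. Applying $\sigma$ then yields $\sigma(c)\indc{z}{x}$. Here $c\in\{C_1,C_2\}$, and the bounds $|c-C_0|\le2^{\emax}$ hold by \cref{condition:proper_exponent}.

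\textbf{Case $z\notin\domain$.} The target is identically zero on $\domain$, so I take the constant network with bias $C_0$ throughout.

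The network has three affine layers each followed by $\sigma$, which is the claimed three-layer network ending with an activation. I expect the delicate points to be the exactness of the third-layer sum under arbitrary $\mcS$, and keeping all intermediate values within $[-2^{\emax},2^{\emax}]$ so that the lemma hypotheses apply.
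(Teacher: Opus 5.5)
\textbf{Gap: the third-layer aggregation.} Your first two layers match the paper's proof: one distinguishing map per pair, and two threshold units per pair built from \cref{lemma:sequential_addition_main_reduction_order} with outputs $C_0$ and $C_1$. Your explicit constant neurons also fill in a detail the paper leaves implicit. The problem is in the third layer, where you aggregate by exact counting.

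\begin{itemize}
\item A sum of $N$ copies of $2^{e}$ is exact in every reduction order only if every partial sum $k2^{e}$ with $k\le N$ is a float. This forces $N\le 2^{\mbit+1}$.
\item Making $2^e$ small does not help, because precision is relative. Even in the subnormal range, only the multiples $k\fmin$ with $k\le 2^{\mbit+1}$ are all representable.
\item Representability of $N2^e$ alone is not sufficient, since intermediate partial sums also matter.
\end{itemize}

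But $N=2(|\domain|-1)$, and the lemma is used via \cref{thm:main} for domains such as $\fpq^d$ or $(-2^{\emax-1},2^{\emax-1})_\fpq^d$, which have vastly more than $2^{\mbit+1}$ points. Under the left-to-right order, which is a legitimate $\mcS$, the running sum saturates: $2^{\mbit+1}2^{e}\oplus 2^{e}=2^{\mbit+1}2^{e}$ by ties-to-even. So any $x\neq z$ at which at least $2^{\mbit+1}$ units still fire gives exactly the same pre-activation as $z$, and no threshold can separate them. Nothing in your construction bounds how many units fire at $x\neq z$. Your claim that the sum is ``at most $(N-1)2^e$ otherwise'' therefore fails, and $(N-1)2^e$ need not even be a float. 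A secondary issue: \cref{lem:endbit_control} only gives $\gamma\otimes\sigma(C_i)\in(2^{e-1},2^{e}]$ for some $i\in[2]$, not an exact power of two in general.

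\textbf{Missing idea: renormalize after every new input, inside the single third-layer sum.} This is what the paper does.
\begin{itemize}
\item By \cref{lemma:sequential_addition_main} with $y=\sigma(C_1)$, there are constant blocks $h_i$ with $(0,\sigma(C_1),2\sigma(C_1))\tranLR(0,0,\sigma(C_1))$. Each is a two-input AND.
\item A final block satisfies $(0,\sigma(C_1),2\sigma(C_1))\tranLR(C_0,C_0,c)$.
\item The pre-activation is $h_{N-1}(\cdots h_2(h_1(y_1\oplus y_2)\oplus y_3)\cdots\oplus y_N)$. This is a single left-to-right sum of the $y_i$ interleaved with constants from $\bbS_\sigma$.
\item Its running value always lies in $\{0,\sigma(C_1),2\sigma(C_1)\}$, so nothing is ever absorbed and $N$ is unrestricted.
\item The sum is then transported to $\mcS$ by \cref{lem:seq_to_arbi}; zero leaves are harmless. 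Applying $\sigma$ gives $\sigma(c)\indc{z}{x}$.
\end{itemize}
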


\begin{lemma}\label[lemma]{lem:last_layer_sums}
    Suppose that $\sigma:\efpq\to\efpq$ satisfies \cref{condition:proper_exponent}.
    Then, for any $x\in\fpq\cup\{-\infty,\infty\}$, there exist $n\in\bbN$, $w_1,\dots,w_n\in\fpq$, and $z_1,\dots,z_n\in\{C_1,C_2\}$ such that
    \begin{equation*}x=(w_1\otimes\sigma(z_1))\oplus\cdots\oplus(w_n\otimes\sigma(z_n)).
    \end{equation*}
\end{lemma}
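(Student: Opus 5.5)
The plan is to build every target value by a left-to-right chain of $\oplus$'s that starts from the term $0 = 0\otimes\sigma(C_1)$. Each later summand should be of the form $\gamma = w\otimes\sigma(C_i)$ with $i\in[2]$, as produced by \cref{lem:endbit_control}. The basic fact is the following: for every $e\in[\emin-\mbit,\emax-\mbit]_\bbZ$ there is such a $\gamma$ with $2^{e-1}<\gamma\le 2^e$, and its negative $(-w)\otimes\sigma(C_i)=-\gamma$ is also available. I will first check that the transfer lemmas (\cref{lemma:nonzero_to_Zero}, \cref{lemma:killing_small}, \cref{lemma:zero_each_other}, \cref{lemma:auxilary_large}, \cref{lemma:auxilary_small_smaller}, \cref{lemma:auxilary_small}, \cref{lemma:two_numbers}) only ever add summands coming from \cref{lem:endbit_control}. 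This includes $\fmin$ and $2\fmin$, and the constant $K$ in \cref{lemma:auxilary_small}, which is again a scaled $\sigma(C_2)$. So every $\bfz$ appearing in the relations $\tranLR$ of those lemmas has entries $w\otimes\sigma(z)$ with $z\in\{C_1,C_2\}$.

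\textbf{Step 1: $|x|\le 2^{\emax}$.} Given such an $x$, pick a neighbour $x'\in\{x^+,x^-\}$ with $x'\in[-2^{\emax},2^{\emax}]_\fpq$, ordered so that the difference lies in $(0,2^{\emax}]$. Apply \cref{lemma:two_numbers} to the pair $(x,x')$ (or $(x',x)$). This yields $x^*$ with $(0,x^*)\tranLR(x,x')$ (resp.\ $(x',x)$), where the first coordinate of the source pair is $0$ and it is sent to $x$. Hence $x = 0\oplus z_1\oplus\cdots\oplus z_n$ evaluated left to right, with every $z_k\in\{w\otimes\sigma(C_1),w\otimes\sigma(C_2)\}$. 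Prepending the summand $0\otimes\sigma(C_1)$ gives the required representation. If $x=0$, the single summand $0\otimes\sigma(C_1)$ already suffices.

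\textbf{Step 2: $2^{\emax}<|x|\le\fmax$ and $x=\pm\infty$.} By symmetry, take $x>0$. Start from the chain for $2^{\emax}$ obtained in Step 1, then repeatedly append one summand. By \cref{lem:endbit_control} with $e=\emax-\mbit$, there is $\gamma=w\otimes\sigma(C_i)$ with $2^{\emax-\mbit-1}<\gamma\le 2^{\emax-\mbit}$. For any $y\in[2^{\emax},\fmax)_\fpq$ the ulp is $2^{\emax-\mbit}$, so $y+\gamma$ lies strictly above the midpoint of $[y,y^+]$ and at most at $y^+$. Hence $y\oplus\gamma=y^+$. Iterating climbs through every float in $(2^{\emax},\fmax]$. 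At $\fmax$, we have $\fmax+\gamma>\fmax+2^{\emax-\mbit-1}$, so $\fmax\oplus\gamma=\infty$ by the definition of rounding. The negative side uses $-\gamma$.

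\textbf{Main obstacle.} The main obstacle I expect is the bookkeeping in Step 1: confirming that the summands used inside \cref{lemma:two_numbers} and its auxiliary lemmas really come only from $C_1,C_2$ via \cref{lem:endbit_control}. Alongside this, the neighbour $x'$ must stay inside $[-2^{\emax},2^{\emax}]$ at the boundary values $x=\pm2^{\emax}$, where the neighbour has to be chosen inward. If the auxiliary lemmas turn out to be insufficient near $\pm2^{\emax}$, I would fall back on reaching $2^{\emax}$ from $2^{\emax}{}^-$. That step would use the same "add a $\gamma$ in $(\tfrac12\ulp,\ulp]$" argument as in Step 2.
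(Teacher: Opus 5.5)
\textbf{Gap in Step 1.} Step~1 has a genuine gap. \cref{lemma:two_numbers}, like every $\tranLR$ statement in the paper, only produces summands in $\bbS_\sigma=\{w\otimes\sigma(c): w,c\in\fpq\}$, where $c$ is \emph{arbitrary}. So citing it cannot give you summands built from $C_1,C_2$ only. You defer exactly this point to an ``audit'' of the proofs, but that audit is the whole content of the step, and it does not go through as planned. The argument given for \cref{lemma:two_numbers} in fact only establishes relations of the form $(0,x_1)\tranLR(0,x_2)$, which keep the first coordinate fixed at $0$. It therefore contains no chain sending $0$ to a prescribed $x$ that you could inspect.

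There is also a slip in the ordering. When the pair is $(x',x)$ with $x'<x$, the lemma sends $0$ to the smaller entry $x'$, not to $x$. This is exactly the case $x=2^{\emax}$, which is the case your Step~2 starts from, so there your fallback is mandatory rather than optional.

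\textbf{The fix.} The fix is already in your Step~2, and it is the paper's entire proof. You need no transfer lemma at all, only \cref{lem:endbit_control}, whose conclusion explicitly names $\sigma(C_1)$ and $\sigma(C_2)$.
\begin{itemize}
\item Start from $0=0\otimes\sigma(C_1)$.
\item For a current value $0\le y<\fmax$, the gap $y^+-y$ equals $2^{e}$ with $e=\expo{y}-\mbit\in[\emin-\mbit,\emax-\mbit]_\bbZ$. \cref{lem:endbit_control} then gives a summand $\gamma=w\otimes\sigma(C_i)$ with $2^{e-1}<\gamma\le 2^{e}$.
\item Then $y+\gamma\in(y+2^{e-1},\,y^+]$ is strictly closer to $y^+$ than to $y$, so $y\oplus\gamma=y^+$.
\item At $y=\fmax$, the same choice with $e=\emax-\mbit$ gives $\fmax\oplus\gamma=\infty$.
\end{itemize}
Iterating this reaches every nonnegative float and $\infty$. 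For negative targets, use $(-w)\otimes\sigma(C_i)=-\gamma$ and descend from $0$ symmetrically.

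This treats all of $\fpq\cup\{-\infty,\infty\}$ uniformly. The case split at $2^{\emax}$ and the bookkeeping through the transfer lemmas become unnecessary.
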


We now prove \cref{thm:main} using \cref{lemma:indicator_from_distinguishable,lem:last_layer_sums}. Without loss of generality, we assume $d_2=1$, i.e., the target function $f$ is scalar-valued. To represent $f':\domain\to(\fpq\cup\{-\infty,\infty\})^{d_2}$ with $d_2>1$, we can construct $d_2$ networks that represent coordinatewise functions $x\mapsto f'(x)_i$ and concatenate them.

For any $f:\domain\to\fpq\cup\{-\infty,\infty\}$, we can represent $f$ as follows:
\begin{equation*}
    f(x) = \sum_{y\in \domain} f(y)\indc{y}{x}.
\end{equation*}
Let $c\defeq f(y)$.
Then, by \cref{condition:proper_exponent} and \cref{lem:last_layer_sums}, for any $c \in \fpq\cup\{-\infty,\infty\}$, there exist $n_c\in\bbN$, $z_{c,1},\dots, z_{c, n_c}\in\{C_1,C_2\}$, and $w_{c,1},\dots, w_{c, n_c}\in\fpq$ such that
\begin{equation*}
    c = \Sigma( (w_{c,1} \otimes \sigma(z_{c,1}) ) , \dots , (w_{c,n_c} \otimes \sigma(z_{c,n_c}) ) , \overline{\mcS})
\end{equation*}
By \cref{lemma:indicator_from_distinguishable}, for each $f(y)=c\in\fpq$ and $i\in[n_c]$, there exists a three-layer $\sigma$-network $h_{c,i}:\domain\rightarrow \fpq$ ending with the activation function such that 
    $h_{c,i}(x) = \sigma(z_{c,i})\indc{y}{x}.$ 
For $\domain = \{ y_1 , \dots , y_{|\domain|} \}$, we construct the target four-layer $\sigma$ network $\bar{g}$ as follows:
\begin{align*}
    & \bar{g}(x) = \\
    &(w_{f(y_1),1}\otimes h_{f(y_1),1}) \oplus \dots \oplus (w_{f(y_1),n_{f(y_1)}}\otimes h_{f(y_1),n_{f(y_1)}}) \oplus \\ & \dots \\ &\oplus (w_{f(y_{1}),n_{y_{1}} }\otimes h_{f(y_{1}),n_{y_{1}} })\oplus \dots \\
    &\oplus (w_{f(y_{|\domain|}),n_{y_{|\domain|}} }\otimes h_{f(y_{|\domain|}),n_{y_{|\domain|}} }).
\end{align*}

Note that $\bar{g}$ is left-to-right addition and $\bar{g}(x)= f(x)$. By \cref{lem:seq_to_arbi}, there exists $g$ such that 
\[ g(x) = \bar{g}(x),\]
and $g(x)$ under $\mcS$.

This completes the proof of \cref{thm:main}.

\subsection{Proof of \cref{lemma:indicator_from_distinguishable}}\label{subsec:pflem:indicator_from_distinguishable}

By the definition of the distinguishability, there exist $n\in \bbN$ and affine transformations \smash{$\phi_{1},\dots,\phi_{n}:\efpq^d\rightarrow\efpq$} under $\mcS$
satisfying the following two properties: \\
(i) for any $y\in \domain$, there exists $j_y\in [n]$ such that
$\sigma(\phi_{j_y}(z)) \neq 
\sigma(\phi_{j_y}(y))$, \\(ii) $\sigma\lrp{\phi_{j}(\domain)}\subset \left[-2^{\emax}, 2^{\emax }\right]_{\fpq}$ for all $j\in[n]$.
 
Consider $C_0, C_1\in \fpq$ in \cref{condition:proper_exponent}.
Without loss of generality, we assume $C_0<C_1$, $\sigma(C_1)>0$, and $C_0<c$; the proof for the remaining  cases can be done similarly.

By \cref{lemma:sequential_addition_main}, there exist $\bfz_{j,k}= (z_{j,k,1}, \dots , z_{j,k,n_{j,k}})$ for $k \in [2]$ such that 
\begin{align*}
  x \xRightarrow{(\sigma, \bfz_{j,1} , 0 ,  \overline{\mcS})}  
   & \begin{cases}
C_1 &\text{ if }  \sigma\lrp{\phi_{j}(z)} \le x \le 2^{{\emax}},
\\ C_0 &\text{ if } -2^{{\emax} } \le x <\sigma\lrp{\phi_{j}(z)},
\end{cases}\\
 -x \xRightarrow{(\sigma, \bfz_{j,2} , 0 ,  \overline{\mcS})}  
 & \begin{cases}
C_0 &\text{ if }\sigma\lrp{\phi_{j}(z)} < x \le 2^{{\emax} },
\\ C_1 &\text{ if } -2^{{\emax} } \le x \le \sigma\lrp{\phi_{j}(z)}.
\end{cases}
\end{align*}
By \cref{lemma:transfer_seq_to_reduction}, there exist $l_{j,k}$ and  $\tilde{\bfz}_{j,k}$ such that 
\begin{align*}
  x \xRightarrow{(\sigma, \tilde{\bfz}_{j,1} , l_{j,1},  \mcS)}  
   & \begin{cases}
C_1 &\text{ if }  \sigma\lrp{\phi_{j}(z)} \le x \le 2^{{\emax}},
\\ C_0 &\text{ if } -2^{{\emax} } \le x <\sigma\lrp{\phi_{j}(z)},
\end{cases}\\
 -x \xRightarrow{(\sigma, \tilde{\bfz}_{j,2} , l_{j,2} ,  \mcS)}  
 & \begin{cases}
C_0 &\text{ if }\sigma\lrp{\phi_{j}(z)} < x \le 2^{{\emax} },
\\ C_1 &\text{ if } -2^{{\emax} } \le x \le \sigma\lrp{\phi_{j}(z)}.
\end{cases}
\end{align*}

Define $g_{j,k}:\domain\rightarrow \fpq$ as 
\begin{align*}
g_{j,k}(x) & \defeq \sigma(g_{j,k}^*), \quad     g_{j,k}^* = \Sigma((x, \tilde{z}_{j,k,1}, \dots , \tilde{z}_{j,k,n_{j,k}}), \mcS).
 \end{align*}

Then, one can observe that $g_{j,k}$ is a two-layer network ending with the activation function.
Furthermore, for any $x\in\domain$, $g_{j,k}(x)=\sigma(C_1)$ for all $j\in[n]$ and $k\in\{1,2\}$ if and only if $x=z$; otherwise, there exists some $j,k$ such that $g_{j,k}(x)=\sigma(C_0)=0$.

By \cref{lemma:sequential_addition_main}, we define $h_1$, $h_2,\dots,h_{2n-1}$ and $h$ such that for $i\in[2n-2]$,
\begin{align*}
    h_i &: (0,\sigma(C_1),2\sigma(C_1)) \tranLR (0,0,\sigma(C_1)), \\
    h_{2n-1} &: (0,\sigma(C_1),2\sigma(C_1)) \tranLR (C_0,C_0,c), \\
    h(y_1,\dots,y_{2n})&\defeq
h_{2n-1}\lrp {\cdots h_2\lrp{h_1(y_1\oplus  y_2)\oplus y_3} \dots \oplus  y_{2n}}.
\end{align*}
Then we have 
\begin{align*}
    h(g_{1,1}(x), g_{1,2}(x), \dots, g_{n,1}(x), g_{n,2}(x)) =  c \indc{z}{x}. 
\end{align*}
Since $h(y_1 , \dots , y_{2n})$ is left-to-right addition, by \cref{lem:seq_to_arbi}, there 
exist $\tilde{h}$ such that $\tilde{h}$ is addition under $\mcS$ and $ \tilde{h}(y_1, \dots , y_{2n}) = h(y_1 , \dots , y_{2n} )$.

We now construct the target three-layer network $f$ ending with the activation function $\sigma$ under $\mcS$ as
\begin{align*}
&f(x)\defeq\sigma(\tilde{h}(g_{1,1}(x),g_{1,2}(x),\dots,g_{n,1}(x),g_{n,2}(x))).
\end{align*}
Then we have 
\begin{align*}
    f(x) =  \sigma(c) \indc{z}{x}. 
\end{align*}
 
This completes the proof of \cref{lemma:indicator_from_distinguishable}  which leads to the proof of \cref{thm:main}.

\subsection{Proof of \cref{lem:last_layer_sums}}\label{sec:pflem:last_layer_sums}
    By \cref{lem:endbit_control}, for any $e\in [\emin-\mbit,\emax-\mbit]$, there exists $\gamma$ and $i\in [2]$ such that
    \begin{equation*}
        2^{e-1}<\gamma\otimes \sigma(C_i)\le 2^{e}.
    \end{equation*}
    Therefore, for any $x\in\fpq$, there exists $\gamma$ and $i\in [2]$ such that $x\oplus \gamma\otimes \sigma(C_i) = x^+$.
    This completes the proof.

\section{Proofs of theorems and lemmas on necessary conditions of activation functions}

\subsection{Proof of \cref{lem:necessary}}\label{sec:pflem:necessary}
    Suppose that $\domain$ is not $\sigma$-distinguishable with range $\fpq$ under $\mcS$. 
    Then, there exist $x_1, x_2\in \domain$ such that for any $d_2\in \bbN$ and affine transformations  $\phi_1,\phi_2:\efpq^{d}\rightarrow \efpq^{d_2}$ under $\mcS$, 
    \begin{equation*}
        \sigma(\phi_1(x_1) ) = \sigma(\phi_1(x_2)) .
    \end{equation*}
    This implies that for any $\sigma$ network $g:\domain\rightarrow \efpq$, $g(x_1) = g(x_2).$
    In other words, $\sigma$ networks cannot represent a function $f:\domain\rightarrow\fpq$ with $f(x_1)\neq f(x_2)$. This completes the proof.
\subsection{Proof of \cref{lem:necc_example}}
\label{sec:pflem:necc_example}    

Note that $|\hatsig(x) - \hatsig(0)| \le 2^{-2\mbit-1}$ for $|x| \le 2^{-\mbit-1}$, which implies $\round{\hatsig(x)} = \round{\hatsig}(0)$ for $|x| \le 2^{-\mbit-1}$. Suppose that there exist $w, b \in \fpq$ such that 
$$\round{\hatsig}( w \otimes 0 \oplus b) \ne  \round{\hatsig}( w \otimes \fmin \oplus b).$$
Then, $|b| > 2^{-\mbit-1}$ and $| w \otimes \omega | \ge 2^{-2\mbit-1}$, which implies
\begin{align*}
    |w| \ge  2^{-\emin -\mbit-1}.
\end{align*}
Hence, for $|x| \ge  2^{\mbit+3} $, we have 
\begin{align*}
    |w \otimes x  \oplus b | &\ge 2^{\emax + 1}~~\text{or}~~
    |w \otimes ( - x ) \oplus b | &\ge 2^{\emax + 1},
\end{align*}
leading to $ |w \otimes x  \oplus b |  = \infty $ or $ |w \otimes (-x)  \oplus b |  = \infty$.

\subsection{Proof of \cref{cor:necessary}}\label{sec:pfcor:necessary}

We use \cref{lem:necc_example}. 
Note that $ (1+2^{-\mbit}) \cdot 2^{-\mbit} \le \tfrac{5}{16}$ for $\mbit \ge 2$.
Let $\rho_1(x) = \cos(x), \rho_2(x) = 1+x^2, \rho_3(x) = \cosh(x), \rho_4(x) = 2^{-x^2/2}$.
Then we have
\begin{align*}
    \rho_1(0) = 1 > \tfrac{5}{16}, \;  \rho_2(0)= 1 >\tfrac{5}{16}, \\
    \rho_3(0) = 1  > \tfrac{5}{16}, \; \rho_4(0)= 1 > \tfrac{5}{16},
\end{align*}
and for $|x| \le 2^{-\mbit-1}$
\begin{align*}
    |\rho_1'(x)| &= |-\sin(x)| \le |x| \le 2^{-\mbit}, \\  |\rho_2'(x)| &= 2x \le 2^{-\mbit} , \\
    |\rho_3'(x)| &= |\sinh(x)| \le  |2x| \le 2^{-\mbit}, \\
    |\rho_4'(x)| &= |-x e^{-x^2/2}| \le |x| \le 2^{-\mbit} ,
\end{align*}
since $|\sinh(x)| \le |2x|$ for $x \in [0,2.1]$ and $|e^{-x^2/2}| \le 1 $.

By \cref{lem:necc_example}, $f(0)\ne f(\fmin)$ cannot be represented by a $\round{\rho}$ network under $\mcS$.

\section{Proofs of theorems and lemmas on sufficient conditions of activation functions}

\subsection{Proof of \cref{lemma:distinguishing_points_to_distinguishable}}\label{sec:pflem:distinguishing_points_to_distinguishable}
     Consider arbitrary $x_1, x_2\in \left(-2^{e_2+1}, 2^{e_2+1}\right)_{\fpq}$ such that $x_1\neq x_2$ and $|x_1|\le |x_2|$.
     By the condition of the lemma, there exists $i\in [n]$ such that
     \begin{equation*}
         \expo{x_2}\in \left[\expo{\eta_i}-e_1, \expo{\eta_i} +\emax-2\right]_{\bbZ}.
     \end{equation*}
     By \cref{lemma:distinguishing_point},
     there exist $w$ and $b$ such that
    \begin{equation*}
        \sigma(w\otimes x_1 \oplus b)\neq \sigma(w\otimes x_2 \oplus b),
    \end{equation*}
    \begin{equation*}
        |w|\le \lrp{1+2^{-\mbit}}\times 2^{e_1}, \text{ and } |b|\le |\eta_i|^+.
    \end{equation*}
    Then,
    \begin{align*}
        &\sigma\lrp{\left(-2^{e_2+1}, 2^{e_2+1}\right)_{\fpq}\otimes w\oplus b}
        \\
        & \; \subset \sigma\lrp{\left[-2^{e_2+e_1 + 1}, 2^{e_2+e_1 + 1}\right]_{\fpq}\oplus b } \\
        & \; \subset \sigma\lrp{ \left[-\lrp{2^{e_2+e_1 + 1}\oplus |\eta_n|^+}, 2^{e_2+e_1 + 1}\oplus |\eta_n|^+\right]_{\fpq}}.
    \end{align*}
    This completes the proof.

\subsection{Proof of \cref{lemma:sigmoidal_distinguish}}\label{sec:pflem:sigmoidal_distinguish}

Suppose $\sigma$ has two separating points $\eta_1, \eta_2$ with  $\expo{\eta_1}\le 1$ and $\expo{\eta_2}\ge 2$. 
We apply \cref{lemma:distinguishing_points_to_distinguishable} with $e_1=e_2 = \emax$. 
Since
\begin{equation*}
[\emin, \emax]_{\bbZ}\subset \bigcup_{i=1}^2 \left[\expo{\eta_i}-\emax, \expo{\eta_i}+\emax-2\right]_{\bbZ},
\end{equation*}
$\fpq = \left(-2^{\emax+1}, 2^{\emax+1}\right)_{\fpq}$ is $\sigma$-distinguishable with range
\begin{align*}
& \sigma\lrp{\left[ -\lrp{ 2^{e_1+e_2+1} \oplus |\eta_2|^+},  2^{e_1+e_2+1} \oplus |\eta_2|^+ \right]_{\fpq}} \\
 &\subset \sigma\lrp{\fpq\cup \{\infty,-\infty\}}
 \subset \left[-2^{\emax}, 2^{\emax}\right]_{\fpq}
\end{align*} 
under $\mcS$.

Suppose $\sigma$ has one separating point $\eta_1$ with  $\expo{\eta_1}=1$. Then we have 
\begin{equation*}
[\emin, \emax-1]_{\bbZ}\subset \left[\expo{\eta_1}-\emax, \expo{\eta_1}+\emax-2\right]_{\bbZ}.
\end{equation*}
We apply \cref{lemma:distinguishing_points_to_distinguishable} with $e_1 = \emax , e_2 = \emax - 1$. Then $(-2^{\emax}, 2^{\emax})_\fpq$ is $\sigma$-distinguishable with range 
\begin{align*}
& \sigma\lrp{\left[ -\lrp{ 2^{e_1+e_2+1} \oplus |\eta_2|^+},  2^{e_1+e_2+1} \oplus |\eta_2|^+ \right]_{\fpq}} \\
 &\subset \sigma\lrp{\fpq\cup \{\infty,-\infty\}}
 \subset \left[-2^{\emax}, 2^{\emax}\right]_{\fpq}
\end{align*} 
under $\mcS$.

\subsection{Proof of \cref{cor:sigmoidal_distinguish}}\label{sec:pflem:sigmoidal_distinguish}

Note that we assume $\mbit \ge 2$.

First consider $\sigma = \round{\rho}$, $\rho =\Sigmoid= \frac{1}{1+e^{-x}}$. 
Note that $\sigma$ is increasing and 
\[ \rho(0)= \tfrac 12, \;  \rho(1) \ge \tfrac 5 8, \; \rho(-4) \ge 2^{-6}, \; \rho(-5) \le 2^{-7}.   \]
Hence $\sigma$ has two separating points $\eta_1, \eta_2 \in \fpq$ such that $0 \le \eta_1 < 1$ and $ -5 \le \eta_2  < -4$. Since $\sigma(\fpq) \subset [0,1]_\fpq$, by \cref{lemma:sigmoidal_distinguish}, $\fpq$ is $\sigma$-distinguishable with range $[-2^{\emax}, 2^{\emax}]_\fpq$ under $\mcS$. By \cref{thm:main}, there exists a four-layer $\sigma$ network under $\mcS$ such that $f$ can represent given function $g : \domain \to \fpq \cup \{ -\infty, \infty\}$. 

Next consider $\sigma = \round{\rho}$ for $\rho = \tanh$. 
Note that $\sigma$ is increasing and 
\[ \rho(0)= 0, \;  \rho(1) \ge \tfrac 3 4 , \; \rho(4) < 1-2^{-11}, \; \rho(6) > 1 - 2^{-14}.   \]
If $\mbit \ge 9$, $\sigma = \round{\rho}$ has two separating points $\eta_1,\eta_2 \in \fpq$ such that $0 \le \eta_1 < 1$ and $4 \le \eta_2 < 6$.  \\
For $2 \le \mbit \le 8$, $\sigma$ has one separating point $\eta_1 \in \fpq$ such that $0 \le \eta_1 < 1$. Since $\sigma(\fpq) \subset [-1,1]_\fpq$ by \cref{lemma:sigmoidal_distinguish}, $\mcK$ is $\sigma$-distinguishable with range $[-2^{\emax}, 2^{\emax}]_\fpq$ under $\mcS$ where 
\begin{align*}
    \mcK = \begin{cases}
            \fpq \; &\text{if} \quad \mbit \ge 9 \\ 
            (-2^{\emax}, 2^{\emax})_{\fpq} \; &\text{if} \quad 2 \le \mbit \le 8    \end{cases}
\end{align*}

By \cref{thm:main}, there exists a four-layer $\sigma$ network under $\mcS$ such that $f$ can represent given function $g : \domain \to \fpq \cup \{ -\infty, \infty\}$ where
\begin{align*}
    \domain = \begin{cases}
            \fpq^d \; &\text{if} \quad \mbit \ge 9 \\ 
            (-2^{\emax}, 2^{\emax})_{\fpq}^d \; &\text{if} \quad 2 \le \mbit \le 8    \end{cases}
\end{align*}

\subsection{Proof of \cref{lemma:single_distinguishing_point}}\label{sec:pflem:single_distinguishing_point}
Without loss of generality, consider the case: $\resig'(x) \ge L$ for all $x\in [a,b]$ and $\rho(a)\ge 0$. Note that $\expo{\sigma(a)} , \expo{\sigma(b)} \le e $.
Since 
\begin{equation*}
   \resig\lrp{b} - \resig\lrp{a}
    \ge L (b-a) \ge 2^{e-\mbit},
\end{equation*}
$\sigma(b)\neq \sigma(a)$. 
Therefore, there exists $\eta\in [a,b)_{\fpq}$ such that
$\sigma(\eta^-) < \sigma(\eta)\le \sigma(\eta^+)$.
This completes the proof.

\subsection{Proof of \cref{lemma:real_function_distinguishing}}\label{sec:pflem:real_function_distinguishing}
Define $\sigma $ as $\sigma\defeq \round{\hat\rho}$.
We will prove that for any $e_* \in [\emin, e-1]_{\bbZ}$, there exists a separating point $\eta$ such that $\expo{\eta}=e_*$.
For any floating-point number $x\in \fpq$, if $\sigma(x)$ is normal, as 
\begin{equation*}
    |\sigma(x)| = \left|\round{\hat\resig(x)}\right| \le L_2 x \times \lrp{1+2^{-\mbit}},
\end{equation*}
$\expo{\sigma(x)}\le \expo{x}+ \left\lceil \log_2\lrp{L_2\times \lrp{1+2^{-\mbit}}}\right\rceil_{\bbZ}$.
    Note that for any $x\ge 2^{\emin}$, 
\begin{equation*}
     \sigma(x)\ge \sigma\lrp{2^{\emin}}
    \ge 2^{l_1} 2^{\emin} \ge \fmin, 
\end{equation*}
Consider \cref{lemma:single_distinguishing_point} with $a = 2^{e_*}$, $b = \lrp{2^{e_{*}+1}}^-$, $L=L_1$, and $e = \expo{\sigma(x)}$.
As \begin{align*}
   &L_1\lrp{\lrp{2^{e_*+1}}^- - 2^{e_*}} =  L_1 \lrp{1-2^{-\mbit}} 2^{e_*} \\
 &\ge  L_1 \lrp{1-2^{-\mbit}}2^{\expo{\sigma(x)} - \left\lceil \log_2\lrp{L_2\times \lrp{1+2^{-\mbit}}}\right\rceil_{\bbZ}} \\
 &\ge 2^{\expo{\sigma(x)}-\mbit}, 
\end{align*}
  there exists a separating point $\eta\in [2^{e_*}, \lrp{2^{e_*+1}}^-)_\fpq$ such that $\expo{\eta}=e_*$.
    Then, by \cref{lemma:distinguishing_points_to_distinguishable} with separating points with exponents $\emin, \emin+1, \dots, e-1 $,
    $e_1 = 0$, and $e_2 = \min\lrp{\emax-l_2-1, \emax + e -3}$, the interval $\lrp{-2^{e_2+1},2^{e_2+1}}_{\fpq}$ is $\sigma$-distinguishable with range 
    \begin{align*}
    &\sigma\lrp{\left[ -\lrp{ 2^{e_1+e_2+1} \oplus |\eta_n|^+},  2^{e_1+e_2+1} \oplus |\eta_n|^+ \right]} \\ 
    &\subset 
    L_2\lrp{1+2^{-\mbit}}\left[ -\lrp{ 2^{e_1+e_2+1} \oplus |\eta_n|^+},  2^{e_1+e_2+1} \oplus |\eta_n|^+ \right]
    \\ &\subset 2^{l_2} \left[-\lrp{2^{\emax-l_2}}, 2^{\emax-l_2}\right]
    \subset \left[-2^{\emax}, 2^{\emax}\right],
\end{align*}
under $\mcS$.
This completes the proof.
\subsection{Proof of \cref{cor:real_function_distinguishing}}
\label{sec:pflem::real_function_distinguishing}

Let $\hat{\rho}(x)$ $= \rho(x)$ in \cref{lemma:real_function_distinguishing}. 
By \cref{lemma:real_function_distinguishing} and \cref{table:lip}, $(-2^{\emax-2}, 2^{\emax-2})_{\fpq}$ is $\round{\rho}$-distinguishable with range $\left[-2^{\emax}, \allowbreak 2^{\emax}\right]_{\fpq}$ under $\mcS$. By \cref{thm:main}, we complete the proof.

\begin{table}[h]
\centering
    \begin{tabular}{cccccc@{\;\;\;}ccccc}
\toprule
\begin{tabular}{@{}c@{}}
  $\rho(x)$ 
\end{tabular}
& $e$ & $L_1$ & $L_2$ & $l_1$ & $l_2$ & $e'$   \\ 
\midrule
$\mathrm{Identity}$   & $\emax$ & 1    & 1       & -1             & 1 & $\emax-1$     \\
$\relu$   & $\emax$ & 1    & 1       & -1             & 1 & $\emax-1$     \\
$\elu$    & $\emax$ & 1    & 1       & -1             & 1 & $\emax-1$     \\
$\GeLU$   & $\emax$ & 0.5 & 1       & -2 & 1 & $\emax-1$ \\
$\text{SeLU}$   & $\emax$ & 1.05 & 1.05    & -1 & 1 & $\emax-1$ \\
$\Swish$   & $\emax$ & 0.5 & 1   & -2 & 1 & $\emax-1$  \\
$\Mish$   & $\emax$ & 0.6 & 1       & -2 & 1 & $\emax-1$  \\
$\sin$  & 0 & 0.540 & 1  & -2 & 1 & $\emax-2$ \\
\bottomrule
\end{tabular}
\vspace{0.2in}
\caption{%
  Properties of floating-point format for verifying the conditions.
  Numbers in the table are rounded to the second decimal place.
}
\label{table:lip}
\end{table}

\section{Proofs of theorems and lemmas on sufficient conditions of non-correctly rounded activation functions}

\subsection{Proof of \cref{cor:sigmoidal_distinguish_nc}}\label{sec:pflem:sigmoidal_distinguish_nc}

Note that we assume $\mbit \ge 7$.

First consider $\rho =\Sigmoid= \frac{1}{1+e^{-x}}$. 
Note that $\sigma$ is increasing and 
\[ \rho(0)= \tfrac 12, \;  \rho(\tfrac 3 2) \ge \tfrac 3 4, \; \rho(-4) \ge 2^{-6}, \; \rho(-5) \le 2^{-7}.   \]
Since $\sigma$ approximates $\rho$ with $ K\le 10$ ulp errors, we have 
\begin{align*}
    \sigma(0) &\le \tfrac 1 2 ( 1 + K \cdot 2^{-\mbit} ) \le \tfrac{9}{16}, \\
    \sigma(\tfrac 3 2) &\ge \tfrac 3 4 - 2^{-1}  (  K \cdot 2^{-\mbit} )  \ge \tfrac{11}{16}, \\
    \sigma(-4) & \ge 2^{-6} ( 1 - K \cdot 2^{-\mbit} ) \ge \tfrac{7}{4}  \cdot 2^{-7} \\
        \sigma(-5) & \le 2^{-7} ( 1 + K \cdot 2^{-\mbit} ) \le \tfrac{9}{8}  \cdot 2^{-7}.
\end{align*}
Hence $\sigma$ has a separating point $0 \le \eta_1 < \tfrac 3 2, -5 \le \eta_2 < -4$.
 Since $\sigma(\fpq) \subset [0,1]_\fpq$, by \cref{lemma:sigmoidal_distinguish}, $\fpq$ is $\sigma$-distinguishable with range $[-2^{\emax}, 2^{\emax}]_\fpq$ under $\mcS$. By \cref{thm:main}, there exists a four-layer $\sigma$ network under $\mcS$ such that $f$ can represent the given function $g : \domain \to \fpq \cup \{ -\infty, \infty\}$. 

Next, consider $\rho = \tanh$. 
Note that $\sigma$ is increasing and 
\[ \rho(0)= 0, \;  \rho(1) \ge \tfrac 3 4 , \; \rho(4) < 1-2^{-11}, \; \rho(6) > 1 - 2^{-14}.   \]
Since $\sigma$ approximates $\rho$ with $ K\le 10$ ulp errors, we have 
\begin{align*}
    \sigma(0) &\le \tfrac 1 2 ( 1 + K \cdot 2^{-\mbit} ) \le \tfrac{9}{16}, \\
    \sigma(1) &\ge \tfrac 3 4 - 2^{-1}  (  K \cdot 2^{-\mbit} )  \ge \tfrac{11}{16}. 
\end{align*}
Hence $\sigma$ has one separating point $\eta_1 \in \fpq$ such that $0 \le \eta_1 < 1$. 
If $\mbit \ge 15$, 
\begin{align*}
    \sigma(4) &\le ( 1 - 2^{-11}) + 2^{-1} (K \cdot 2^{-\mbit}) \le 1 - 11 \cdot 2^{-15}, \\
    \sigma(5) &\ge ( 1 - 2^{-14}) -  2^{-1} (K \cdot 2^{-\mbit}) \ge 1 - 7 \cdot 2^{-15}, \\
\end{align*}
$\sigma$ has two separating points $\eta_1,\eta_2 \in \fpq$ such that $0 \le \eta_1 < 1$ and $4 \le \eta_2 < 6$.
Since $\sigma(\fpq) \subset [-1,1]_\fpq$ by \cref{lemma:sigmoidal_distinguish}, $\mcK$ is $\sigma$-distinguishable with range $[-2^{\emax}, 2^{\emax}]_\fpq$ under $\mcS$ where 
\begin{align*}
    \mcK = \begin{cases}
            \fpq \; &\text{if} \quad \mbit \ge 15 \\ 
            (-2^{\emax}, 2^{\emax})_{\fpq} \; &\text{if} \quad 7 \le \mbit \le 14    \end{cases}
\end{align*}

By \cref{thm:main}, there exists a four-layer $\sigma$ network under $\mcS$ such that $f$ can represent given function $g : \domain \to \fpq \cup \{ -\infty, \infty\}$ where
\begin{align*}
    \domain = \begin{cases}
            \fpq^d \; &\text{if} \quad \mbit \ge 15\\ 
            (-2^{\emax}, 2^{\emax})_{\fpq}^d \; &\text{if} \quad 7 \le \mbit \le 14    \end{cases}
\end{align*}

\subsection{Proof of \cref{lemma:single_distinguishing_point_impl}}\label{sec:pflem:single_distinguishing_point_impl}

Without loss of generality, consider the case: $\resig'(x) \ge L$ for all $x\in [a,b]_\fpq$ and $\rho(a)\ge 0$.

Since
\begin{align*}
\resig\lrp{b} - \resig\lrp{a}
\ge L (b-a) \ge 2^{e-\mbit + 5 }, 
\end{align*}
and
\begin{align*}
|\sigma(x) - \rho(x)| &\le K \cdot \mathrm{ulp}(\rho(x))\le K \cdot 2^{e-\mbit},
\end{align*}
for $x \in [a,b]_\fpq$,

we have 
\begin{align*}
&|\sigma(b) - \sigma(a) | \\
&\ge 
|\rho(b) - \rho(a)| - |\sigma(b) - \rho(b) | - |\sigma(b) - \rho(b) |  \\
&\ge 2^{e+5-\mbit} - 20 \cdot 2^{e-\mbit} \ge 2^{e-\mbit},
\end{align*}

    Therefore, there exists $\eta\in [a,b]_{\fpq}$ such that
    $$\sigma(\eta^-) < \sigma(\eta)\le \sigma(\eta^+).$$
    This completes the proof.

\subsection{Proof of \cref{lemma:real_function_distinguishing_ncrounded}}\label{sec:pflem:real_function_distinguishing_ncrounded}
First note that $\mbit  \ge 7$.
We will prove that for any $ e_* \in [\emin, 2]_{\bbZ}$, there exists a separating point $\eta$ such that $\expo{\eta}=e_*$.

We first suppose $\tfrac{3}{10} \le L_1 \le L_2 \le 1$. 

Let $a = 2^{e_*}$, $b =  (2-2^{1-\mbit}) \times 2^{e_*}$, $L=L_1$, and $e = e_*$ in \cref{lemma:single_distinguishing_point_impl}.
We have 
\begin{align*}
    |\rho(a)|, |\rho(b)| \le L_2  b \le (2-2^{1-\mbit} ) \times 2^{e_*} \le (1-2^{-\mbit}) \cdot 2^{e_*+1}. 
\end{align*}

Since 
\[ (1-2^{1-\mbit})L_1 \ge (1-2^{6}) \cdot \tfrac{3}{10} \approx 0.2953, \]
we have 
\begin{align*}
    L_1 (b-a) \ge (1-2^{1-\mbit}) L_1  2^{e_*} \ge 2^{-2} \cdot 2^{e_*}\ge 2^{e_*-\mbit+5}.
\end{align*}
By \cref{lemma:single_distinguishing_point_impl}. we have  a separating point $\eta\in [2^{e_*}, 2^{e_*+1})_\fpq$ such that $\expo{\eta}=e_*$.

Next, assume $ \tfrac{3}{5} \le L_1 \le L_2 \le 2$. Let $a = 2^{e_*}$, $b =  (2-2^{1-\mbit}) \times 2^{e_*}$, $L=L_1$, and $e = e_*+1$ in \cref{lemma:single_distinguishing_point_impl}.
We have 
\begin{align*}
    |\rho(a)|, |\rho(b)| \le L_2  b \le (2-2^{1-\mbit} ) \times 2^{e_*+1} \le (1-2^{-\mbit}) \cdot 2^{e_*+2}. 
\end{align*}
Since 
\[ (1-2^{1-\mbit})L_1 \ge (1-2^{6}) \cdot \tfrac{3}{5} \approx 0.5906, \]
we have 
\begin{align*}
    L_1 (b-a) \ge (1-2^{1-\mbit}) L_1  2^{e_*} \ge 2^{-1} \cdot 2^{e_*}\ge 2^{e_*-\mbit+6}.
\end{align*}
By \cref{lemma:single_distinguishing_point_impl}. we have a separating point $\eta\in [2^{e_*}, 2^{e_*+1})_\fpq$ such that $\expo{\eta}=e_*$.

In both cases, we have a separating point $\eta\in [2^{e_*}, 2^{e_*+1})_\fpq$ such that $\expo{\eta}=e_*$.

Therefore, we have separating points $\eta_1 , \eta_2 , \dots , \eta_n$ with exponents \[\emin, \emin+1, \dots, 2,\] 
Let $e_1=0$. We have 
\[   \bigcup_{i=1}^n [\expo{\eta_i} - e_1 , \expo{\eta_i} + \emax - 2]_\bbZ = [\emin, \emax].\]
Then, by \cref{lemma:distinguishing_points_to_distinguishable} the interval $\lrp{-2^{e_2+1},2^{e_2+1}}_{\fpq}$ is $\sigma$-distinguishable with range 
\begin{align*}
    &\sigma\lrp{\left[ -\lrp{ 2^{e_1+e_2+1} \oplus |\eta_n|^+},  2^{e_1+e_2+1} \oplus |\eta_n|^+ \right]} \\ 
    \subset 
    & \; \sigma  \left( \left[ -\lrp{ 2^{e_2+1} \oplus 2^3},  2^{e_2+1} \oplus 2^3 \right] \right),
\end{align*}
under $\mcS$.

Let $e_2=\emax-2-k$.
Since $L_2 \le 2^{\emax-\mbit-5}$, we have 
\begin{align*}
     L_2 &\le 2^{\emax-\mbit-5} \\
    k &\le \emax -\mbit -5,  \\
    2^{e_2+1} &= 2^{\emax -1 - k} \ge 2^{ \mbit + 4}
\end{align*}
leading to 
\[ 2^{e_2 +1} \oplus 2^3 =  2^{e_2+1}. \]

By \cref{cond:impl}, for $ |x| \ge 1$, we have 
$$ |\sigma(x)| \le 2 |\rho(x) |.$$ 
Therefore, 
\begin{align*}
     \sigma  &\left( \left[ -\lrp{  2^{e_2+1} \oplus 2^e},  2^{e_2+1} \oplus 2^e  \right] \right) \\
     =   \sigma  &\left( \left[ -\lrp{  2^{e_2+1}},  2^{e_2+1}   \right] \right) \\
     \subset 2 \cdot \rho  &\left( \left[ -\lrp{  2^{e_2+1}},  2^{e_2+1}   \right] \right) \\
     \subset  & \left[ - \lrp{ L_2 \cdot 2^{e_2+2}},   L_2  \cdot 2^{e_2+2}   \right]  \\
      \subset  & \left[ - \lrp{  2^{e_2+2+k}},  \cdot 2^{e_2+2+k}   \right] 
\end{align*}
Hence $(- 2^{\emax-1-k}, 2^{\emax-1-k} )_\fpq $ is $\sigma$-distinguishable with range $\left[-2^{\emax}, 2^{\emax}\right]_\fpq$.

This completes the proof.

\subsection{Proof of \cref{cor:real_function_distinguishing_ncrounded}}\label{sec:pfcor:real_function_distinguishing_ncrounded}
Let $\hat{\rho}(x)$ $= \rho(x)$ in \cref{lemma:real_function_distinguishing_ncrounded}. By \cref{table:lip}, we can check the assumptions of \cref{lemma:real_function_distinguishing_ncrounded}.
\\ We have 
$L_2\le2$ and $k\le 1$  for $\mathrm{Identity},\relu,\elu,\GeLU,\Swish,\Mish,\sin, \mathrm{SeLU}$.  
Hence $(-2^{\emax-2}, 2^{\emax-2})_{\fpq}$ is $\sigma$-distinguishable with range $\left[2^{\emax}, \allowbreak 2^{\emax}\right]_{\fpq}$ under $\mcS$.  By \cref{thm:main}, we complete the proof.